\documentclass[11pt]{article}

\usepackage{microtype}
\usepackage{graphicx}
\usepackage{subcaption}
\usepackage{booktabs}
\usepackage{hyperref}

\usepackage[ruled]{algorithm2e}

\usepackage{hyperref}

\usepackage[utf8]{inputenc}
\usepackage{fullpage}
\usepackage{amsthm}
\usepackage{graphicx} 
\usepackage{array} 

\usepackage{amsmath, amssymb, amsfonts, verbatim}
\usepackage{hyphenat, epsfig, multirow}
\usepackage{bbm}
\usepackage{microtype}
\usepackage{mdframed}
\usepackage{caption}
\usepackage{subcaption}
\usepackage{placeins}

\usepackage{enumitem} 
\setlist[enumerate]{wide = 0pt, leftmargin=*}

\usepackage{comment}

\usepackage[dvipsnames]{xcolor}

\usepackage{tcolorbox}
\tcbuselibrary{skins,breakable}
\tcbset{enhanced jigsaw}

\usepackage{appendix}
\usepackage[normalem]{ulem}
\usepackage[compact]{titlesec}

\definecolor{DarkRed}{rgb}{0.5,0.1,0.1}
\definecolor{DarkBlue}{rgb}{0.1,0.1,0.5}

\usepackage{nameref}
\definecolor{ForestGreen}{rgb}{0.1333,0.5451,0.1333}
\definecolor{Red}{rgb}{0.9,0,0}
\usepackage{cleveref}

\usepackage{bm}
\usepackage{url}
\usepackage{xspace}
\usepackage[mathscr]{euscript}
\usepackage{tikz}
\usetikzlibrary{arrows}
\usetikzlibrary{arrows.meta}
\usetikzlibrary{shapes}
\usetikzlibrary{backgrounds}
\usetikzlibrary{positioning}
\usetikzlibrary{decorations.markings}
\usetikzlibrary{patterns}
\usetikzlibrary{calc}
\usetikzlibrary{fit}

\newtheorem{theorem}{Theorem}
\newtheorem{lemma}{Lemma}[section]
\newtheorem{proposition}{Proposition}[section]

\newtheorem{mdresult}{Result}
\newenvironment{result}{\begin{mdframed}[backgroundcolor=lightgray!40,topline=false,rightline=false,leftline=false,bottomline=false,innertopmargin=5pt,innerbottommargin=5pt]\begin{mdresult}}{\end{mdresult}\end{mdframed}}

\newtheorem*{claim*}{Claim}
\newtheorem*{proposition*}{Proposition}
\newtheorem*{lemma*}{Lemma}

\theoremstyle{definition}
\newtheorem{definition}{Definition}

\newtheorem{problem}{Problem}
\newtheorem*{problem*}{Problem}
\newtheorem{remark}{Remark}

\newcommand{\eps}{\ensuremath{\varepsilon}}

\newcommand{\paren}[1]{\ensuremath{\left(#1\right)}\xspace}

\renewcommand{\O}[1]{O \left ( #1 \right )}
\newcommand{\OO}[2]{O_{#1} \left ( #2 \right )}

\newcommand{\norm}[1]{\ensuremath{\|#1\|}}

\newcommand{\ALG}{\ensuremath{\mbox{\sc alg}}\xspace}
\newcommand{\Bucket}{\ensuremath{\mbox{\sc bucket}}\xspace}

\DeclareMathOperator*{\Prob}{\ensuremath{\textnormal{Pr}}}
\renewcommand{\Pr}{\Prob}

\newcommand{\Probability}[1]{\mathbf{Pr}\left[#1\right]}
\newcommand{\Expectation}[1]{\mathbb{E}\left[#1\right]}
\newcommand{\EExpectation}[2]{\mathbb{E}_{#1} \left [#2\right ]}
\newcommand{\Parentheses}[1]{\left( #1 \right )}

\newenvironment{tbox}{\begin{tcolorbox}[
		enlarge top by=5pt,
		enlarge bottom by=5pt,
		 breakable,
		 boxsep=2pt,
                  left=5pt,
                  right=7pt,
                  top=10pt,
                  arc=0pt,
                  boxrule=1pt,toprule=1pt,
                  colback=white
                  ]
	}
{\end{tcolorbox}}

\newcommand{\II}{\ensuremath{\mathbb{I}}}

\newcommand{\mireal}[1][]{
  \ifx\relax#1\relax%
    \II(\mione \,; \mitwo)%
  \else%
    \II(\mione \,; \mitwo\mid #1)%
  \fi
}

\newcommand{\arm}{\ensuremath{\texttt{arm}}\xspace}
\newcommand{\flag}{\ensuremath{\texttt{flag}}\xspace}
\newcommand{\armstar}{\ensuremath{\arm^{\ast}}\xspace}

\newcommand{\mustar}{\ensuremath{\mu^{\ast}}\xspace}

\renewcommand{\leq}{\leqslant}
\renewcommand{\le}{\leq}
\renewcommand{\geq}{\geqslant}
\renewcommand{\ge}{\geq} 

\newcommand{\myqed}[1]{\let\qed\relax \hspace*{\fill} #1 \ensuremath{\square}}

\newcommand{\calD}{\ensuremath{\mathcal{D}}\xspace}

\def\calE{\mathcal{E}}
\def\calF{\mathcal{F}}
\def\calG{\mathcal{G}}
\newcommand{\calH}{\ensuremath{\mathcal{H}}\xspace}

\def \DIST {\textsc{DIST}}
\def \CONST {\textsc{CONST}}
\def \SIGNAL {\textsc{SIGNAL}}
\def \EPOCH {\textsc{EPOCH}}

\providecommand{\email}[1]{\href{mailto:#1}{\nolinkurl{#1}\xspace}}

\usepackage[textsize=tiny]{todonotes}

\begin{document}

\title{Online Learning with Recency: Algorithms for Sliding-window Streaming Multi-armed Bandits}
\date{\today}
\author{
Vladimir Braverman \\ Johns Hopkins University \\ 
\email{vova@cs.jhu.edu}
\and
Chen Wang \\ Rensselaer Polytechnic Institute \\ 
\email{wangc33@rpi.edu}
\and
Liudeng Wang \\ Texas A\&M University \\ 
\email{eureka@tamu.edu}
\and
Samson Zhou \\ Texas A\&M University \\ 
\email{samsonzhou@gmail.com}
}

\maketitle

\begin{abstract}
Motivated by the recency effect in online learning, we study algorithms for single-pass \emph{sliding-window streaming multi-armed bandits (MABs)} in this paper. In this setting, we are given $n$ arms with unknown sub-Gaussian reward distributions and a parameter $W$. The arms arrive in a single-pass stream, and only the most recent $W$ arms are considered valid. The algorithm is required to perform pure exploration and regret minimization with \emph{limited memory}, defined as the number of stored arms. The model is a natural extension of the streaming multi-armed bandits model (without the sliding window) that has been extensively studied in recent years. We provide a comprehensive analysis of both the pure exploration and regret minimization problems with the model.  For pure exploration, we prove that finding the best arm is hard with sublinear memory while finding an \emph{approximate} best arm admits an efficient algorithm. For regret minimization, we explore a new notion of regret and give sharp memory-regret trade-offs for any single-pass algorithm. We complement our theoretical results with experiments, demonstrating the trade-offs between sample, regret, and memory.
\end{abstract}

\section{Introduction}
\label{sec:intro}
The stochastic multi-armed bandits (MABs) model is a fundamental model extensively studied in machine learning (ML) and theoretical computer science (TCS). In its most common form, we are given $n$ arms with unknown sub-Gaussian reward distributions, and we can learn the instance by \emph{sampling} from the arms. The most important problems in the model include \emph{pure exploration}, where the goal is to identify the best or a near-optimal arm, and \emph{regret minimization}, where the aim is to devise a sampling strategy that performs competitively against the best arm in hindsight. The multi-armed bandits model has found broad applications in experiment design and clinical trials \cite{robbins1952some,pallmann2018adaptive,Simchi-LeviW23}, financial strategies \cite{ShenWJZ15,trovo2018improving}, information retrieval \cite{RadlinskiKJ08,losada2017multi}, algorithm design \cite{BouneffoufRCF17,GulloMT23}, to name a few.

Classical algorithms for MABs often assume the entire set of $n$ arms is stored in the memory for repeated access. However, this assumption can be unrealistic in modern online learning and large-scale applications, where arms may arrive sequentially in a stream, and the available memory is insufficient to store all of them.
To address this challenge, the work of \cite{AssadiW20} (cf.~\cite{LiauSPY18}) introduced the \emph{streaming} multi-armed bandits model. 
In this model, the arms arrive one after another in a stream, and the algorithm would ideally maintain a memory substantially smaller than the total number of arms. 
The maximum number of arms maintained in the memory is defined as the \emph{space complexity} of the algorithm.
The streaming MABs model has attracted considerable attention, and a flurry of work has established near-tight trade-offs for pure exploration \cite{AssadiW20, JinH0X21, MaitiPK21, AWneurips22, AW23multipassMABs, KW25multipass} and regret minimization \cite{LiauSPY18, MaitiPK21, AgarwalKP22, Wang23Regret, HYZ25SODA} in various settings.

While most work on streaming MABs targets global objectives, e.g., identifying the best arm in the entire stream, many applications exhibit a recency effect~\cite{BravermanLUZ19}, where recent arms matter more. For example, movie recommendation systems must adapt quickly to shifting trends~\cite{BogunovicMSC17,AvdiukhinMYZ19}. Another motivation of the recency effect comes from privacy constraints: regulations and policies often mandate data deletion after limited periods. GDPR requires data retention only for the ``necessary'' duration \cite{GDPR2016}, Apple retains user data for 6 months \cite{apple_differential_privacy}, and Google limits anonymized advertising data to 9 months \cite{google_data_retention}.
Unfortunately, streaming MABs algorithms usually do not take any recency effect into consideration. 
For instance, the pure exploration algorithms, e.g., the ones in \cite{AssadiW20, JinH0X21, MaitiPK21}, may output an arm that arrives very early in the stream, which is far from being recent.
Similarly, the regret minimization algorithms in \cite{MaitiPK21,Wang23Regret,HYZ25SODA} may commit to an arm that is outside the pool of recent arms\footnote{This intuitively means the algorithm incurs large regret, although the definition of regret has more nuance in such cases. See \Cref{subsec:contributions} and \Cref{sec:prelim} for details.}.
As such, we ask the following question:
\begin{quote}
Can we design efficient streaming MABs algorithms that incorporate the recency effect?
\end{quote}

\noindent
\textbf{Sliding-window streaming multi-armed bandits.} 
One of the most common models that capture the recency effect is the sliding-window streaming model \cite{DatarGIM02,DatarM16}.
In a typical sliding-window stream, a total of $n$ data items (arms in the context of MABs) are arriving in a stream, and only the past $W$ items are considered valid.  
The sliding-window streams have been extensively studied in various contexts, including frequency estimation \cite{DatarGIM02,BravermanO07,BravermanGLWZ18,BravermanWZ21,WoodruffZ21,BlockiLMZ23,FengSW25,NPWWZ26ICLR}, graph algorithms \cite{CrouchMS13,CrouchS14,ZhangBO24}, clustering \cite{BravermanLLM16,BorassiELVZ20,EpastoMMZ22a,WoodruffZZ23,CJYZZ25fairsliding}, among others \cite{TaoP06,zhang2016sliding,JayaramWZ22,BravermanG0WZ26}.

Inspired by the success of sliding-window streams on various problems, we define the natural notion of sliding-window streaming MABs to explore the recency effect. Here, we are given $n$ arms arriving in a (single-pass) stream, and we are additionally given a window size $W$. When the $t$-th arm arrives, the arms with the arrival orders in $[t-W+1,t]$ are considered the \emph{valid} set of arms at this point. 
We emphasize that throughout the paper, $W$ and $n$ are parameters given by the problem instance, and we cannot adjust these parameters.
The algorithm is only allowed to store \emph{valid arms}, and any invalid arm is evicted from the memory immediately \footnote{In the standard sliding-window streaming setting, the algorithm is allowed to store expired items. A simple reduction to the standard sliding-window case can be done by \emph{not} allowing any arm pull on the expired arms.}. 
An illustration of the model can be found in \Cref{fig:model-illustration}.

\begin{figure}[!h]
\centering
\includegraphics[width=0.49\textwidth]{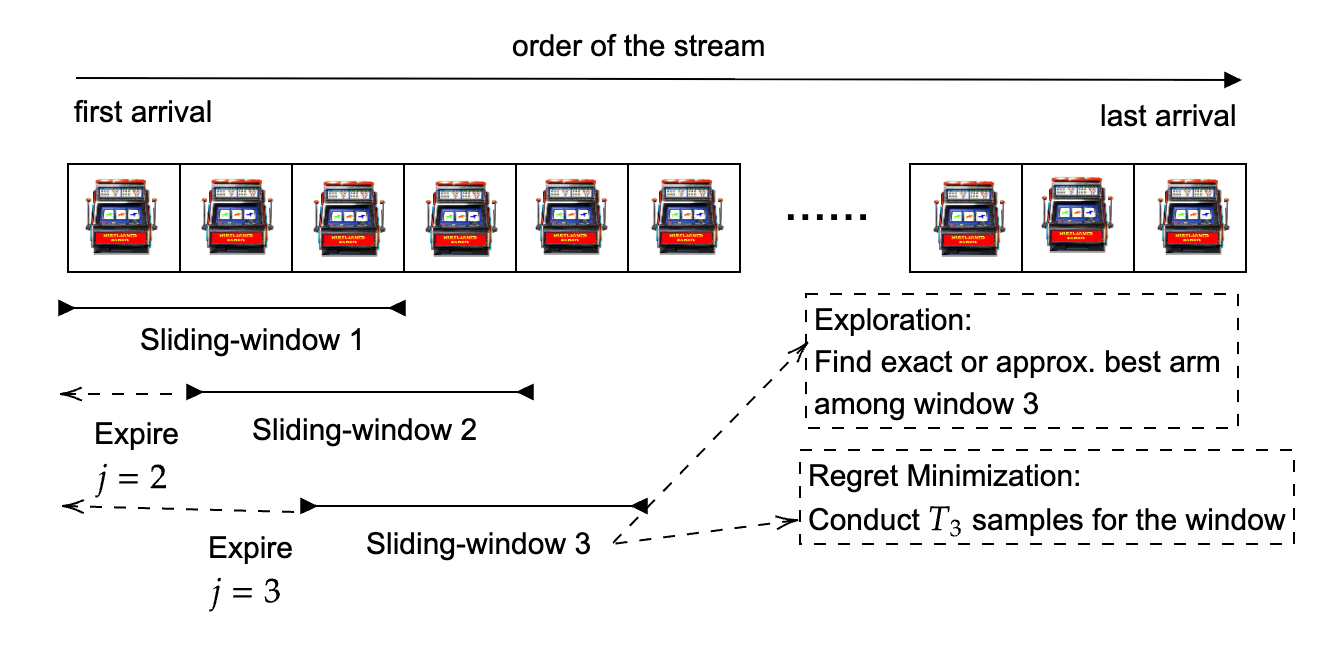}
\caption{An illustration of the sliding-window multi-armed bandits model.}
\label{fig:model-illustration}
\end{figure}

\noindent
\paragraph{Separation of data arrival and arm sampling.} A critical distinction in our model compared to existing MABs models is the separation between the data arrival (environment change) and the arm pulls. 
Some notable examples to compare with our model are non-stationary bandits \cite{whittle1988restless} and mortal bandits \cite{chakrabarti2008mortal}, which also characterize ``evolving'' bandits over time. 
However, in those settings, the environment only changes \emph{as a result of arm sampling}, whereas in our model, the environment change is controlled by the data stream. 

The separation between data arrival and arm sampling provides a complementary setting to models like non-stationary bandits and mortal bandits. 
In many applications, e.g., recommending trendy movies, the movies are often on screen for a fixed period of time (\emph{not} as a result of arm pulls). 
Furthermore, the sliding-window streaming setting takes the \emph{space complexity} into consideration, which is important for modern applications. 

\subsection{Our contributions} 
\label{subsec:contributions}
We give a comprehensive analysis of pure exploration and regret minimization algorithms for sliding-window streaming MABs. Our results can be summarized in \Cref{tab:results-comparison}.

\noindent
\paragraph{Pure explorations.} For pure explorations, we studied both \emph{pure exploration}, where the goal is to return the \emph{exact} best arm, and $\eps$-exploration, where the goal is to return an arm whose mean is $\eps$-close to the best. In both notions, the best arm is defined as the arm with the highest mean reward in the \emph{sliding window}. 
Our main conceptual message is that finding the \emph{exact} best arm is hard without using $\Omega(W)$ arms of memory space, but finding an \emph{approximate} best arm is possible with efficient sample and space. 

\begin{result}[Informal statement of \Cref{thm:lower-bound-for-sliding-window-MAB-1,thm:efficient-algorithm-for-eps-exploration}]
	\label{rst:pure-exploration}
    The following statements are true for exploration in sliding-window MABs.
    \begin{itemize}
        \item Any algorithm that finds the (exact) best arm at any step with probability at least $1/2+\Omega(1)$ in the sliding-window streaming multi-armed bandits requires $\Omega(W)$ arm memory, even with an unlimited number of arm pulls. 
        \item There exists an algorithm that finds an (approximate) $\eps$-best arm with probability at least $1-\delta$ at any step with $O(\frac{1}{\eps})$ arm memory and $O(\frac{n}{\eps^2}\log{\frac{W}{\delta}})$ arm pulls.
    \end{itemize}
\end{result}

Our results demonstrate a sharp dichotomy between the exact and approximation exploration tasks.
For exact exploration, we essentially cannot do anything better than storing the entire sliding window. 
In contrast, we can identify an approximate best arm with arbitrary constant accuracy using only \( O\left(\frac{1}{\eps}\right) \) memory. For a constant choice of $\eps$ (which is usually the case), our algorithm achieves \emph{constant memory} for exploration.

We note that in our algorithms, the number of samples used for
each arriving arm is simply $O(\frac{1}{\eps^2}\log{\frac{W}{\delta}})$, which is independent of $n$. Therefore, the algorithm does not need to know the value of $n$ in advance \footnote{Except for a certain type of variant in \Cref{subsec:strong-eps-exploration-alg}; see the corresponding section for details.}.


\noindent
\paragraph{Regret minimization.}
For \emph{regret minimization}, a significant challenge is how to \emph{define} regret in the sliding-window model. 
The most natural definition is to define the regret as the cumulative gap between $\mustar(t, W)$ and the mean rewards of the pulled arms in each window. 
Here, $\mustar(t, W)$ is the mean reward of the optimal arm in the window $W$ at time $t$.
However, such a definition has a fatal issue: since the algorithm controls the number of arm pulls before the window moves, the definition of the regret becomes a function of the algorithm, which means it cannot be well-defined.

To address the issue, we introduce the notion of \emph{epoch-wise} regret, which allows for optimal reward sequences that are \emph{independent} of the arm pulls made by the algorithm. Our notion of regret minimization involves dividing the total number of arm pulls, denoted as $T$, into $n - W + 1$ \emph{epochs}. Each epoch $j$ corresponds to a distinct window position (and an arriving arm) and is allocated a prescribed number of arm pulls, represented as $T_j$, constrained within each time window.
The sequence $\{T_j\}_{j=1}^{n-W+1}$ is chosen arbitrarily but fixed in advance (\emph{non-adaptive}), and satisfies $\sum_{j=1}^{n-W+1} T_j = T$. 
Our goal is to minimize the \emph{total regret} across epochs.

We believe that the introduction of the regret notion is a significant contribution; otherwise, there is no obvious way to study regret minimization in sliding-window MABs. 
Moreover, the epoch-wise regret definition captures many practical scenarios. 
For instance, in the case of movie recommendations, we treat the ``sliding window'' as time periods of, e.g., 1-2 months, and we aim to recommend the most relevant movies in each period. 
Our main conceptual finding for regret minimization is that a memory of $\Omega(W)$ arms is necessary to achieve $o(T)$ regret; furthermore, there is a sharp memory-regret transition around the $\Theta(W)$ arm memory.

\begin{result}[Informal statement of \Cref{thm:lower-bound-for-sliding-window-MAB-regret-epoch-wise}]
\label{rst:regret-minimization}
	Any algorithm that achieves $o(T/W^2)$ regret in the \emph{epoch-wise regret} setting requires $\Omega(W)$ arm memory. 
	Furthermore, there exist algorithms that given a stream of $n$ arms, parameters $T$, $W$, and $\{T_j\}_{j=1}^{n-W+1}$, with $O(W)$ memory achieve $O(\sum_{j=1}^{n-W+1}{\sqrt{W T_j}})$ regret.
\end{result}

It is possible to achieve $O(\sqrt{WT})$ regret for specific types of arm pull budgets, which is better than the \( O(\sqrt{nT}) \) regret in the centralized setting even with unlimited memory. 
We find the conceptual message quite interesting, and we believe it could serve as an important guideline for related applications. 
A variant of our regret setting is when the best arm does not expire with the movement of the sliding window. 
A discussion of this setting can be found in \Cref{app:regret-everlating}.

\begin{table*}[!h]
    \centering
    \renewcommand{\arraystretch}{1.4} 

    \begin{tabular}{@{}l|>{\centering\arraybackslash}p{2.1cm}|>{\centering\arraybackslash}p{3.5cm}|>{\centering\arraybackslash}p{4cm}@{}}
        \toprule
        Task & Space & Sample/Regret & Remark\\ \midrule
        Exact Exploration & $\Omega(W)$ & Any & Lower Bound\\ \midrule
        Strong $\eps$-exploration & $O(1/\eps)$ & $\Theta(\frac{n}{\eps^2}\log{n})$ & Upper and Lower Bounds\\ \midrule
        Weak $\eps$-exploration& $O(1/\eps)$ & $\Theta(\frac{n}{\eps^2}\log{W})$ & Upper Bound \\ 
        \midrule

        \multirow{2}{*}{\begin{tabular}{@{}l@{}}Regret minimization\end{tabular}} 
        & $o(W)$ & $\Omega(T/W^2)$ & Lower Bound\\ \cline{2-4}
        & $\Omega(W)$ & $O({\sqrt{W\cdot (n-W)\cdot T}})$ & Upper Bound\\ 
        \bottomrule
    \end{tabular}
    \caption{\label{tab:results-comparison} Summary of the results for exploration and regret minimization}
\end{table*}

\noindent
\paragraph{Our techniques.} We start with $\eps$-exploration in sliding-window streaming MABs, in which there are two main technical challenges: the memory constraints and the expiration of arms. An efficient sliding-window algorithm would imply an efficient streaming algorithm (by setting $W=n$); as such, for any MABs technique to work in the sliding-window setting, there must be a streaming algorithm as well. For the majority of MABs techniques, efficient streaming algorithms either do not exist (e.g., elimination-based algorithms  \cite{Even-Dar+06,KarninKS13}), or it is unclear how to design such algorithms (e.g., non-stationary bandits \cite{whittle1988restless} and mortal bandits \cite{chakrabarti2008mortal}). Therefore, we have to find technical ideas from existing streaming MABs algorithms (e.g.~\cite{AssadiW20, JinH0X21, MaitiPK21}).

Most of the algorithms in streaming MABs are either based on amortizing the sample complexity across the stream or using a bucket-based idea to group arms based on the empirical means.
The idea of amortization faces a barrier due to the expiration of arms. In particular, for the algorithms that amortize sample complexity in, e.g., \cite{AssadiW20,JinH0X21}, the guarantees are only given with respect to the best arm, and the analysis falls apart if the best arm changes due to the sliding window movements. On the other hand, the grouping of empirical means based on multiples of $\eps$ is naturally compatible with the sliding-window model. Here, we can simply discard the expired arms, and the invariant among the buckets helps maintain $\varepsilon$-best arms. This is the main idea for our $\varepsilon$-exploration algorithms.

The main challenge in our lower bound is to find a distribution that forces the algorithm to store virtually all arms. Our idea is to use a distribution of arms with \emph{decreasing mean rewards}. This distribution forces the ``useless'' arm when the window is at position $t$ to become optimal when the window moves to $t+1$. Although the distribution is not involved, it crucially uses the sliding-window property to separate from the streaming case, especially given that the latter admits an efficient algorithm with a single-arm memory \cite{AssadiW20}. Finally, our regret lower bound follows the same idea, although we need to extend the distribution to more involved ones to ensure the algorithm cannot get ``lucky'' with certain arms.

\noindent
\paragraph{Experiments.}
We conducted experiments for both pure exploration and regret minimization applications\footnote{Our code is available on Github: \url{https://github.com/jhwjhw0123/sliding-window-streaming-MABs}.}. For pure exploration, we implemented the $\eps$-best pure exploration algorithm, and for regret minimization, we used the $O(W)$-memory algorithms outlined in Result~\ref{rst:regret-minimization}. These are the first algorithms designed to work with multi-armed bandits (MABs) under a sliding-window setting.

In our pure exploration experiments, we tested configurations with \( n \in \{ 1000, 2000, 5000 \} \) and \( W \in \{ 10, 20, 50 \} \). The results indicate a relatively smooth trade-off between the quality of the returned arm and the memory used. The error exceeded $0.6$ in all settings when we employed a memory size of $0.05W$; however, it dropped to below $0.3$ with a memory size of $0.3W$. On the other hand, we can easily show that existing algorithms could result in $0.6$ error (\Cref{sec:example-failure-streaming-MABs}), and our empirical results essentially mean that with $0.3W$ memory, the error could be reduced by $50\%$. 
For the regret minimization experiments, we tested configurations with \( n \in \{ 500, 1000, 2000 \} \) and \( W \in \{ 10, 20, 50 \} \), while setting the number of pulls for each epoch to \( \frac{T}{n-W+1} = 1000 \). The results revealed sharp changes in regret around the memory size \( W \), confirming our theoretical predictions. The total regret decreased by more than $50\%$ for most configurations when the memory size increased from $0.05W$ to $W$. 

\noindent
\paragraph{Additional related work.}
Apart from the streaming MABs algorithms, our work is also closely related to settings with evolving and expiring arms, including arm-acquiring bandits \cite{whittle1981arm}, non-stationary bandits \cite{whittle1988restless}, mortal bandits \cite{chakrabarti2008mortal}, and the sleeping experts problems \cite{kleinberg2010regret}. For instance, in the mortal bandits problem, the arm can expire after a certain number of samples, or it expires with some probability at each step. We remark that although these settings are in a similar spirit to arm expiration, their settings are not directly comparable with ours since we always prioritize the most recent arms in the sliding-window model (i.e., the separation between data arrival and arm pulls). Furthermore, none of these problems considered \emph{memory} constraints, which means their algorithms cannot be directly applied in our setting.

\section{Problem Definition and Preliminaries}
\label{sec:prelim}
In this section, we give the formal definition of the problems we investigate and some standard technical tools. We start with a formal definition of stochastic MABs. 
\begin{definition}[Stochastic multi-armed bandits (MABs) model]
	\label{def:MAB}
	In the stochastic multi-armed bandits model, we have a collection of $n$ arms $\{ \arm_i \}_{i=1}^n$, and each arm follows a distribution supported on $[0, 1]$ with mean reward $\mu_i$. Each pull of $\arm_i$ returns a sample from the distribution with mean $\mu_i$. 
\end{definition}

It is important to note that, according to the central limit theorem, sampling from arbitrary distributions supported on \([0,1]\) is essentially equivalent to sampling from an arbitrary sub-Gaussian distribution (up to a scaling factor). 
A more in-depth discussion of sub-Gaussian distributions can be found in \Cref{sec:technical-prelims}. 
We define the sliding-window streaming multi-armed bandits (MABs) as follows.

\begin{definition}[The sliding-window streaming MABs model.]

	In the sliding-window streaming MABs model, we have a collection of $n$ arms $\{ \arm_i \}_{i=1}^n$ arranged in order and a window size $W$. 
    Each arm follows a distribution supported on $[0, 1]$ with mean reward $\mu_i$. 
    The arms arrive one by one in the stream, and we let $\{ \arm_i \}_{i=t-W+1}^t$ be the set of valid arms that arrived in the $W$ latest steps. 
	When a new arm arrives, the algorithm can pull the arriving arm and the arms in memory. The algorithm can also decide whether to store the new arm in memory or discard it, and the algorithm can discard some arms stored in memory to free up space. 
    Furthermore, any \emph{expired arm}, i.e., the arms outside the sliding window, is \emph{immediately deleted} from the memory.
	At any point, the collection of arms that the algorithm can access is the arms in memory and the arriving arm.
\end{definition}
We emphasize that we deal with the \emph{single-pass} and \emph{worst-case order} scenario, in which the algorithm is only allowed to make \emph{one pass} over the stream, and the arrival order of the arms is specified by an adversary.

\begin{remark}
    To keep consistent with the literature in sliding-window streaming algorithms, e.g., \cite{DatarGIM02,DatarM16}, we do \emph{not} force the algorithm to discard the expired arms. 
    Nevertheless, our upper and lower bounds in \Cref{rst:pure-exploration} and \Cref{rst:regret-minimization} do \emph{not} rely on this property. In other words, if we add the condition that the expired arms have to be deleted immediately, the upper and lower bounds in \Cref{rst:pure-exploration} and \Cref{rst:regret-minimization} still hold. The immediate removal of expired arms from the memory is helpful for applications with private data retention requirements.
\end{remark}

We can now define the \textit{sample and space complexity} of a sliding-window streaming MABs algorithm.
\begin{definition}[\textit{Sample complexity}]
	\label{def:sample-complexity}
	The \textit{sample complexity} of a sliding-window streaming MABs algorithm is defined as the total number of pulls of the algorithm.
\end{definition}

\begin{definition}[\textit{Space complexity}]
	\label{def:space-complexity}
	The \textit{space complexity} 
    of a sliding-window streaming algorithm is defined as the maximum number of arms that we store in the memory at any time during the algorithm.
\end{definition}

\noindent
\paragraph{Pure exploration.} One of the most natural problems in the MABs problem in the sliding-window model is the \emph{pure exploration} problem, where the algorithm is asked to return the best or near-best arms. In what follows, we discuss the necessary notions before formally defining the pure exploration problems.

\begin{definition}[\textit{Best arm in the window}]
	\label{def:best-arm-sliding-window}
	Assume that we have a collection of $n$ arms $\{ \arm_i \}_{i=1}^n$ with means $\mu_i$ and arranged in the streaming arriving order. Let $W$ be the window size and $t$ be the index of the current arriving arm. Then, for any $t\in [n]$, the best arm in the window $\arm^\ast(W,t)$ is the arm with the highest mean $\mustar(W,t)$ among the $W$ latest arms $\{ \arm_i \}_{i=t-W+1}^t$.
\end{definition}

Note that the notation $\armstar(W,t)$ is a function of $t$ and $W$. We also call the set of arms $\{ \arm_i \}_{i=t-W+1}^t$ \emph{valid} at time step $t$ for fixed $t$ and $W$. 

We are ready to introduce the \emph{pure exploration} problem for the sliding-window streaming MABs model. 

\begin{problem}[Exact pure exploration in sliding-window MABs]
	\label{pblm:pure-pure exploration-sliding-window}
	Given a stream of $n$ arms $\{\arm_i\}_{i=1}^{n}$ and a window size $W$, we say a sliding-window streaming MABs algorithm $\ALG$ solves
	\begin{itemize}[noitemsep, topsep=0pt, leftmargin=*]
		\item \emph{weak} pure exploration with probability $1-\delta$ if for any single \emph{fixed} time $t\in [n]$, $\ALG$ can output the best arm in the window with probability at least $1-\delta$.
		\item \emph{strong} pure exploration with probability $1-\delta$ if $\ALG$ can output the best arm in the window simultaneously at all times $t\in [n]$ with probability $1-\delta$.
	\end{itemize}
\end{problem}

Next, we could analogously define the $\eps$-exploration problem in both the \emph{weak} and the \emph{strong} versions for the sliding-window streaming MABs.
\begin{problem}[$\eps$-exploration in sliding-window MABs]
	\label{pblm:eps-pure exploration-sliding-window}
	Given a stream of $n$ arms $\{\arm_i\}_{i=1}^{n}$, a window size $W$, and a parameter $\eps$, we say a sliding-window streaming MABs algorithm $\ALG$ solves
	\begin{itemize}[noitemsep, topsep=0pt, leftmargin=*]
		\item \emph{weak} $\eps$-exploration with probability $1-\delta$ if for any single \emph{fixed} time $t\in [n]$, $\ALG$ is able to output an arm with mean reward $\mu$ such that $\mu\geq \mustar(t, W)-\eps$ with probability at least $1-\delta$.
		\item \emph{strong} $\eps$-exploration with probability $1-\delta$ if  $\ALG$ is able to output an arm with mean reward $\mu$ such that $\mu\geq \mustar(t, W)-\eps$ simultaneously at all times $t\in [n]$ with probability $1-\delta$.
	\end{itemize}
	Here, as defined in \Cref{def:best-arm-sliding-window}, $\mustar(t, W)$ is the mean reward of the best arm in the window.
\end{problem}

\noindent
\paragraph{Regret minimization.} In \Cref{subsec:contributions}, we have discussed the high-level definition for our regret notion in sliding windows, i.e., the epoch-wise regret. We now introduce the formal definition as follow.

\begin{definition}[Regret minimization, epoch-wise regrets]
	\label{def:regret-epoch}
	Let $\{\arm_i\}_{i=1}^{n}$ be a set of $n$ arms, and let $W$ and $T$ be the window size and the total number of samples. 
	We divide $T$ into $n-W+1$ \emph{epochs} with $T_j$ as the number of samples in each epoch, and we have $\sum_{j=1}^{n-W+1} T_j = T$.
	For the arriving arm with index $j$, the algorithm is required to conduct \emph{exactly} $T_j$ arm pulls among $\{ \arm_i \}_{i=j-W+1}^j$. 
	We define the regret of the $j$-th epoch as $R^{E}(j)=\sum_{\tau=1}^{T_j}(\mustar(W,j)-\mu_{i(\tau)})$, where $i(\tau)$ is the arm index pulled by the algorithm.
	The total regret is defined as $R_T = \sum_{j=1}^{n-W+1}R^{E}(j)$, i.e., the regret over the epochs.
\end{definition}







\section{A Lower Bound for Pure Exploration in Sliding-Window MABs}
\label{sec:pure-pure exploration-strong-lb}
The most natural pure exploration problem is \emph{pure exploration} which asks to return the \emph{best arm}.
In the vanilla streaming multi-armed bandits (MABs) model, pure exploration can be solved with \( O(n/\Delta^2_{[2]}) \) samples and a single-arm memory, where \( \Delta_{[2]} \) represents the difference between the mean of the best and the second-best arms.
As such, one would naturally wonder whether the same story applies to the sliding-window model. 
In this section, we will show that pure exploration is surprisingly much harder in the sliding-window streams: unless the algorithm uses $\Omega(W)$ space, we cannot obtain any algorithm that solves pure exploration.

The hard instance for our lower bound is a stream with descending mean rewards of arms, i.e., $\mu_1 > \mu_2 > \cdots > \mu_n$ for arms. 
The optimal solution for the sliding-window MABs would be to select \(\arm_{n-W+1}\), which is the oldest non-expired arm. However, to always keep the oldest arm that has not expired in the memory, we would naturally need $W$ memory. 
The following theorem formalizes the above intuitions.

\begin{theorem}
	\label{thm:lower-bound-for-sliding-window-MAB-1}
    Any algorithm that given $n$ arms in a sliding-window stream with a window size of $W$, solves the \emph{weak or strong} pure exploration problem in sliding-window streaming multi-armed bandits with a probability of at least \(1/2+\Omega(1)\) has a \textit{space complexity} of at least \(\Omega(W)\), even if the sample complexity is unbounded.
\end{theorem}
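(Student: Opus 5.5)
The proof will be a counting argument against a randomized hard distribution, with the sample complexity taken to be unbounded. Since unbounded sampling lets the algorithm learn every arm's mean exactly, we give it this power for free: each arriving arm reveals its exact mean. The only obstacle left for the algorithm is then which arms it keeps in memory. A deterministic decreasing instance cannot work, because the algorithm would know in advance that the answer is always the oldest valid arm. So we randomize the \emph{timing} of the query instead.

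\textbf{Construction.} Consider $n = 2W$ arms with strictly decreasing means $\mu_1 > \mu_2 > \cdots > \mu_n$, for example $\mu_i = 1 - i/(4W)$. Choose a query time $t$ uniformly at random from $\{W, W+1, \dots, 2W-1\}$, independently of the algorithm. Because the means decrease, the best arm in the window at time $t$ is $\arm_{t-W+1}$, the oldest valid arm.

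\textbf{Counting step.} We prove the weak version; the strong version then follows, since any strong algorithm is also weak at the fixed random time. An algorithm succeeding at time $t$ must output $\arm_{t-W+1}$. That arm arrived at step $t-W+1 \le W$, so at time $t$ it can only be held if it is in memory. Now look at the memory state $M$ right after step $W$, i.e.\ after all of $\arm_1,\dots,\arm_W$ have arrived and before any query index $t \ge W$ is processed. Any arm among $\arm_1,\dots,\arm_W$ that is not in $M$ has been permanently lost, since arms cannot be re-read in a single pass. Hence success at time $t$ requires $\arm_{t-W+1}\in M$.

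The algorithm does not see $t$ while processing the stream. Moreover, the stream is fixed, so the algorithm's behavior, and hence $M$, is independent of $t$. Therefore
\[
\Pr[\text{success}] \le \mathbb{E}\left[\frac{|M \cap \{\arm_1,\dots,\arm_W\}|}{W}\right] \le \frac{s}{W},
\]
where $s$ is the space bound. Requiring probability at least $1/2+\Omega(1)$ forces $s \ge W/2 = \Omega(W)$. The argument extends to general $n \ge 2W$ by padding the stream with further decreasing arms, or by reasoning about any block of $W$ consecutive arms.

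\textbf{Subtle point.} The main care needed is the quantifier in the weak version. The definition requires success probability $1-\delta$ at \emph{every} fixed $t$. By averaging, such an algorithm also succeeds with that probability at a uniformly random $t$, so the averaging bound above applies to it. Memory is counted as stored arms, and each stored arm is one of the $s$ slots, so nothing beyond this counting is required.
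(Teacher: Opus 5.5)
Your proposal is correct and is essentially the paper's argument: the same instance with $n=2W$ and strictly decreasing means, and the same observation that the best arm at each time $t\in\{W,\dots,2W-1\}$ is the oldest valid arm, which must already sit in memory after step $W$ (the paper uses $t\in\{W+1,\dots,2W\}$). It then averages over these $W$ times in the same way, concluding that a constant fraction of $W$ arms must be stored. Your linearity-of-expectation bound $\Pr[\text{success}]\le s/W$ (up to an additive $1/W$ from the boundary time $t=W$, whose answer can be output before being discarded) is a slightly cleaner replacement for the paper's Yao-plus-$49/50$ counting; however, your remark that a deterministic decreasing instance ``cannot work'' is mistaken and unnecessary, since your own construction is exactly that fixed instance and knowing the answer in advance does not let an algorithm output an arm it has already discarded.
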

\begin{proof}
    We prove the theorem for weak pure exploration, and since the answer for strong exploration is always valid for weak exploration, the strong exploration problem requires at least the same amount of memory and samples. We also focus on proving the theorem with a success probability of $99/100$ for ease of presentation. By a standard reduction argument, the lower bound can be boosted to work against any algorithms with $1/2+\Omega(1)$ success probability.

	By Yao's minimax principle \cite{Yao1977ProbabilisticCT}, it is sufficient to prove the lower bound for deterministic algorithms over a hard distribution of inputs. Let $n = 2W$. We construct the instance $\{ \arm_1 \}_{i=1}^{n}$ such that $\mu_i = 1 - \frac{i}{3W}$. To solve the \emph{weak} pure exploration problem with a probability of at least $\frac{99}{100}$, the algorithm must correctly identify at least $\frac{49}{50}$ of the best arms in the second half of the stream $\{ \arm_i \}_{i=1}^n$. If the algorithm fails to do this, the overall success probability would drop below $1 \cdot \frac{1}{2} + \frac{49}{50} \cdot \frac{1}{2} = \frac{99}{100}$.
	
	Let $T \subset \{ W+1, W+2, \ldots, 2W \}$ represent the collection of times when the algorithm correctly identifies the best arm in the window during the second half of the stream. Define $A = \{ \arm^\ast (W, t) | t \in T \}$ as the set of best arms in the window at times $t \in T$. For any $t \in \{ W+1, W+2, \ldots, 2W \}$, the best arm in the window $\arm^\ast (W, t)$ should be $\arm_{t-W+1}$ because the expected values of the arms monotonically decrease in this instance. Therefore, we have $A = \{ \arm_{t-W+1} | t \in T \}$. Given that $T \subset \{ W+1, W+2, \ldots, 2W \}$ and $|T| \geq \frac{49}{50}W$, it follows that $A \subset \{ \arm_2, \arm_3, \ldots, \arm_{W+1} \}$ and $|A| = |T| \geq \frac{49}{50}W$.
	
	For any $W+1 \le t < 2W$, $\arm^\ast (W, t) = \arm_{t-W+1}$ has already arrived by time $W+1$. Therefore, for any $t \in T \cap [2W-1]$, $\arm^\ast (W, t)$ must be stored in memory by time $W+1$ so that it can be returned at time $t$. This means that at least $|A| - 1 = \frac{49}{50}W - 1$ arms must be stored in memory at time $W+1$. Hence, according to Yao's minimax principle, the algorithm must have a \emph{space complexity} of at least $\Omega(W)$.
\end{proof}

\section{Sliding-Window Algorithms and Lower Bounds for \texorpdfstring{$\eps$}{eps}-Pure Exploration}
\label{sec:eps-pure exploration}
\Cref{sec:pure-pure exploration-strong-lb} depicts a very pessimistic picture for the pure exploration of the \emph{best arm} in sliding-window streaming MABs. A natural question to follow is whether we could get positive results using a relaxed notion. A natural candidate for this purpose is the $\eps$-exploration under the $(\eps, \delta)$-PAC framework. Here, instead of returning the \emph{single best} arm, we are allowed to obtain an arm whose gap is within $\eps$ additive to the best, i.e., returning an arm with mean reward $\mu \geq \mustar -\eps$. In this section, we present the bounds for both \emph{strong} and \emph{weak} $\eps$ exploration. Our main results are $i).$ a pure exploration algorithm that solves \emph{weak} $\eps$ exploration with probability $1-\delta$ in the sliding-window streaming MABs model with $\O{\frac{n}{\eps^2} \log \frac{W}{\delta}}$ \textit{sample complexity} and $\O{\frac{1}{\eps}}$ \textit{space complexity}; $ii).$ a lower bound showing that for any algorithm to solve \emph{strong} $\eps$ exploration with probability $1/2+\Omega(1)$ in the sliding-window streaming MABs model, the algorithm has to use $\Omega{(\frac{n}{\eps^2} \log \frac{n}{W})}$ sample complexity; and $iii).$ a nearly-matching algorithm for \emph{strong} $\eps$ exploration with probability $1-\delta$ in the sliding-window streaming MABs model with $\O{\frac{n}{\eps^2} \log \frac{n}{\delta}}$ \textit{sample complexity} and $\O{\frac{1}{\eps}}$ \textit{space complexity}.

\subsection{An efficient algorithm for weak \texorpdfstring{$\eps$}{eps}-pure exploration}
We start with introducing a streaming algorithm designed for \emph{weak} $\eps$ exploration. 

\begin{theorem}
	\label{thm:efficient-algorithm-for-eps-exploration}
	There exists an algorithm that, given $n$ arms arriving in a sliding-window stream with a window size $W$ and a \textit{confidence parameter} $\delta$, solves \emph{weak} $\eps$ exploration with a probability of at least $1 - \delta$ using a \textit{sample complexity} of $\O{\frac{n}{\eps^2} \log \frac{W}{\delta}}$ and a \textit{space complexity} of $\O{\frac{1}{\eps}}$.
\end{theorem}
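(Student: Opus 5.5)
The plan is to build a bucket-based algorithm. When $\arm_t$ arrives, we pull it $s = \Theta(\frac{1}{\eps^2}\log\frac{W}{\delta})$ times and compute its empirical mean $\hat\mu_t$. We then round $\hat\mu_t$ down to the grid $\{0, \eps/2, \eps, 3\eps/2, \ldots\}$, which gives a bucket index $b_t \in \{0,1,\ldots,\lceil 2/\eps\rceil\}$.

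The memory invariant is that, for each bucket index $b$, we keep at most one arm: the \emph{most recent} arrived arm whose empirical mean falls in bucket $b$. In addition, we discard every stored arm whose bucket index is $\le b_t$ whenever the new arm $\arm_t$ lands in bucket $b_t$, because the new arm is both at least as good empirically and will expire later. Expired arms are deleted as the window moves. Under this rule the stored arms have strictly increasing bucket indices ordered from newest to oldest. In any case there are at most $O(1/\eps)$ buckets, so the space is $O(1/\eps)$. At any time $t$ we output the stored arm, or the arriving arm, with the highest bucket index. The total number of pulls is $n\cdot s = O(\frac{n}{\eps^2}\log\frac{W}{\delta})$, since each arm is sampled only on arrival.

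For correctness at a fixed time $t$, only the $W$ arms in the window $\{\arm_i\}_{i=t-W+1}^{t}$ matter. By Hoeffding's inequality for sub-Gaussian rewards and a union bound over these $W$ arms, with probability at least $1-\delta$ every valid arm satisfies $|\hat\mu_i-\mu_i|\le \eps/4$. This is the key reason the logarithm is $\log\frac{W}{\delta}$ rather than $\log\frac{n}{\delta}$: weak exploration needs correctness only at a single fixed $t$. One subtlety is that the memory contents at time $t$ depend only on the empirical means of arms arriving at or before $t$. Arms that already expired influence nothing that is still stored, since their deletions only removed arms older than themselves. Hence conditioning on the good event for the window arms suffices.

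Next, we argue by the invariant that some stored arm has bucket index at least $b^\ast$, the bucket of $\armstar(W,t)$. Either $\armstar(W,t)$ is still stored, or it was discarded by a later arm $\arm_j$ with $b_j\ge b^\ast$ and $j > $ the index of $\armstar$. That later arm $\arm_j$ is itself in the window, and by induction some arm at least as recent with bucket index $\ge b_j$ remains stored. Consequently the output arm has $\hat\mu \ge \hat\mu^\ast - \eps/2$, and hence $\mu \ge \mustar(W,t) - \eps/2 - 2\cdot\eps/4 = \mustar(W,t)-\eps$. Adjusting the constants gives the stated guarantee.

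The main obstacle I expect is stating and verifying this chain invariant cleanly under expiration. The claim to prove is: for every valid arm $\arm_i$, some stored arm $\arm_j$ with $j\ge i$ and $b_j\ge b_i$ exists. I would prove it by induction on arrivals, checking separately the three events: an arrival, a bucket-dominance deletion, and an expiration. The expiration event never removes an arm newer than a valid one.
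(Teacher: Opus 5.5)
Your proposal is correct and is essentially the paper's argument: bucket the empirical means at granularity $\Theta(\eps)$, keep only the most recent arm per bucket, drop expired arms, output from the highest occupied bucket, and apply a union bound only over the $W$ arms of the fixed window to get the $\log(W/\delta)$ factor. The one real difference is your extra dominance pruning, which is harmless but unnecessary and is what forces your witness-chain invariant---the paper keeps every bucket and simply observes that the best arm's bucket can only be overwritten by newer, hence still valid, arms, so it stays non-empty (your one-sided comparison via empirical means also lets you skip the paper's event that all buckets above $b_k+1$ are empty).
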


At a high level, the algorithm follows the idea of partitioning the range $[0, 1]$ into $\O{\frac{1}{\eps}}$ segments (``buckets'') of equal length. An arm is considered to belong to a bucket if its mean value falls within the range of that segment. For an arm $\arm_i$ that belongs to bucket $B$, any arm $\arm'$ that is in a nearby bucket would serve as an $\eps$-approximation of $\arm_i$. If we pull each arm an adequate number of times, we can ensure that any arm is placed into a bucket that is close enough to its mean; thus, the non-expired arm from the highest bucket will be an $\eps$-best arm.
To optimize memory usage, we store only the latest arm for each bucket instead of all the arms that belong to that bucket. 
Our algorithm for \emph{weak} $\eps$ exploration is presented in \Cref{alg:efficient-algorithm-for-eps-exploration}, with number of arm pulls $s = ({9}/{2\eps^2})\cdot \ln{{6W}/{\delta}}$.

\begin{algorithm}
\caption{Efficient Algorithm for $\eps$ exploration in Sliding-window Streaming MABs: $\Bucket (s)$}
\label{alg:efficient-algorithm-for-eps-exploration}
\KwIn{Data stream $\{ \arm_i \}_{i=1}^n$, window size $W$, \textit{confidence parameter} $\delta$ and \textit{accuracy parameter} $\eps$\;}
\KwIn{Sample complexity: $s=\frac{9}{2\eps^2} \ln{\frac{6W}{\delta}}$ for weak exploration and $s=\frac{9}{2\eps^2} \ln{\frac{6n}{\delta}}$ for strong exploration.}
\KwOut{$\eps$-best arms $\{ \widehat{\arm}_i \}_{i=1}^n$.}
$N \gets \frac{3}{\eps}$ and initialize $N$ buckets $B_1, B_2, \cdots, B_N$. \par 
\For{each arriving arm $\arm_i$}{
    Pull $\arm_i$ for $s$ times and evaluate empirical mean $\widehat{\mu}_i$. \par
    Store $\arm_i$ in $B_j$ such that 
    $(j-1) \frac{\eps}{3} < \widehat{\mu}_i \le j \frac{\eps}{3}$, 
    and discard the arms stored in $B_j$ previously. \tcp{Expired arms also discarded}
    $\widehat{\arm}_i \gets$ the arm stored in $B_k$ such that $k = \max_{i \le N} \{ B_i \ne \emptyset \}$.
}
\Return{$\{ \widehat{\arm}_i \}_{i=1}^n$.}
\end{algorithm}

\subsection{A lower bound for strong \texorpdfstring{$\eps$}{eps}-pure exploration}
\label{subsec:lb-strong-eps-pure exploration}
We now discuss the lower bound for \emph{strong} $\eps$ exploration that has an extra $\log{n}$ factor. In particular, if we show that when $W\ll n$ (e.g., $W=\log{n}$), there is a lower bound of $\Omega{(\frac{n}{\eps^2} \log{n})}$ samples for strong exploration, it would imply a \emph{separation} between the \emph{weak} and \emph{strong} $\eps$ exploration since the weak exploration only requires $\Omega{(\frac{n}{\eps^2} \log{W})}$ samples by \Cref{alg:efficient-algorithm-for-eps-exploration} $\Bucket \left( \frac{9}{2\eps^2} \ln{\frac{6W}{\delta}} \right)$. 
Then we have:

\begin{lemma}
	\label{lem:strong-eps-pure exploration-lb}
	For infinitely many choices of parameters $n$, $\eps$, and $W\leq n^{0.99}$, there exists a distribution of arms $\calD(n,W,\eps)$ such that any algorithm that solves the \emph{strong} $\eps$ exploration with probability at least $1/2+\Omega(1)$ on $\calD(n,W,\eps)$ requires at least $\Omega{(\frac{n}{\eps^2} \log{n})}$ samples. The lower bound holds even if the algorithm is with unbounded memory.
\end{lemma}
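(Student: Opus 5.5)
The plan is to reduce strong $\eps$-exploration to many independent instances of a ``detect the good arm'' problem and then apply a direct-sum style argument. The standard fact to use is that distinguishing a coin of bias $1/2$ from one of bias $1/2+2\eps$ with error at most $\delta'$ requires $\Omega(\frac{1}{\eps^2}\log\frac{1}{\delta'})$ samples; this follows from KL divergence or the Bretagnolle--Huber inequality, as in the standard MAB lower bounds. I will take $W$ small, for instance $W=2$ or any $W\le n^{0.99}$. I will then split the stream into $m=\lfloor n/W\rfloor\ge n^{0.01}$ disjoint blocks of $W$ consecutive arms, so that each window position at the end of a block covers exactly that block.

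The distribution $\calD(n,W,\eps)$ is built one block at a time, independently. In each block, every arm has mean $1/2$ (Bernoulli), except that with probability $1/2$ a single uniformly random arm in the block gets mean $1/2+2\eps$. When a block has a special arm, the algorithm must output an arm of mean $>1/2+\eps$ at the time step that closes the block, and the only such arm is the special one. Strong exploration must succeed at all $m$ of these times simultaneously, with probability at least $1/2+\Omega(1)$. Since the blocks are independent, the failure events are independent whenever the algorithm's behaviour in a block depends only on that block's samples. Unbounded memory does not break this: samples from earlier blocks carry no information about later ones, so I can condition on the history and treat each block as a fresh instance. If the per-block failure probability were at least $p$, overall success would be at most $(1-p)^m$. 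So a constant success probability forces most blocks to have failure probability $O(1/m)=O(n^{-0.01})$.

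The step I expect to be the main obstacle is making the per-block bound correct, because the algorithm may spread samples unevenly across blocks. The plan is as follows. Let $s_j$ be the expected number of samples spent on block $j$, conditioned on the history. A per-block lower bound says that failure probability $\delta_j$ requires $s_j\ge c\frac{1}{\eps^2}\log\frac{1}{\delta_j}$: at least one arm, or the unsampled ones, must be tested to confidence $\delta_j$, and by Bretagnolle--Huber an arm pulled fewer times than that leaves error $\ge \delta_j$. Success then requires $\prod_j(1-\delta_j)\ge 1/2$, which gives $\sum_j\delta_j=O(1)$. Minimizing $\sum_j \log(1/\delta_j)$ under this constraint uses convexity (Jensen): the minimum is attained at $\delta_j=\Theta(1/m)$, which gives total samples $\Omega(\frac{m}{\eps^2}\log m)$. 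Handling the adaptive, history-dependent allocation needs a chain-rule or conditioning argument to justify the independence of blocks.

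This bound is $\frac{n}{W\eps^2}\log n$, which for constant $W$ is the claimed $\Omega(\frac{n}{\eps^2}\log n)$. To get the full $n$ factor for larger $W$, I would instead require each of the $W$ arms in a block to be tested. One way is to use overlapping windows with a different special arm per window position, so that every arrival is a fresh detection task. Choosing $W$ constant already gives infinitely many parameter choices, which suffices for the statement.
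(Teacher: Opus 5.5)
Your architecture matches the paper's. Disjoint windows give independent per-window instances, strong exploration must solve all of them at once, and a per-instance $\Omega(\eps^{-2}\log(1/\delta))$ bound plus a direct-sum step produces the extra logarithm. Your Jensen step plays the role of the paper's ``$k/100$ indices'' counting. The routes differ in the per-window lemma:
\begin{itemize}
\item The paper interleaves all-zero padding blocks (your direct tiling shows these are unnecessary). It invokes Mannor--Tsitsiklis, $\Omega(\frac{W}{\eps^2}\log\frac1\delta)$ per window, to get $\Omega(\frac{n}{\eps^2}\log\frac nW)$ for every $W\le n/8$.
\item You use a self-contained two-point Bretagnolle--Huber bound on an explicit prior. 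This is genuinely distributional, which Yao's principle needs; Mannor--Tsitsiklis only supplies an algorithm-dependent bad instance, which the paper plugs into a fixed product distribution without comment. But your bound loses a factor $W$, so you get the lemma only with $W=O(1)$. That suffices for the literal ``infinitely many choices'' statement and still gives the weak/strong separation.
\end{itemize}
For general $W$ you do not need overlapping windows, which would share arms and destroy the independence you rely on. Instead, compare the \emph{null} instance of your own block prior with each ``special at $i$'' instance. At least $W/2$ arms satisfy both $\Pr_0[\mathrm{out}=i]\le 4/W$ and $\Pr_i[\mathrm{out}\ne i]\le 8\delta_j$. For each such arm, divergence decomposition and data processing give $O(\eps^2)\,\mathbb{E}_0[N_i]\ge \mathrm{kl}(1-4/W,\,8\delta_j)$, so it needs $\Omega(\eps^{-2}\log(1/\delta_j))$ pulls (for $W\ge 8$). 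The block then costs $\Omega(\frac{W}{\eps^2}\log\frac{1}{\delta_j})$, and you recover the paper's bound.

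The step you flag as the main obstacle is a real gap, and your Jensen argument does not close it. Let $D_j$ be block $j$'s failure probability conditioned on everything before the block, including earlier observations and earlier block instances. The $D_j$ are random. Success probability $\ge 1/2$ does not imply $\sum_j D_j=O(1)$, or even $\sum_j\mathbb{E}[D_j]=O(1)$. For example, an algorithm that stops sampling after an early event of probability $0.4$ still succeeds with probability about $0.6$, yet has $\mathbb{E}[D_j]=\Omega(1)$ for every $j$. So any argument through unconditional per-block failure probabilities breaks. So does the product formula $\Pr(\cap_i\calE_i)=\prod_i\Pr(\calE_i)$, which the paper's direct-sum lemma asserts ``by independence''.

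Here is a fix:
\begin{itemize}
\item Since $D_j$ is fixed before block $j$ starts and $(1-D)e^{D}\le 1$, telescoping conditional expectations gives $\mathbb{E}\bigl[\mathbf{1}[\text{all blocks succeed}]\,e^{\sum_j D_j}\bigr]\le 1$.
\item By Markov, $\Pr[\sum_j D_j\le C]\ge \Pr[\text{success}]-e^{-C}=\Omega(1)$ for a suitable constant $C$.
\item On that event, Jensen gives $\sum_j\log(1/D_j)\ge m\log(m/C)$.
\item Take expectations of the conditional per-block bound $\mathbb{E}[N_j\mid\text{history}]\ge\frac{c}{\eps^2}\max\{0,\log\frac{c'}{D_j}\}$ and sum over $j$ to get $\Omega(\frac{m}{\eps^2}\log m)$.
\end{itemize}
The conditional per-block bound is valid because pulls on still-valid arms of block $j-1$ are independent of block $j$'s instance and only act as extra randomness. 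With this inserted, your argument is complete.
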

The technical statement for \Cref{lem:strong-eps-pure exploration-lb} is more general and gives $\Omega{(\frac{n}{\eps^2} \log\frac{n}{W})}$ samples for $W\in [1, n/8]$, although the bound is less informative when $W$ is large.
At a high level, our lower bound works by reducing the problem of solving \emph{independent} copies of the $\eps$-best arm identification to the sliding-window streaming $\eps$ exploration case.
\cite{mannor2004sample} proved that $\O{\frac{n}{\eps^2} \log \Parentheses{\frac{1}{\delta}}}$ pulls are necessary to identify an $\eps$-best arm among $n$ arms with a probability of at least $1-\delta$. 
In the slide-window setting, since arms will expire after $W$ time, the information from \emph{disjoint} windows is \emph{independent}. 
There are $\Theta(\frac{n}{W})$ windows in a sliding-window stream that are disjoint. Since each window requires at least $\O{\frac{W}{\eps^2} \log \Parentheses{\frac{n}{W}}}$ pulls to solve its exploration with a probability of at least $1-\Theta \Parentheses{\frac{W}{n}}$, it follows that $\O{\frac{n}{\eps^2} \log \Parentheses{\frac{n}{W}}}$ pulls are necessary to achieve \emph{strong} $\eps$-exploration with a probability of at least $1/2+\Omega(1)$.

\subsection{An efficient algorithm for strong \texorpdfstring{$\eps$}{eps}-pure exploration}
\label{subsec:strong-eps-exploration-alg}
We introduce a streaming algorithm for \emph{strong} $\eps$ exploration. 
The algorithm uses essentially the same subroutine as in \Cref{alg:efficient-algorithm-for-eps-exploration}, but it uses a larger number of arm pulls $s = \frac{9}{2\eps^2} \ln{\frac{6n}{\delta}}$ to beat a union bound. 

\begin{lemma}
	\label{lem:efficient-algorithm-for-strong-eps-pure exploration}
	There exists an algorithm that, given $n$ arms arriving in a sliding-window stream with a window size $W$ and a \textit{confidence parameter} $\delta$, solves \emph{strong} $\eps$ exploration with a probability of at least $1 - \delta$, a \textit{sample complexity} of $\O{\frac{n}{\eps^2} \log \frac{n}{\delta}}$, and a \textit{space complexity} of $\O{\frac{1}{\eps}}$.
\end{lemma}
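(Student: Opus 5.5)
We run \Cref{alg:efficient-algorithm-for-eps-exploration}, i.e., $\Bucket(s)$ with $s=\frac{9}{2\eps^2}\ln\frac{6n}{\delta}$. The resource bounds are immediate. Each of the $n$ arms is pulled exactly $s$ times, giving $ns=\O{\frac{n}{\eps^2}\log\frac{n}{\delta}}$ samples. At most one arm is stored per bucket and there are $N=3/\eps$ buckets, so the memory is $\O{\frac{1}{\eps}}$.

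\textbf{Concentration.} For each arriving arm $\arm_i$, Hoeffding's inequality with $s$ samples gives
\[
\Pr\left[|\widehat{\mu}_i-\mu_i|>\frac{\eps}{3}\right]\le 2\exp\left(-2s\cdot\frac{\eps^2}{9}\right)=\frac{2\delta}{6n}=\frac{\delta}{3n}.
\]
A union bound over all $n$ arms yields an event $\calE$ of probability at least $1-\delta/3\ge 1-\delta$ on which every empirical mean is $\eps/3$-accurate. This is the only place the choice of $s$ matters. Compared with the weak version, the union bound is now over all $n$ arms rather than over the $W$ arms of one fixed window, which is why $\ln(6n/\delta)$ replaces $\ln(6W/\delta)$. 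All remaining reasoning is deterministic on $\calE$, so it holds simultaneously for all $t$.

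\textbf{Correctness at every time $t$.} Fix $t$ and let $\armstar=\armstar(W,t)$ be the best arm in the window, with empirical mean $\widehat{\mu}^\ast$ falling in bucket $j^\ast$. It is valid, so at time $t$ bucket $B_{j^\ast}$ holds a non-expired arm. To see this, note that when $\armstar$ arrived it was placed in $B_{j^\ast}$. It can only have been replaced by a later arm placed in the same bucket. Any later arm arrived after $\armstar$, and $\armstar$ is still in the window, so that later arm is also still in the window and has not been evicted by expiration.

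Hence the output $\widehat{\arm}_t$ comes from a bucket $k\ge j^\ast$ and satisfies $\widehat{\mu}_{\widehat{\arm}}>(k-1)\eps/3\ge (j^\ast-1)\eps/3\ge \widehat{\mu}^\ast-\eps/3$. On $\calE$ this gives
\[
\mu_{\widehat{\arm}}\ge \widehat{\mu}_{\widehat{\arm}}-\frac{\eps}{3}>\widehat{\mu}^\ast-\frac{2\eps}{3}\ge \mu^\ast-\eps.
\]
The output is also itself a valid arm, since expired arms are discarded.

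\textbf{Main obstacle.} The delicate step is the invariant that some valid arm occupies a bucket at least as high as $\armstar$'s bucket, even after replacements and expirations. The argument rests on the fact that replacements are always by newer arms, which expire later.

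A minor technical point is that empirical means may fall outside $(0,1]$, i.e., outside the range $[0,N\eps/3]$. This is handled either by clamping them into the extreme buckets or by adding one extra bucket at each end, which changes the memory only by $O(1)$.
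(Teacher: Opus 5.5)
Your proposal is correct and follows essentially the same route as the paper: run $\Bucket(s)$ with $s=\frac{9}{2\eps^2}\ln\frac{6n}{\delta}$, union-bound the $\eps/3$-accuracy events over all $n$ arms (failure probability $\delta/3$), and then argue deterministically for every $t$ that the bucket holding $\armstar(W,t)$ is still occupied by a valid arm, so the output comes from a bucket at least as high. The differences are cosmetic: you chain empirical means directly rather than bucket indices $b_i, b'_i$, and you justify explicitly that replacements are only by newer (hence still valid) arms, whereas the paper simply asserts the bucket is nonempty. The paper's final claimed gap of $\frac{2}{3}\eps$ is also slightly overstated; both its chain and yours actually give a gap $<\eps$, which is all the lemma needs.
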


The algorithm in \Cref{lem:efficient-algorithm-for-strong-eps-pure exploration} is the only algorithm in this paper that
requires knowing $n$ in advance. 
Here, we need to pay an extra $O(\log{n})$ factor in the sample complexity of each arm. 
Such a bound is also necessary, as we prove in \Cref{lem:strong-eps-pure exploration-lb}.


\section{Regret Minimization in Sliding-Window Streaming MABs}
\label{sec:regret-minimization}
In this section, we investigate \emph{regret minimization} for sliding-window streaming multi-armed bandits (MABs). Recall that in \Cref{def:regret-epoch}, we defined regret minimization with the concepts of \emph{epoch-wise} regret. 
Here, we have \( n - W + 1 \)  epochs, and we must perform \( T_j \) pulls in each epoch. The question is how to minimize the cumulative regret over the entire horizon $[T]$.

The most natural idea is to adapt strategies in streaming MABs, e.g.,~\cite{Wang23Regret}, to get a low regret algorithm. 
When a new arm arrives, we can use \Cref{alg:efficient-algorithm-for-eps-exploration} with parameter \(s= O(\frac{1}{\eps^2} \log n)\). 
By the guarantees of the streaming algorithm, we will be able to get $\eps$-best arms at any step with high probability.
This strategy incurs a regret of \( O(\frac{1}{\eps^2} \log n ) \) when identifying the $\eps$-best arm during each epoch. Additionally, there is a regret of \( O (\eps T_j ) \) for the remaining pulls on the $\eps$-best arm we identify within each epoch. As a result, the total regret is \( O(\frac{n}{\eps^2} \log n + \eps T) \).
The regret is minimized by choosing \( \eps = O(\sqrt[3]{\frac{n \log n}{T}}) \), which gives a total regret of \( O(T^{\frac{2}{3}} (n \log n)^{\frac{1}{3}}) \).

Unfortunately, this strategy has a fatal issue: \Cref{alg:efficient-algorithm-for-eps-exploration} requires \( O\left(\frac{1}{\eps}\right) \) memory space; and since in most cases \( T \gg n \gg W \), the memory of $1/\eps = O(\sqrt[3]{\frac{T}{n\log{n}}})$ can be much larger than the window size $W$. 
Thus, it is not immediately clear whether we could get low-regret algorithms with small memory in this setting.
It turns out the issue is not just an artifact: we prove a strong lower bound, showing that a total regret of \( O\left(\frac{T}{W^2}\right) \) is unavoidable if we only have \( o(W) \) space.

\begin{theorem}
	\label{thm:lower-bound-for-sliding-window-MAB-regret-epoch-wise}
	There exists a family of streaming stochastic multi-armed bandit instances such that, for any given parameters \( T \), \( n \), and \( W \), where \( T \geq n \geq 16W \), there exists a collection of budgets $\{T_j\}_{j=1}^{n-W+1}$ such that any single-pass streaming algorithm for a sliding-window stream of length \( n \) with a window size \( W \) and a memory capacity of \(\frac{W-1}{2}\) arms must incur a total expected regret given by
	$\Expectation{R_T} \geq \frac{T}{64W^2}$.
	
	Furthermore, there exists an algorithm that given $n$ arms arriving in a stream and parameters $W$ and $\{T_j\}_{j=1}^{n-W+1}$, achieves $O(\sum_{j=1}^{n-W+1}\sqrt{T_j \cdot W})$ total regret with a memory of $W$ arms. 
\end{theorem}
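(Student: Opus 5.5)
The proof has two parts: a lower bound and an upper bound. For the lower bound I would adapt the descending-means construction of \Cref{thm:lower-bound-for-sliding-window-MAB-1}, but randomize it so that an algorithm with only $\frac{W-1}{2}$ slots cannot guess which arm will become optimal. By Yao's minimax principle it suffices to consider deterministic algorithms against a hard input distribution.

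\textbf{Instance.} Split the stream into blocks of length $W$. Within each block, the arms have means that are distinct multiples of $\frac{1}{2W}$ inside a constant range, arranged in a uniformly random order among the top levels. Give each arm a deterministic (Bernoulli) reward equal to its mean, so that a single pull reveals it. This gives away all information, which only strengthens the lower bound: identifying arms costs nothing, and the only bottleneck is memory.

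\textbf{Budgets.} Set $T_j = 0$ except at epochs $j$ that lie about half a window after a block boundary, and spread $T$ evenly over these epochs. There are $\Theta(n/W)\ge 8$ of them, since $n\ge 16W$.

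\textbf{Key counting step.} Consider the moment when the block ends. Memory holds at most $\frac{W-1}{2}$ of the $W$ block arms. As the window slides, the arms of this block expire in order. At a budgeted epoch, the best valid arm is the best among the surviving arms of the block together with the newly arrived arms. I would arrange the next block's means to be lower, as in the descending construction, so that the optimum must come from the old block. The random permutation then ensures that, with constant probability (at least $1/2$ by counting), the optimal surviving old arm was not stored. When it was not stored, every pull in that epoch has gap at least $\frac{1}{2W}$.

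\textbf{Calibration.} The losing probability should be at least about $1/4$, and the budgeted epochs should carry a constant fraction of $T$. With gap per pull $\ge \frac{1}{2W}$, this yields regret of order $T/W$, which is stronger than needed. Alternatively, one can use a gap of $\frac{1}{W^2}$-scaled means so that the instance fits with $W$ levels in $[0,1]$ and no overlap across blocks; this gives exactly $\frac{T}{64W^2}$ after collecting constants ($\frac14\cdot\frac14\cdot\frac{1}{4W^2}$-type factors). The main obstacle I expect is making the ``optimal unstored arm with constant probability'' argument rigorous across blocks. Adaptive storage decisions made during the block can use the revealed means, so the randomness must lie in \emph{future} information. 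I would handle this by choosing which old-block arm is optimal at the budgeted epoch via the expiry time. Concretely, the means inside a block are descending, as in the exact-exploration lower bound, and the budgeted epoch is a uniformly random position in the next window, with $T_j$ fixed but the instance shifted randomly. Then the optimal arm is $\arm_{t-W+1}$ for a uniformly random $t$, and it is stored with probability at most $\frac{1}{2}$.

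For the upper bound, store the whole window using $W$ memory. Within epoch $j$, run a minimax-optimal bandit algorithm such as MOSS, restarted afresh on the $W$ valid arms $\{\arm_i\}_{i=j-W+1}^j$ with horizon $T_j$. Its regret against $\mustar(W,j)$ is $O(\sqrt{W T_j})$, because the comparator is fixed throughout the epoch. Summing over the epochs gives $O\big(\sum_{j}\sqrt{T_jW}\big)$, and the memory is exactly the $W$ valid arms, since expired arms are dropped.
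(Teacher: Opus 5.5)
Your upper bound is correct and is the paper's argument: run a minimax-optimal algorithm such as INF or MOSS afresh in each epoch on the $W$ stored valid arms, then sum. The lower bound, however, has a genuine gap, and it is caused by concentrating $T$ on $\Theta(n/W)$ fixed epochs.

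Because $\{T_j\}$ is fixed in advance, the algorithm may be tailored to it. At a budgeted epoch $t$ it needs exactly one arm, the best in $[t-W+1,t]$. With budgeted epochs about $W$ apart, keeping that arm costs $O(1)$ memory unless its identity is hidden until after it must be discarded. None of your constructions hides it.
\begin{itemize}
\item In your final construction (a descending block followed by lower arms), the optimal arm is $\arm_{t-W+1}$, and its index is pinned down by the fixed budget. A random shift of the instance does not make $t$ random. The algorithm just keeps $\arm_{t-W+1}$ from its arrival until epoch $t$, so ``stored with probability at most $\frac12$'' is false.
\item The random-permutation variant fails for the reason you note yourself: with revealed means, a running maximum over indices $\geq t-W+1$ needs one slot.
\item The ``at least $\frac12$ by counting'' step is also wrong for your placement. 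Half a window past the boundary, only about $W/2$ old arms are still valid, and $\frac{W-1}{2}$ slots cover all but $O(1)$ of them. So even a blind algorithm (strictly, $T_j=0$ forbids any pull in those epochs) misses the best arm only with probability $O(1/W)$.
\item The calibration paragraph quotes constants without any derivation.
\end{itemize}

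The missing idea is to spread the budget so that memory is genuinely the bottleneck, and to put the randomness in \emph{future} arms. The paper takes $T_j=\frac{T}{n-W+1}$ for every epoch and uses periods of length $2W$. Each period is a descending block $\mu_{k\cdot 2W+j}=1-\frac{j}{W}$, followed by $W$ arms of common mean $1-\frac{2h+1}{2W}$ with $h$ uniform in $[W]$.

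Then $\arm_{k\cdot 2W+h}$ is the best arm at epoch $k\cdot 2W+h+W-1$, and if it is not accessible, every pull there has gap at least $\frac{1}{2W}$. Every arm of the descending block could be this critical arm. Yet at least half of the block has been irrevocably dropped by time $k\cdot 2W+W$, before $h$ is revealed. Hence the critical arm is missing with probability at least $\frac12$ in each period, regardless of the past.

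Each such period costs $\frac{1}{2W}\cdot\frac{T}{n-W+1}$. A Chebyshev bound shows that at least $\frac{m}{4}$ of the $m=\lfloor\frac{n}{2W}\rfloor$ periods are hit with probability at least $\frac12$, which yields $\frac{T}{64W^2}$.
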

At a high level, our lower bound is obtained by constructing \( W \) arms whose means decrease by a \(\frac{1}{W}\) factor followed by \( W \) arms with the same mean, and the pattern is repeated over the stream. 
Since we can only store at most half of these arms, if the best arm in the epoch is missed, the regret for each pull will be at least \(\frac{1}{2W}\). This leads to a total regret of \(\Omega\left(\frac{T}{W^2}\right)\). 
Our upper bound is obtained by running UCB-based algorithms on each window. By a simple application of the Cauchy-Schwarz inequality, we can show that the worst-case regret is at most $O(\sqrt{W\cdot (n-W+1)\cdot T})$.

We also remark that our algorithm does \emph{not} require the full sequence
$\{T_j\}_{j=1}^{n-W+1}$ to be known in advance. 
The algorithm only needs to know $T_j$ when the $j$-th epoch is reached, and the bound
$O(\sum_{j=1}^{n-W+1}\sqrt{W T_j})$ holds for any realized sequence satisfying
$\sum_{j=1}^{n-W+1}T_j=T$. This represents a certain level of ``adaptivity'' of our algorithm.

\newcommand{\Bern}{\ensuremath{\textsf{Bern}}\xspace}

\section{Experiments}
\label{sec:experiment}
We conduct experiments focusing on both $\varepsilon$-exploration and regret minimization in the sliding-window streaming setting. Our primary empirical finding is that, consistent with our theoretical results, both the $\eps$-exploration and regret minimization algorithms exhibit trade-offs between memory usage and quality/regret. Furthermore, our algorithms significantly outperform baseline algorithms using (simple adaptations of) streaming MABs algorithms. 


We briefly discuss the experiments related to the $\varepsilon$-exploration and regret minimization algorithms in epoch-wise settings. Additional experimental results can be found in \Cref{sec:additional-experiments}.

\noindent
\paragraph{The data.} 
We use both \textbf{synthetic data} and \textbf{real-world data} with streams of arms to conduct our experiments.
Due to space limits, we focus on the experiments with synthetic data in this section, and defer the results with real-world datasets to \Cref{sec:additional-experiments}. 
For the synthetic data, we use different types of instances for exploration and regret minimization as follows.

\begin{itemize}[noitemsep, topsep=0pt, leftmargin=*]
    \item For exploration, we sample $n$ arms with the distribution $\Bern(p)$ such that $p$ is from a uniform distribution\footnote{We use $\Bern{(p)}$ to denote Bernoulli distribution with mean $p$.}. We note that the ``uniform'' types of instances are more suitable for $\eps$-exploration since the quality decrement of the returned arms could be better captured. We use $n\in\{1000,2000,5000,10000\}$ and $W\in \{ 20, 50, 100, 200\}$ for $\eps$-exploration experiments.
    
    \item For regret minimization, we need instance distributions \emph{consistent} with our instance distribution in \Cref{sec:regret-minimization}. For the epoch-wise regret minimization, we sample $n-n/W$ arms with distribution $\Bern(0.25)$ and $n/W$ arms with distribution $\Bern(0.95)$. We then permute the arms uniformly. Due to constraints on running time, we use $n\in\{500,1000\}$ and $W\in \{20, 50\}$ for regret minimization experiments.
\end{itemize}

\noindent
\paragraph{The algorithms.} We conduct experiments for both exploration and regret minimization. To mitigate the noise from randomness, for each parameter setting, we conduct \textbf{$10$ independent runs of experiments and take the average}, and we report the error ranges.

We first implement and test our $\eps$-exploration algorithm (\Cref{alg:efficient-algorithm-for-eps-exploration}). For our baseline, we use an adaptation of the streaming algorithm as follows: we utilize a top-$k$ algorithm that employs $k$ memory slots to store the $k$ arms with the highest rewards up to the current timestamp. 
The sampling strategy is to simply sample each arm $O(n\log{n}/\eps^2)$ times, and return the top $k$ arms with the highest empirical rewards.
We further remove any arm from memory if it expires, in accordance with our sliding-window setting. 
The baseline algorithm returns the unexpired arm with the highest empirical reward from its memory; if there are no valid arms in memory, it does not return anything.

To implement the regret minimization algorithm with arbitrary memory size (which might be smaller than $W$), we adapt the algorithm in \Cref{sec:regret-minimization} in the following manner. 
In cases where the number of arms \( m \) is less than the \( W \)-arm memory capacity, we implement reservoir sampling as follows. 
Once the memory is full, for each new arm that arrives, we flip a coin with a bias of \( m/t \) (where \( t \) is the index of the arriving arm) to determine whether to include the arm in memory. If we decide to admit the new arm, we randomly discard one of the existing arms currently in memory. 
For our baseline, we utilize the same algorithm we employed for exploration, which retains the top-\( k \) arms up to the current timestamp. We then exploit the remaining pulls by selecting the valid arm that has the highest empirical reward. The value of $\eps$ of the baseline algorithm is calculated from the available memory, i.e., we have a memory of $1/\eps$ arms.

The baseline algorithms used in our experiments are
adaptations of existing streaming algorithms. 
In particular, both the
exploration and regret-minimization baselines rely on the same streaming
exploration principle of maintaining a small set of arms with good empirical rewards.
However, algorithms designed for the streaming setting cannot be directly used in the sliding-window model due to the expiration of arms. 
As such, our adaptations are the ``natural heuristics'' based on existing streaming algorithms.
Our experiments show that, even with these adaptations, the performances of the streaming
algorithms still demonstrate a large gap compared to the algorithms designed specifically for the
sliding-window setting.

\begin{figure}[!h]
\centering
\begin{subfigure}{0.45\textwidth}
  \includegraphics[width=\textwidth]{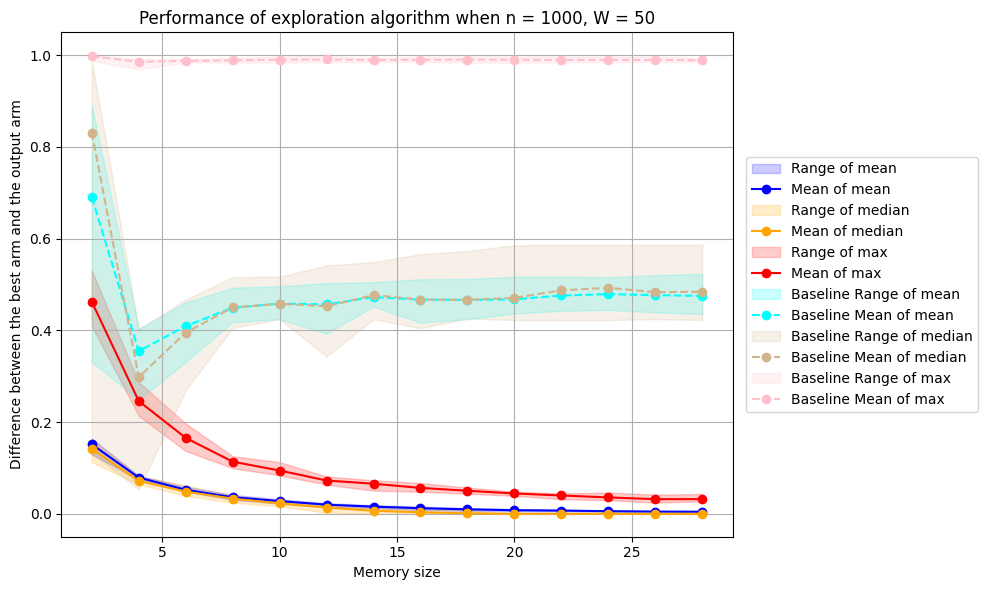}
  \label{fig:synthetic-exploration-n-1000-w-50-showcase}
\end{subfigure}
\begin{subfigure}{0.45\textwidth}
  \includegraphics[width=\textwidth]{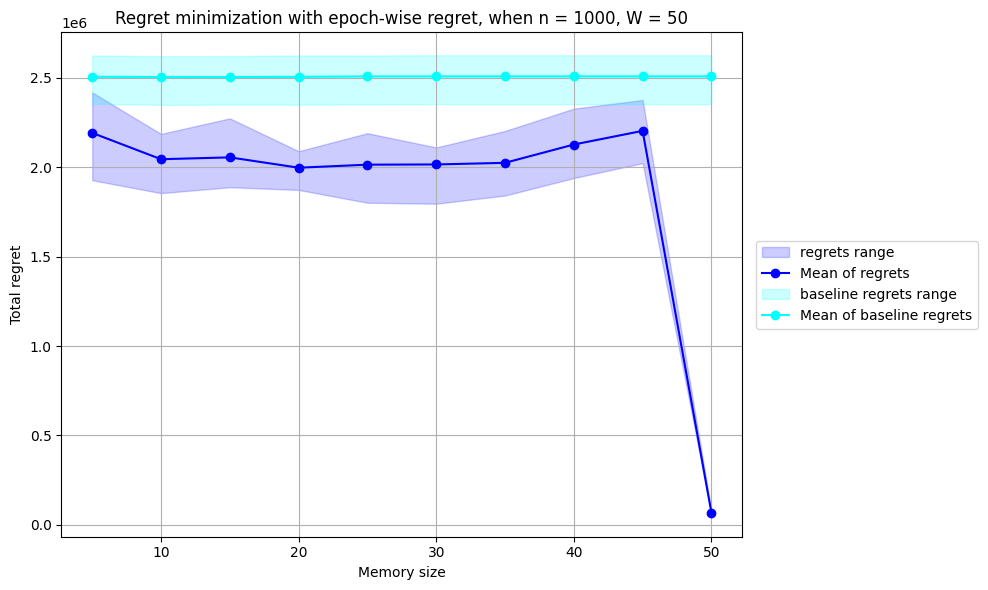}
  \label{fig:synthetic-regret-epoch-n-1000-w-50-showcase}
\end{subfigure}
\caption{The performances of $\eps$-exploration and regret minimization for synthetic data, $n=1000$, $W=50$.}
\label{fig:sampled-synthetic-experiments}
\end{figure}

\noindent
\paragraph{Summary of the experiments.} Due to space limits, we show and discuss the empirical results for $\eps$-exploration and regret minimization on synthetic data for the $n=1000$ and $W=50$ case. Additional results are given in \Cref{sec:additional-experiments}.

\Cref{fig:sampled-synthetic-experiments} illustrates the max, median, and mean 
differences between the reward of the arm returned by the algorithm and the actual best arm across all timestamps (top) and the memory-regret trade-offs of the algorithms (bottom).
The figures clearly demonstrate a significant gap between the baseline algorithms and our algorithms. 
In particular, for both problems, increasing the memory of the baseline algorithms does \emph{not} bring significant benefit for the performances.
This gap highlights the crucial distinction between vanilla streaming multi-armed bandits (MABs) and sliding-window MABs. 
It suggests that the concept of expiration is vital in the sliding-window model, necessitating the development of specialized sliding-window algorithms to achieve optimal performance. 

\FloatBarrier

\section{Conclusion and Future Work}
In this work, we provide a comprehensive study of multi-armed bandits (MABs) in the sliding-window model. 
Our results establish the fundamental hardness of online learning in the sliding-window MABs model, and we provide important insights for exploration and regret minimization, which can be extremely useful in practice.
For instance, by our conceptual message, for exploration tasks in the sliding-window model, we should use $\eps$-exploration rather than finding the \emph{exact} best arm. 
There are several open directions to follow up on our work. For instance, one appealing question is the \emph{multi-pass} setting: if the algorithm is allowed to make multiple passes over the stream, it might be possible for the algorithm to achieve better memory efficiency. 
Another appealing question is to ask about gap-dependent regret bounds in the
sliding-window streaming model.
While gap-dependent bounds are well understood
in centralized MABs \cite{Auer2002FinitetimeAO}, the streaming case is substantially more challenging due
to the memory constraint, and a recent work of~\cite{Ye2025TightGM} is the first to obtain gap-dependent regret bounds in the streaming setting. 
As such, exploring whether similar bounds in~\cite{Ye2025TightGM} can be applied to the sliding-window setting is an intriguing direction.
Finally, the sliding-window model for other variants of MABs, e.g., the linear bandits and combinatorial bandits, can be additional interesting directions to pursue.

\section*{Acknowledgments}
The authors thank the anonymous ICML reviewers for helpful comments. 
Part of the work was done when Chen Wang was at Rice University and Texas A\&M University.
Vladimir Braverman is supported in part by NSF CNS-2528780.
Samson Zhou is supported in part by NSF CCF-2335411. 
Samson Zhou gratefully acknowledges funding provided by the Oak Ridge Associated Universities (ORAU) Ralph E. Powe Junior Faculty Enhancement Award. 

\bibliography{citations}
\bibliographystyle{alpha}

\newpage
\appendix
\onecolumn

\section{Technical Preliminaries}
\label{sec:technical-prelims}
In this section, we provide a number of technical preliminaries for our paper.
\paragraph{Concentration inequalities.} We use some standard concentration inequalities in the proof of our results. We provide these inequalities for completeness.
\begin{proposition}[Chernoff-Hoeffding bound]
	\label{pro:Chernoff-Hoeffding-bound}
	Let $X_1, X_2, \cdots, X_m$ be a sequence of independent discrete random variables bounded in the range $[0,1]$. Define $S_m = \sum_{i=1}^m X_i$, then
	\[ \Probability{|S_m - \Expectation{S_m}| \ge t} \le 2 \cdot \exp \Parentheses{-\frac{2t^2}{m}}. \]
\end{proposition}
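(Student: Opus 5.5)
The statement immediately above is the Chernoff--Hoeffding bound (\Cref{pro:Chernoff-Hoeffding-bound}). I would prove it by the standard exponential-moment (Chernoff) method combined with Hoeffding's lemma. Write $Y_i = X_i - \Expectation{X_i}$, so each $Y_i$ has mean zero and lies in an interval of length $1$, namely $[-\Expectation{X_i}, 1-\Expectation{X_i}]$. It suffices to bound the upper tail $\Probability{S_m - \Expectation{S_m} \ge t}$ by $\exp(-2t^2/m)$. Applying the same bound to the variables $1-X_i$, which also lie in $[0,1]$, gives the lower tail, and a union bound over the two tails yields the factor $2$.

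For the upper tail, I fix $\lambda>0$ and apply Markov's inequality to $e^{\lambda(S_m-\Expectation{S_m})}$:
\[ \Probability{S_m-\Expectation{S_m}\ge t}\le e^{-\lambda t}\,\Expectation{e^{\lambda\sum_i Y_i}}=e^{-\lambda t}\prod_{i=1}^m\Expectation{e^{\lambda Y_i}}, \]
where the product form uses independence. The key ingredient is Hoeffding's lemma: for a mean-zero $Y$ supported in $[a,b]$, one has $\Expectation{e^{\lambda Y}}\le \exp(\lambda^2(b-a)^2/8)$. With $b-a=1$ this gives the bound $e^{-\lambda t+m\lambda^2/8}$. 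Choosing $\lambda=4t/m$ minimizes the exponent and gives exactly $\exp(-2t^2/m)$.

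The main step requiring care is Hoeffding's lemma itself. I would prove it as follows.
\begin{itemize}
\item By convexity of $e^{\lambda y}$ on $[a,b]$, write $e^{\lambda y}\le \frac{b-y}{b-a}e^{\lambda a}+\frac{y-a}{b-a}e^{\lambda b}$.
\item Take expectations using $\Expectation{Y}=0$. Writing $p=-a/(b-a)$ and $u=\lambda(b-a)$, this gives $\Expectation{e^{\lambda Y}}\le e^{L(u)}$ with $L(u)=-pu+\ln(1-p+pe^u)$.
\item Check that $L(0)=0$ and $L'(0)=0$.
\item Note that $L''(u)=q(1-q)\le 1/4$, where $q=pe^u/(1-p+pe^u)\in[0,1]$.
\item Taylor's theorem with remainder then gives $L(u)\le u^2/8$, which is the lemma.
\end{itemize}
Discreteness of the variables plays no role in this argument, so it covers the stated case.
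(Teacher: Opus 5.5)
Your proof is correct: it is the standard Chernoff exponential-moment argument combined with Hoeffding's lemma. The details all check out:
\begin{itemize}
\item choosing $\lambda = 4t/m$ gives exactly $e^{-2t^2/m}$;
\item the convexity step yields $L(u) = -pu + \ln(1-p+pe^{u})$ with $L(0)=L'(0)=0$ and $L''(u)=q(1-q)\le 1/4$;
\item passing to the variables $1-X_i$ handles the lower tail, and the union bound gives the factor $2$.
\end{itemize}

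The paper does not prove this proposition at all. It quotes it as a standard tool ``for completeness,'' so there is no competing route to compare with.

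One point is worth making explicit. The statement is meant for $t>0$: for sufficiently negative $t$ the left-hand side equals $1$ while $2e^{-2t^2/m}$ drops below $1$, and $t=0$ is trivial. Your requirement $\lambda>0$ uses exactly this.

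You are also right that discreteness is never used. Your argument establishes the bound for arbitrary independent $[0,1]$-valued random variables.
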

We also use the following direct corollaries of the Chernoff-Hoeffding bound.
\begin{proposition}
	\label{pro:cor1-Chernoff-Hoeffding-bound}
	Let $\arm$ be an arm with mean $\mu$. We pull the arm $\frac{K}{\theta^2}$ times to obtain empirical mean $\widehat{\mu}$. Then,
	\[ \Probability{|\mu - \widehat{\mu}| \ge \theta} \le 2 \cdot \exp \Parentheses{-2K}. \]
\end{proposition}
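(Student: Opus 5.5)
The plan is to apply \Cref{pro:Chernoff-Hoeffding-bound} directly to the samples of the arm. Let $m = \frac{K}{\theta^2}$ be the number of pulls. If $K/\theta^2$ is not an integer, I take $m=\lceil K/\theta^2\rceil$; a larger $m$ only strengthens the bound below. Let $X_1,\dots,X_m$ be the returned rewards. By \Cref{def:MAB}, these are independent draws from a distribution supported on $[0,1]$ with mean $\mu$. Hence $S_m=\sum_{i=1}^m X_i$ satisfies $\Expectation{S_m}=m\mu$ and $\widehat{\mu}=S_m/m$.

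The only step is to rewrite the event in terms of $S_m$. Multiplying through by $m$ gives
\[
\{|\mu-\widehat{\mu}|\ge\theta\}=\{|S_m-\Expectation{S_m}|\ge m\theta\}.
\]
I then invoke \Cref{pro:Chernoff-Hoeffding-bound} with $t=m\theta$. This gives the bound $2\exp\Parentheses{-\frac{2m^2\theta^2}{m}}=2\exp\Parentheses{-2m\theta^2}$. Since $m\theta^2\ge K$, this is at most $2\exp(-2K)$, which is the claim.

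I do not expect a real obstacle. The only points needing care are the following.
\begin{itemize}
\item The rounding of $m$: using the ceiling keeps the inequality $m\theta^2\ge K$ in the right direction.
\item The form of \Cref{pro:Chernoff-Hoeffding-bound}: it is stated for discrete variables, whereas the Hoeffding inequality holds for arbitrary independent $[0,1]$-valued variables. I would remark that the same bound applies verbatim, so that general reward distributions on $[0,1]$ are covered.
\end{itemize}
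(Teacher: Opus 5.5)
Your proof is correct and matches the paper, which states this proposition as a direct corollary of \Cref{pro:Chernoff-Hoeffding-bound} without a separate proof; applying that bound with $t=m\theta$ is exactly the intended one-line argument. Your added remarks are sound: rounding $m$ up to $\lceil K/\theta^2\rceil$ keeps $m\theta^2\ge K$, and Hoeffding applies to arbitrary independent $[0,1]$-valued rewards, not just discrete ones.
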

\begin{proposition}
	\label{pro:cor2-Chernoff-Hoeffding-bound}
	Let $\arm_1$ and $\arm_2$ be two different arms with means $\mu_1$ and $\mu_2$. Suppose $\mu_1 - \mu_2 \ge \theta > 0$ and we pull each arm $\frac{K}{\theta^2}$ times to obtain empirical rewards $\widehat{\mu}_1$ and $\widehat{\mu}_2$. Then,
	\[ \Probability{\widehat{\mu}_1 \le \widehat{\mu}_2} \le 2 \cdot \exp \Parentheses{-\frac{K}{4}}. \]
\end{proposition}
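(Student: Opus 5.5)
The plan is to reduce the comparison event to two one-sided deviation events, one per arm, and then apply \Cref{pro:cor1-Chernoff-Hoeffding-bound} to each arm with accuracy $\theta/2$.

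\textbf{Step 1 (splitting the event).} Since $\mu_1 - \mu_2 \geq \theta$, if $\widehat{\mu}_1 > \mu_1 - \theta/2$ and $\widehat{\mu}_2 < \mu_2 + \theta/2$, then
\[
\widehat{\mu}_1 > \mu_1 - \theta/2 \geq \mu_2 + \theta/2 > \widehat{\mu}_2 .
\]
Taking the contrapositive, the event $\{\widehat{\mu}_1 \leq \widehat{\mu}_2\}$ is contained in
\[
\{|\mu_1 - \widehat{\mu}_1| \geq \theta/2\} \cup \{|\mu_2 - \widehat{\mu}_2| \geq \theta/2\}.
\]

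\textbf{Step 2 (bounding each deviation).} Each arm is pulled $\frac{K}{\theta^2} = \frac{K/4}{(\theta/2)^2}$ times. So \Cref{pro:cor1-Chernoff-Hoeffding-bound}, applied with accuracy $\theta/2$ and constant $K/4$, gives
\[
\Probability{|\mu_i - \widehat{\mu}_i| \geq \theta/2} \leq 2\exp(-K/2) \quad \text{for each } i \in \{1,2\}.
\]
A union bound over the two arms then yields $\Probability{\widehat{\mu}_1 \leq \widehat{\mu}_2} \leq 4\exp(-K/2)$.

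\textbf{Step 3 (matching the stated constant).} The only real obstacle is that $4\exp(-K/2)$ is not literally the claimed bound $2\exp(-K/4)$. I would close the gap with a case split on $K$.
\begin{itemize}
\item If $K \geq 4\ln 2$, then $\exp(K/4) \geq 2$. Hence $4\exp(-K/2) = 2\exp(-K/4)\cdot 2\exp(-K/4) \leq 2\exp(-K/4)$, as required.
\item If $K < 4\ln 2$, then $2\exp(-K/4) > 2\cdot\frac{1}{2} = 1$, so the bound holds trivially because it exceeds every probability.
\end{itemize}

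An alternative route is to use the one-sided Hoeffding inequality for each arm, which directly gives a total of $2\exp(-K/2) \leq 2\exp(-K/4)$. However, only the two-sided version is stated in \Cref{pro:Chernoff-Hoeffding-bound}, so I would present the case-split argument above.
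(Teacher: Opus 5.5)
Your proof is correct. The midpoint split uses $\mu_1-\theta/2\ge\mu_2+\theta/2$. Applying \Cref{pro:cor1-Chernoff-Hoeffding-bound} with accuracy $\theta/2$ and constant $K/4$, followed by the union bound, gives $4\exp(-K/2)$. The case split on $K\ge 4\ln 2$, which is exactly when $2\exp(-K/4)\le 1$, then correctly converts this into the stated $2\exp(-K/4)$. The union bound also has the advantage of needing no independence between the two arms' samples.

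The paper gives no proof of its own; it only lists the statement as a ``direct corollary'' of \Cref{pro:Chernoff-Hoeffding-bound}, so your argument supplies the missing details.

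You can avoid the case split while still using only the stated two-sided bound. Let $m=K/\theta^2$, and let $X_i,Y_i$ be the $i$-th samples of the two arms. Apply \Cref{pro:Chernoff-Hoeffding-bound} to the rescaled paired differences $(X_i-Y_i+1)/2\in[0,1]$ with deviation $m\theta/2$. This gives $2\exp(-K/2)\le 2\exp(-K/4)$ in one step, and shows that the constant in the statement is loose.
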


{
We emphasize that while we assumed the support of each arm is within the range \([0, 1]\), all the algorithmic results in this paper also apply if the arms follow a \(\sigma^2\)-sub-Gaussian distribution for $\sigma=O(1)$. This is possible because the probability guarantee for the case where the arms have support in \([0, 1]\) is derived from the Chernoff-Hoeffding bound (\Cref{pro:Chernoff-Hoeffding-bound}). 
Although a sub-Gaussian distribution does not satisfy the conditions for the Chernoff-Hoeffding bound, it still satisfies a similar inequality that can provide a probability guarantee as well. We chose to define the support of all arms as \([0, 1]\) to align with previous research and for the sake of simplicity.
}



\section{The Complete Details for Results in  \texorpdfstring{\Cref{sec:eps-pure exploration}}{eps-pure exploration}}
In this section, we provide the complete details (missing algorithms and analysis) we discussed in \Cref{sec:eps-pure exploration}. 


\subsection{The analysis of \texorpdfstring{\Cref{thm:efficient-algorithm-for-eps-exploration}}{pointer} and \texorpdfstring{\Cref{alg:efficient-algorithm-for-eps-exploration}}{pointer}}

We now proceed to the analysis of \Cref{alg:efficient-algorithm-for-eps-exploration}.
The following lemma establishes the \textit{space complexity} of the algorithm.

\begin{lemma}
	\label{clm:space-complexity-of-efficient-algorithm-for-eps-exploration}
	The \textit{space complexity} of $\Bucket \left( \frac{9}{2\eps^2} \ln{\frac{6W}{\delta}} \right)$ is $\O{\frac{1}{\eps}}$.
\end{lemma}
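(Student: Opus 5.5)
The plan is to count how many arms the algorithm can hold at any moment, using only the structure of \Cref{alg:efficient-algorithm-for-eps-exploration}. The algorithm initializes $N = 3/\eps$ buckets $B_1,\dots,B_N$, and its only stored objects are the contents of these buckets. The claim to establish is the invariant that \emph{every bucket holds at most one arm at every step}. Then the number of stored arms never exceeds $N$.

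To prove the invariant, I would induct over the arrivals. Initially all buckets are empty. When $\arm_i$ arrives, it is pulled $s$ times, and only the arriving arm, which need not be stored, is accessed during these pulls. It is then placed in the unique $B_j$ with $(j-1)\eps/3 < \widehat{\mu}_i \le j\eps/3$, and whatever $B_j$ previously contained is discarded, so $B_j$ again holds exactly one arm. Every other bucket is unchanged or loses an expired arm, which can only shrink its contents. Computing $\widehat{\arm}_i$ reads the buckets without storing anything new. Hence the invariant persists, and the memory is at most $N + O(1) = O(1/\eps)$, where the $O(1)$ accounts for the arriving arm currently being processed.

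The main obstacle is a well-definedness detail: since rewards lie in $[0,1]$, we have $\widehat{\mu}_i \in [0,1]$, and I must check that each such value falls in some bucket index $j \in [N]$. An empirical mean exactly $0$ satisfies no strict inequality $(j-1)\eps/3 < 0$. I would handle this by convention, placing it in $B_1$, or equivalently by letting the first interval be closed. If $3/\eps$ is not an integer, I would take $N = \lceil 3/\eps \rceil$ so that $N\eps/3 \ge 1$ and the buckets cover $[0,1]$.

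Neither fix changes the bound $O(1/\eps)$. The count is also independent of $s$, so the same argument covers both parameter choices (weak and strong) of $\Bucket(s)$.
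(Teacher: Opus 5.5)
Your proposal is correct and matches the paper's proof: each bucket holds at most one arm, because storing a new arm in $B_j$ discards the previous occupant, so memory is bounded by $N = 3/\eps = O(1/\eps)$. Your extra remarks are harmless refinements that the paper omits: placing $\widehat{\mu}_i = 0$ in $B_1$, taking $N=\lceil 3/\eps\rceil$, and adding $O(1)$ for the arriving arm.
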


\begin{proof}
	Since we discard the previous arm when storing a new arm in a bucket, each bucket will contain at most one arm during the execution of the algorithm. Therefore, the space complexity of the algorithm is bounded by $N = \frac{3}{\eps}$, which is $\O{\frac{1}{\eps}}$.
\end{proof}

The following lemma provides a bound on the sample complexity of the algorithm.

\begin{lemma}
	\label{clm:sample-complexity-of-efficient-algorithm-for-eps-exploration}
	The \textit{sample complexity} of $\Bucket \left( \frac{9}{2\eps^2} \ln{\frac{6W}{\delta}} \right)$ is $\O{\frac{n}{\eps^2} \log \frac{W}{\delta}}$.
\end{lemma}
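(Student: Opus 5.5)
The plan is a direct count of arm pulls in \Cref{alg:efficient-algorithm-for-eps-exploration}. First, I will check which pulls the algorithm actually makes. In each iteration of the loop, the only line that pulls an arm is the first one: the arriving arm $\arm_i$ is pulled exactly $s$ times to form $\widehat{\mu}_i$. The remaining steps of the iteration make no pulls. Placing $\arm_i$ into bucket $B_j$, discarding the previous occupant and any expired arms, and choosing $\widehat{\arm}_i$ as the arm in the highest nonempty bucket all use only the empirical means already computed. In particular, arms stored in memory are never re-sampled.

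It follows that each of the $n$ arms is pulled exactly $s$ times, once at its arrival, and never again. The total sample complexity is therefore exactly
\[
n\cdot s \;=\; n\cdot \frac{9}{2\eps^2}\ln\frac{6W}{\delta}.
\]
Since $\ln\frac{6W}{\delta} = \ln 6 + \ln\frac{W}{\delta} = O\!\left(\log\frac{W}{\delta}\right)$ whenever $W/\delta \ge 2$, this quantity is $O\!\left(\frac{n}{\eps^2}\log\frac{W}{\delta}\right)$. The condition $W/\delta \ge 2$ holds automatically because $W\ge 1$ and $\delta<1/2$ in any meaningful regime. If $\delta$ close to $1$ has to be covered as well, the additive constant can be absorbed by noting that $\log\frac{W}{\delta}$ is bounded below by a positive constant, or by a convention of the form $\log(\cdot)\ge 1$.

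There is essentially no real obstacle. The only point that needs care is confirming that the output step and the eviction step never trigger extra pulls, so that the bound is linear in $n$ with no dependence on the memory size $N=3/\eps$. The same counting applied to the strong variant, with $s=\frac{9}{2\eps^2}\ln\frac{6n}{\delta}$, gives the sample bound $O\!\left(\frac{n}{\eps^2}\log\frac{n}{\delta}\right)$ claimed in \Cref{lem:efficient-algorithm-for-strong-eps-pure exploration}.
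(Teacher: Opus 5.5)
Your proof is correct and uses the same direct count as the paper: each arriving arm is pulled exactly $s=\frac{9}{2\eps^2}\ln\frac{6W}{\delta}$ times and never again, so the total is $n\cdot s=O\!\left(\frac{n}{\eps^2}\log\frac{W}{\delta}\right)$. Your check that bucket updates and output selection make no pulls fills in a detail the paper leaves implicit; one small caveat is that your claim that $\log\frac{W}{\delta}$ is bounded below by a positive constant fails in the degenerate case $W=1$, $\delta\to 1$, where only your fallback convention lets you absorb the $\ln 6$.
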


\begin{proof}
	Since we sample each arm $l = \frac{9}{2\eps^2} \ln{\frac{6W}{\delta}}$ times within the algorithm, the \textit{sample complexity} is given by $n \cdot l = \O{\frac{n}{\eps^2} \log \frac{W}{\delta}}$.
\end{proof}

Finally, we prove the correctness of the algorithm.
\begin{lemma}
	\label{clm:validity-of-efficient-algorithm-for-eps-exploration}
	At any time $t \in [n]$, the arm $\widehat{\arm}_t$ outputted by the algorithm $\Bucket \left( \frac{9}{2\eps^2} \ln{\frac{6W}{\delta}} \right)$ is an $\eps$-best arm for $\arm^\ast(t, W)$ with a probability of at least $1-\delta$.
\end{lemma}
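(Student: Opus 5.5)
Fix a time $t\in[n]$. The plan is to define a good event $\calE_t$ over the at most $W$ arms in the window $\{\arm_i\}_{i=t-W+1}^t$, show that it holds with probability at least $1-\delta$, and then argue deterministically on $\calE_t$ that the output $\widehat{\arm}_t$ is $\eps$-best. Let $\calE_t$ be the event that every arm $\arm_i$ with $i\in[t-W+1,t]$ has $|\widehat{\mu}_i-\mu_i|<\eps/3$. Each arm is pulled $s=\frac{9}{2\eps^2}\ln\frac{6W}{\delta}=\frac{K}{(\eps/3)^2}$ times with $K=\frac12\ln\frac{6W}{\delta}$. By \Cref{pro:cor1-Chernoff-Hoeffding-bound} with $\theta=\eps/3$, a single arm fails with probability at most $2e^{-2K}=2\cdot\frac{\delta}{6W}=\frac{\delta}{3W}$. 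A union bound over the $W$ window arms then gives $\Pr[\neg\calE_t]\le\delta/3\le\delta$. Only arms in the window matter for the output at time $t$, since every stored arm is valid, and this is why $\log W$ suffices rather than $\log n$.

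Next I argue on $\calE_t$. Let $\armstar=\arm_{i^\ast}$ be the best valid arm, with mean $\mustar$, and suppose it was placed in bucket $B_{j^\ast}$ at time $i^\ast$. The key invariant is that at time $t$ some valid arm sits in a bucket of index at least $j^\ast$. Indeed, $\armstar$ is either still stored in $B_{j^\ast}$, or it was replaced by a later-arriving arm in the same bucket. A later arm arrived after $i^\ast\ge t-W+1$, so it is itself valid at time $t$. The same reasoning applies to any arm replacing it in turn, so bucket $B_{j^\ast}$ is nonempty and holds a valid arm.

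Hence the returned arm $\widehat{\arm}_t=\arm_k$ lies in a bucket $B_{j}$ with $j\ge j^\ast$, so its empirical mean satisfies $\widehat{\mu}_k>(j-1)\eps/3\ge(j^\ast-1)\eps/3\ge\widehat{\mu}_{i^\ast}-\eps/3$. Combining this with $\calE_t$, and noting that $\arm_k$ is a valid arm so its estimate is covered by $\calE_t$, gives
\[
\mu_k>\widehat{\mu}_k-\eps/3>\widehat{\mu}_{i^\ast}-2\eps/3>\mustar-\eps .
\]
This is the claim.

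The main obstacle is the invariant step: I must make sure the argmax over buckets only ranges over valid arms. This requires that expired arms are discarded, so empty buckets are skipped, and that replacement within a bucket only ever installs newer arms, which are valid. Two boundary details are minor: buckets are indexed so that $\widehat{\mu}\in(0,1]$ falls into some $B_j$, and the case $\widehat{\mu}\le 0$ can be clamped into $B_1$.
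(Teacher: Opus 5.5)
Your proof is correct and follows the paper's argument. The shared steps are:
\begin{itemize}
\item per-arm accuracy $\eps/3$ via \Cref{pro:cor1-Chernoff-Hoeffding-bound}, with failure probability $\delta/(3W)$;
\item a union bound over the $W$ window arms;
\item the observation that the best arm's bucket stays occupied by a valid arm;
\item the fact that the output comes from the highest nonempty bucket.
\end{itemize}
You package this as a single window-wide event followed by a direct chain of empirical-mean inequalities, whereas the paper uses three events $\calF_1,\calF_2,\calF_3$ and bucket-index arithmetic $|b_m-b_k|\le 2$. Your version is slightly cleaner: it avoids an event tied to the random output index $m$, and it explicitly handles the case where the best arm is overwritten by a newer, still-valid arm in its bucket, which the paper glosses over.
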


\begin{proof}
	At any time $t \in [n]$, let $\arm^\ast(t, W) = \arm_k$ and $\widehat{\arm}_t = \arm_m$. Additionally, let $b_i$ be the index of the correct bucket to which $\arm_i$ belongs; that is, $\mu_i \in (b_i-1)\frac{\eps}{3}, b_i\frac{\eps}{3}]$. We also define $b'_i$ as the index of the bucket where $\arm_i$ is stored upon arrival; thus, $\widehat{\mu}_i \in ((b'_i-1)\frac{\eps}{3}, b'_i\frac{\eps}{3}]$.
	
	Let $\calE_i$ be the event that $|b_i - b'_i| \le 1$, indicating that $\arm_i$ is stored in a bucket close to its correct bucket. By \Cref{pro:cor1-Chernoff-Hoeffding-bound}, we have
    \begin{align*}
	 \Probability{ \neg \calE_i} & = \Probability{|b_i - b'_i| > 1} \\
     & \le \Probability{|\widehat{\mu}_i - \mu_i| > \frac{\eps}{3}}\\
     & \le 2 \cdot \exp \Parentheses{-2l \cdot \frac{\eps^2}{9}} = \frac{\delta}{3W}. 
     \end{align*}
	
	The output $\widehat{\arm}_t$ must be an $\eps$-best arm if the following events occur at time $t$:
	\begin{itemize}
		\item $\calF_1$: There exists some arm stored in $B_{b_k-1} \cup B_{b_k} \cup B_{b_k+1}$;
		\item $\calF_2$: All the buckets $B_j$ for $j > b_k+1$ are empty;
		\item $\calF_3$: $|b_m - b'_m| \le 1$.
	\end{itemize}
	
	We will output an arm from $\{ B_{b_k-1}, B_{b_k}, B_{b_k+1} \}$ if both $\calF_1$ and $\calF_2$ hold. $\calF_3$ guarantees that the output $\widehat{\arm}_t$ is stored in a bucket close to its correct bucket. If all three events occur simultaneously, we have $b'_m \in \{ b_k-1, b_k, b_k+1 \}$ and $|b'_m - b_m| \le 1$. Therefore, $|b_m-b_k| \le 2$, leading to $|\mu_k-\mu_m| \le 3 \cdot \frac{\eps}{3} = \eps$. This means that $\widehat{\arm}_t$ is an $\eps$-approximation of $\arm^\ast(t, W)$.
	
	We can analyze the probabilities of each event:
	
	$\Probability{\calF_1} \ge \Probability{\calE_k} \ge 1- \frac{\delta}{3W}$. This is because if $\calE_k$ occurs, we will store $\arm^\ast(t, W) = \arm_k$ in bucket $B_{b'_k}$, where $|b'_k - b_k| \le 1$. Since $\arm^\ast(t, W)$ is the best arm at time $t$, it cannot expire, implying we will not drop the arm stored in $B_{b'_k}$ due to expiration. Thus,$B_{b'_k}$ remains non-empty.
	
	$\Probability{\calF_2} \ge \Probability{\cup_{j=t-W+1}^t \calE_t} \ge 1- W \cdot \frac{\delta}{3W}$. If all $\calE_j, j \in [t-W+1, t]$ occur, each arm will be stored in a bucket near its correct bucket. Since $\arm^\ast(t, W) = \arm_k$ is the best arm at time $t$, we have $b_j \le b_k$ for any $j \in [t-W+1, t]$. Therefore, $b'_j \le b_k+1$ for any $j$ when $\calE_j$ occurs, thus ensuring all buckets $B_j$ for $j > b_k+1$ are empty.
	
	$\calF_3$ is simply the same event as $\calE_m$, so $\Probability{\calF_3} \ge 1- \frac{\delta}{3W}$.
	
	Consequently, we obtain:
	\[ \Probability{\calF_1 \cap \calF_2 \cap \calF_3} \ge 1 - (2+W) \cdot \frac{\delta}{3W} \ge 1-\delta. \]
\end{proof}
\subsection{The analysis of \texorpdfstring{\Cref{lem:strong-eps-pure exploration-lb}}{pointer}}
We now prove \Cref{lem:strong-eps-pure exploration-lb} with the general $\Omega(\frac{n}{\eps^2}\cdot \log{\frac{n}{W}})$ sample lower bound (this directly implies the lower bound when $W\leq n^{0.99}$).
For ease of presentation, we work with $99/100$ success probability for the lower bound, and a standard argument can generalize the lower bound to work against any algorithm with $1/2+\Omega(1)$ success probability.
 
We employ the \textit{sample complexity} lower bound established by \cite{mannor2004sample} to prove our lemma. We provide the proposition for completeness.

\begin{proposition}[\cite{mannor2004sample}]
	\label{pro:centralized-lower-bound}
	There exist positive constants $c_1, c_2, \eps_0$ and $\delta_0$, such that for every $n \ge 2$, $\eps \in (0, \eps_0)$ and $\delta \in (0, \delta_0)$, and for every algorithm outputs $\eps$-best arm with probability at least $1-\delta$, there exists some $\mu = (\mu_1, \mu_2, \cdots, \mu_n) \in [0,1]^n$ such that
	\[ \EExpectation{\mu}{T} \ge c_1 \frac{n}{\eps^2} \log \Parentheses{\frac{c_2}{\delta}}. \]
	$T$ is the number of pulls used in the algorithm. $\EExpectation{\mu}{T}$ is the expectation of $T$ when arms have means $\mu = (\mu_1, \mu_2, \cdots, \mu_n)$.
	
	In particular, $\eps_0$ and $\delta_0$ can be taken equal to $1/8$ and $\frac{e^{-4}}{4}$, respectively.
\end{proposition}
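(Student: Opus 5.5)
The plan is a change-of-measure argument. We compare one ``base'' instance with $n-1$ perturbed instances, each differing from the base in the distribution of a single arm. We use Bernoulli arms throughout, which is allowed since the statement only requires some $\mu\in[0,1]^n$.

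\textbf{Instances.} Fix $\eps\in(0,1/8)$. Define the base instance $\mu^{(0)}$ by setting arm $1$ to mean $\frac12+\frac32\eps$ and every other arm to mean $\frac12$. For each $l\in\{2,\dots,n\}$, let $\mu^{(l)}$ agree with $\mu^{(0)}$ except that arm $l$ has mean $\frac12+3\eps$. All means lie in $[1/2, 7/8)\subset[0,1]$. The relevant properties are:
\begin{itemize}
\item under $\mu^{(0)}$, arm $1$ is the \emph{unique} $\eps$-best arm, since every other arm has gap $\frac32\eps>\eps$;
\item under $\mu^{(l)}$, arm $l$ is the unique $\eps$-best arm, since arm $1$ has gap $\frac32\eps$ and the other arms have gap $3\eps$.
\end{itemize}
Let $J$ be the arm the algorithm outputs. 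Correctness with probability $1-\delta$ then gives
\[
\Pr_{0}[J=l]\le\delta \qquad\text{and}\qquad \Pr_{l}[J=l]\ge 1-\delta \qquad\text{for every } l\ge 2.
\]
If $\mathbb{E}_{0}[T]=\infty$ there is nothing to prove, so we assume $T<\infty$ almost surely under $\mu^{(0)}$.

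\textbf{Transportation inequality.} The key step is, for each $l\ge 2$,
\[
\mathbb{E}_{0}[T_l]\cdot \mathrm{KL}\bigl(\Bern(\tfrac12)\,\|\,\Bern(\tfrac12+3\eps)\bigr)\;\ge\; \mathrm{kl}\bigl(\Pr_0[J=l],\,\Pr_l[J=l]\bigr),
\]
where $T_l$ is the number of pulls of arm $l$ and $\mathrm{kl}$ is the binary relative entropy. We derive it in two parts.
\begin{itemize}
\item \emph{Divergence decomposition.} The two instances differ only in arm $l$. By Wald's identity, the KL divergence between the laws of the full transcript (observations, internal randomness, and output) under $\mu^{(0)}$ and $\mu^{(l)}$ equals $\mathbb{E}_0[T_l]$ times the per-pull divergence.
\item \emph{Data processing.} Applying the data-processing inequality to the event $\{J=l\}$ bounds that transcript divergence from below by the $\mathrm{kl}$ term.
\end{itemize}
We expect this to be the main technical obstacle, because the stopping time is random and adaptive. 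To keep it clean, we would first truncate at $T\wedge N$, where Wald's identity is safe, and then let $N\to\infty$ by monotone convergence. If a more elementary route is preferred, the fallback is Mannor--Tsitsiklis's original likelihood-ratio argument. It restricts to the event $\{T_l\le 4\mathbb{E}_0[T_l]\}$ intersected with the event that the empirical log-likelihood ratio of arm $l$ is close to its mean, bounds that ratio there by $e^{-O(\eps^2\mathbb{E}_0[T_l])}$, and transfers probability from $\mu^{(0)}$ to $\mu^{(l)}$.

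\textbf{Bounding both sides and summing.} On the right-hand side, monotonicity of $\mathrm{kl}$ gives
\[
\mathrm{kl}(\delta,1-\delta)\;\ge\;\log\frac{1}{2.4\,\delta}\qquad\text{for }\delta<\delta_0=e^{-4}/4,
\]
and this quantity is positive. On the left-hand side, the standard $\chi^2$ bound on Bernoulli KL gives
\[
\mathrm{KL}\bigl(\Bern(\tfrac12)\,\|\,\Bern(\tfrac12+3\eps)\bigr)\le \frac{36\eps^2}{1-36\eps^2}\cdot 4 = O(\eps^2),
\]
which is a constant multiple of $\eps^2$ because $\eps<1/8$. Combining the two, $\mathbb{E}_0[T_l]\ge \frac{c}{\eps^2}\log\frac{1}{2.4\delta}$ for each $l\ge2$. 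Summing over $l=2,\dots,n$ and using $\mathbb{E}_0[T]\ge\sum_{l\ge2}\mathbb{E}_0[T_l]$ gives
\[
\mathbb{E}_0[T]\;\ge\; (n-1)\,\frac{c}{\eps^2}\log\frac{1}{2.4\,\delta}\;\ge\; c_1\frac{n}{\eps^2}\log\frac{c_2}{\delta},
\]
using $n-1\ge n/2$ for $n\ge2$, with $c_2=1/2.4$. This is the claimed bound with $\mu=\mu^{(0)}$.
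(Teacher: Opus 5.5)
Your argument is correct. The paper does not prove this proposition; it imports it from Mannor and Tsitsiklis, so the relevant comparison is with their original proof.

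\textbf{Shared skeleton.} Both arguments use a base instance with a unique $\eps$-good arm, $n-1$ alternatives each obtained by raising one other arm, a per-arm bound $\mathbb{E}_0[T_l]=\Omega(\eps^{-2}\log(1/\delta))$, and a sum over $l$. They differ in how the change of measure is carried out.

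\textbf{Mannor and Tsitsiklis.} They do the change of measure by hand, essentially the fallback you describe. Assuming $\mathbb{E}_0[T_l]$ is small, they intersect three events: a Markov event bounding $T_l$, the event that the algorithm outputs the base instance's best arm, and a Kolmogorov-maximal-inequality event on the running reward count of arm $l$. On this intersection they bound the likelihood ratio explicitly. Transferring probability then shows that the alternative instance is answered wrongly with probability more than $\delta$. This route is entirely elementary and never needs a Wald identity or data processing for a randomly stopped transcript. However, the Kolmogorov step is where the restriction $\delta<e^{-4}/4$ comes from, and the constants are poor.

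\textbf{Your route.} The transportation inequality (the Kaufmann--Capp\'e--Garivier lemma) replaces that bookkeeping with a single step. It gives the bound for every $\delta<1/2.4$, so in particular under $\eps_0=1/8$ and $\delta_0=e^{-4}/4$. It also carries over verbatim to other reward families, or to gap-dependent bounds, by changing the per-pull divergence.

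\textbf{Points to tighten.} The price is the limiting argument you flag. The truncated transcript only determines the event $\{J=l\}\cap\{T\le N\}$, so apply data processing to that event. Then let $N\to\infty$, using lower semicontinuity of $\mathrm{kl}$ together with $\Pr_l[J=l,\,T<\infty]\ge 1-\delta$. There is also one harmless slip: for your pair of Bernoullis the $\chi^2$ divergence is exactly $9\eps^2/\bigl(\tfrac14-9\eps^2\bigr)=36\eps^2/(1-36\eps^2)$. The extra factor $4$ is therefore superfluous and only worsens the constant.
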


We assert that any algorithm capable of solving the \emph{strong} $\eps$-exploration for a stream of $n$ arms, with a window size $W$, and achieving a success probability of at least $99/100$, can be modified to create an algorithm that addresses $\Theta \Parentheses{\frac{n}{W}}$ independent $\eps$-exploration of $W$ arms concurrently, also with a success probability of at least $99/100$. Moreover, the \textit{sample complexity} of the latter algorithm will be less than or equal to the \textit{sample complexity} of the former algorithm.

Specifically, consider sets $X_i$, for $i \in \left [ \frac{n}{2W} \right ]$. There are $\frac{n}{2W}$ such sets, where each set $X_i$ consists of $W$ arms. Let $Z$ denote a set that contains $W$ arms, each of which consistently returns $0$. We can construct a stream $S = (Z, X_1, Z, X_2, \cdots, Z, X_{\frac{n}{2W}})$. By employing an algorithm designed to solve the \emph{strong} $\eps$-exploration task on the stream $S$, we can simultaneously solve the $\eps$-exploration problem for all sets $X_i$.

\begin{lemma}
	\label{clm:algorithm-transformation}
	If an algorithm $\ALG$ exists that successfully solves the \emph{strong} $\eps$-exploration problem for a stream of $n$ arms with a window size $W$ with a probability of at least $99/100$, and has a \textit{sample complexity} $m$, then there exists another algorithm $\ALG'$ that can solve $\frac{n}{2W}$ independent $\eps$-exploration problems for $W$ arms simultaneously. This new algorithm $\ALG'$ will have a \textit{sample complexity} $m'$ such that $m' \leq m$, while also achieving a success probability of at least $99/100$.
\end{lemma}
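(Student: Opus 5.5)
The plan is a direct simulation argument. Given $\frac{n}{2W}$ independent $\eps$-exploration instances $X_1,\dots,X_{n/(2W)}$, each with $W$ arms, $\ALG'$ builds the stream $S=(Z,X_1,Z,X_2,\dots,Z,X_{n/(2W)})$ described above. This stream has length exactly $n$. $\ALG'$ then runs $\ALG$ on $S$ with window size $W$. Whenever $\ALG$ pulls an arm, $\ALG'$ proceeds as follows. If the arm belongs to some $X_i$, $\ALG'$ forwards the pull to the corresponding real arm. If the arm is a copy in $Z$, $\ALG'$ answers with the deterministic reward $0$ without spending a real sample. Consequently, the number of real pulls made by $\ALG'$ is at most the number of pulls made by $\ALG$, which gives $m'\le m$. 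The simulated rewards of the $Z$-arms have exactly the distribution a genuine zero arm would produce, so $\ALG$ sees a correctly distributed stream.

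Next I fix the times at which $\ALG'$ reads off its answers. For block $i$, take $t_i$ to be the arrival time of the last arm of $X_i$, namely $t_i=2iW$. At that time the window $[t_i-W+1,t_i]$ consists exactly of the arms of $X_i$. Hence $\mustar(t_i,W)=\max_{a\in X_i}\mu_a$, and any arm $\ALG$ can legally output at that moment is a valid arm, so it lies in $X_i$. $\ALG'$ reports $\ALG$'s output at time $t_i$ as its answer for instance $X_i$.

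Now consider the event, of probability at least $99/100$, that $\ALG$ succeeds simultaneously at all times. On this event, every reported arm has mean at least $\max_{a\in X_i}\mu_a-\eps$, so all $\frac{n}{2W}$ instances are solved at once. This gives the joint success probability $99/100$.

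The step needing the most care is the legality of the simulation. Two things must be checked. First, $\ALG$'s output at $t_i$ has to be an arm of $X_i$ rather than an expired arm. This holds because only valid arms are accessible, and if outputs of expired arms were permitted they would count as failures anyway, so the guarantee still forces a valid arm. Second, the $Z$ padding, of length $W$, guarantees that the windows ending at the $t_i$ are pairwise disjoint and contain no arms from other blocks. This disjointness is what makes the instances independent and keeps the correspondence exact.
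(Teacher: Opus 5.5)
Your proposal is correct and is essentially the paper's proof. It uses the same padded stream $S=(Z,X_1,Z,X_2,\dots,Z,X_{n/(2W)})$, reads off $\ALG$'s answer at time $2iW$ (when the window is exactly $X_i$), and makes the same observation that pulls on the virtual zero arm cost nothing, which gives $m'\le m$. Your placement of $X_i$ at positions $(2i-1)W+1,\dots,2iW$ is the intended one; the index $\arm'_{i\cdot 2W+j}$ in \Cref{alg:algorithm-transformation} is off by $W$ and should read $\arm'_{(2i-1)W+j}$. Your check that the output at $t_i$ must be a valid arm of $X_i$ makes explicit a point the paper leaves implicit.
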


\begin{proof}
	We demonstrate the lemma by providing a framework, \Cref{alg:algorithm-transformation}, which generates the algorithm $\ALG'$ based on the algorithm $\ALG$.

    \begin{algorithm}
    \caption{$\ALG'$: Algorithm Transformation}
    \label{alg:algorithm-transformation}
    \KwIn{Arms $\{ \arm_{i, j} \}_{i \in [\frac{n}{2W}], j \in [W]}$, algorithm $\ALG$\;}
    \KwOut{A set of arms $\{ \widehat{\arm}_i \}_{i \in [\frac{n}{2W}]}$, where $\widehat{\arm}_i$ is an $\eps$-best arm of $\{ \widehat{\arm}_{i,j} \}_{j \in [W]}$\;}
    Let $\arm_0$ be the arm always return $0$\;
    \For{$i \gets 1$ to $\frac{n}{2W}$}
    {
        \For{$j \gets 1$ to $W$}
        {
            $\arm'_{(i-1) \cdot 2W+j} \gets \arm_0$\; \par
            $\arm'_{i \cdot 2W+j} \gets \arm_{i, j}$\;
        }
    }
    Build stream $S = \{ \arm'_k \}_{k = 1}^n$\;
    $\{ \widetilde{\arm}_k \}_{k=1}^n \gets \ALG(S)$\;
    \For{$i \gets 1$ to $\frac{n}{2W}$}
    {
        $\widehat{\arm}_i \gets \widetilde{\arm}_{i \cdot 2W}$\;
    }
    \Return{$\{ \widehat{\arm}_i \}_{i \in [\frac{n}{2W}]}$}
    \end{algorithm}
	
	By the construction of $S$, an $\eps$-best arm at time $i \cdot 2W$ must also be an $\eps$-best arm among the set $\{ \widehat{\arm}_{i,j} \}_{j \in [W]}$. Consequently, if $\ALG$ successfully solves the \emph{strong} $\eps$-exploration problem on $S$, then the set $\{ \widehat{\arm}_i \}_{i \in [\frac{n}{2W}]}$ must be the $\eps$-best arm for the arms $\{ \arm_{i, j} \}_{i \in [\frac{n}{2W}], j \in [W]}$. Since $\ALG$ accomplishes the \emph{strong} $\eps$-exploration task on $S$ with a probability of at least $99/100$, it follows that $\ALG'$ solves the pure exploration problem on the arms $\{ \arm_{i, j} \}_{i \in [\frac{n}{2W}], j \in [W]}$ with the same probability.
	
	Furthermore, since $\arm_0$ is merely a virtual arm, the actual number of pulls by $\ALG'$ is equivalent to the pulls used on the real arms $\{ \arm_{i, j} \}_{i \in [\frac{n}{2W}], j \in [W]}$. Therefore, the number of pulls made by $\ALG'$ is at most equal to the number of pulls made by $\ALG$ when solving the \emph{strong} $\eps$-exploration problem on $S$. Hence, we can conclude that $m' \leq m$.
\end{proof}

The following is a technical lemma that states a ``direct sum'' type of bound for solving $k$ independent copies of the same problem.
\begin{lemma}
	\label{lem:tech-lem-direct-sum}
	Let $f$ be a function to compute, and let $\calH$ be a distribution from which the inputs of $f$ are sampled. Suppose that solving $f$ over the distribution $\calH$ with probability $1-\delta$ takes $\Omega(q\cdot \log(\frac{1}{\delta}))$ queries on the input. Furthermore, let $\widetilde{\calH}=(\calH_1, \calH_2, \cdots, \calH_k)$ be a distribution over $k$ \emph{independent} copies of $\calH$. Then, any algorithm $\ALG$ that computes $f$ on \emph{all} copies with probability at least $99/100$ has to make $\Omega(k\cdot q\cdot \log{k})$ total queries. 
\end{lemma}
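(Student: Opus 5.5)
The plan is to argue by contradiction, through an averaging argument across the $k$ copies. This follows the standard direct-sum template for query complexity.

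Suppose $\ALG$ computes $f$ on all $k$ copies with probability at least $99/100$ using $Q$ total queries, with $Q = o(k\cdot q\cdot \log k)$. Let $Q_i$ be the (expected) number of queries $\ALG$ makes to copy $i$, so that $\sum_i Q_i \le Q$. First, by Markov's inequality over the indices, at least $k/2$ of the copies satisfy $Q_i \le 2Q/k = o(q\log k)$. Second, I will use the per-copy lower bound in the contrapositive. With $o(q\log k)$ queries, a single instance cannot be solved with error $\delta = 1/k^{c}$ for a suitable constant $c$. So any such procedure fails with probability at least some $\delta_0 \ge k^{-c'}$; after rescaling constants, I will aim for error at least $\Omega(\log$-polynomially large$)$ per copy, for instance at least $1/\sqrt{k}$. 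For this step I treat the assumed bound $\Omega(q\log(1/\delta))$ as holding uniformly for all $\delta$, and I also treat it as applying to expected query counts. The latter can be achieved by truncating at a constant multiple of the expectation and charging the Markov loss to $\delta$.

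Third, I embed a single instance. For a good copy $i$, I build a single-instance algorithm $\ALG_i$. It samples the other $k-1$ inputs itself from $\calH$, which is possible because the copies are independent. It then plants the real input at position $i$, runs $\ALG$, and answers with $\ALG$'s output for copy $i$. The queries to the real input number $Q_i$ in expectation, and $\ALG_i$ errs at least as often as the event that $\ALG$ errs on copy $i$. Hence $\Pr[\ALG \text{ errs on copy } i] \ge 1/\sqrt{k}$ for each of at least $k/2$ copies.

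The main obstacle is converting these marginal failure probabilities into a joint failure probability exceeding $1/100$. The errors on different copies could be correlated under $\ALG$, since $\ALG$ adaptively allocates queries, so I cannot simply multiply independent successes. The approach I would try first is to condition sequentially. Process the good copies one at a time, and bound the probability of succeeding on copy $i$ conditioned on success on all earlier copies and on the full transcript restricted to those copies. Conditioned on the inputs of the other copies (which are independent of copy $i$), the behavior restricted to copy $i$ is again a valid single-instance algorithm. Its query count may vary with the conditioning, so I would apply Markov a second time, requiring the conditional query count to be $O(q\log k)$ with constant probability. Each good copy then contributes a factor at most $1 - \Omega(1/\sqrt{k})$ to the success probability. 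Over $k/2$ copies this gives total success at most $(1-\Omega(k^{-1/2}))^{k/2} = e^{-\Omega(\sqrt{k})} < 99/100$ for large $k$, which is the desired contradiction. Therefore $Q = \Omega(k\cdot q\cdot \log k)$. Small $k$ is handled by the single-copy bound with constant $\delta$.
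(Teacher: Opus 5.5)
Your Steps 1--3 only give \emph{marginal} bounds, $\Pr[\ALG \text{ errs on copy } i]\ge k^{-1/2}$ for $k/2$ copies. Everything therefore rests on Step 4, and Step 4 does not go through.

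\textbf{The conditioning step fails.} Conditioning on success on the earlier copies, or on their transcripts, is not the same as conditioning on their inputs $x_{-i}$. Because $\ALG$ interleaves its queries adaptively, which queries it makes on a copy $j<i$, and what it outputs there, can depend on the answers it has already seen from copy $i$. Under the conditioning on $S_{<i}$, copy $i$'s input is therefore no longer distributed as $\calH$. The residual behavior on copy $i$ is then not a single-instance algorithm to which the hypothesis applies. For instance, $\ALG$ could deliberately sacrifice copy $1$ whenever copy $i$'s early answers look unlucky. Conditioned on success on copy $1$, copy $i$ is then biased toward lucky realizations, and $\Pr[S_i\mid S_1]$ can exceed what any single-instance algorithm with that budget achieves on $\calH$.

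\textbf{Why this matters.} Marginal failure probabilities $\ge k^{-1/2}$ are perfectly consistent with a joint failure probability of about $k^{-1/2}<1/100$ if the failures coincide. Ruling this out is exactly what the lemma needs, and your factor $(1-\Omega(k^{-1/2}))^{k/2}$ is asserted rather than derived. The paper's proof has the same weak point: it writes $\Pr(\calE)=\prod_i\Pr(\calE_i)$ ``by the independence''. You are right that this is not automatic for an adaptive $\ALG$, but your replacement argument does not close it.

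\textbf{A smaller error in Step 2.} The truncation remark is wrong. Cutting the run at a constant multiple of the expected query count adds constant error, which swamps the target error $k^{-1/2}$. You need the per-copy bound directly in expected-query form, which is how \Cref{pro:centralized-lower-bound} is stated anyway.

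\textbf{The missing idea.} Avoid multiplying conditional probabilities altogether, and exploit the fact that your simulator knows the correct answers on the copies it generated itself.
\begin{itemize}
\item Plant the real input at a uniformly random position $I$, and fill the other $k-1$ positions with fresh samples from $\calH$.
\item Run $\ALG$, and accept its answer on copy $I$ only if it is correct on every simulated copy; otherwise rerun.
\item Since the $k$ copies are i.i.d., $I$ is independent of the run. So the bad event ``wrong on copy $I$ but right on all others'' has probability $\Pr[\text{exactly one copy wrong}]/k\le 1/(100k)$.
\item Each run is accepted with probability at least $99/100$.
\item Each run makes at most $Q/k$ expected queries to the planted copy.
\end{itemize}
Suppose acceptance is bounded below for every planted input, for example when $\ALG$'s $99/100$ guarantee holds for every input tuple, as in the PAC setting. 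Then this yields a single-instance algorithm with error $O(1/k)$ and expected cost $O(Q/k)$. Applying the hypothesis with $\delta=\Theta(1/k)$ gives $Q=\Omega(kq\log k)$, with no product over copies.

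This is the standard route to strong direct-sum theorems for expected randomized query complexity. It genuinely needs the per-copy lower bound for expected query counts, since the restarts make the cost random.
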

\begin{proof}
	The lemma follows from a direct calculation of the success probability, and we provide the proof for the purpose of completeness. Define $\calE_i, i \in [k]$ as the event that $\ALG$ successfully computes $f$ on the $i$-th copy of $\calH$, and define $\calE$ as the event that \emph{all} copies of $f$ are correctly computed. We have that
	\begin{align*}
		\Pr\paren{\calE} &= \Pr\paren{\cap_{i=1}^{k}\calE_i}\\
		&= \prod_{i=1}^{k} \Pr\paren{\calE_i\mid \cap_{j=1}^{i-1} \calE_j} \tag{by the law of total probability}\\
		& = \prod_{i=1}^{k} \Pr\paren{\calE_i}. \tag{by the independence}
	\end{align*}
	Therefore, by using the condition that success probability is at least $99/100$, we have that
	\begin{align*}
		\sum_{i=1}^{k} \log\paren{\Pr\paren{\calE_i}} = \log\paren{\Pr\paren{\calE}} \geq \sigma -1
	\end{align*}
	for some $\sigma\in (0.9, 1)$. We claim that for at least $k/100$ indices of $i\in [k]$, there must be $\log(\Pr(\calE_i))\geq \log(1-\frac{\sigma}{k})$. Otherwise, the total success probability is at most 
	\begin{align*}
		\frac{99k}{100}\cdot \log(1-\frac{\sigma}{k}) + \frac{k}{100}\cdot \log(1) &= \frac{99k}{100} \cdot \Parentheses{\frac{\ln(1-\frac{\sigma}{k})}{\ln 2}} \\
		&\leq \frac{99k}{100 \ln 2}\cdot \paren{-\frac{\sigma}{k}} \tag{using $\ln(1+x) \le x$}\\
		&= -\frac{99\sigma}{100 \ln 2} < -1.28  < \sigma-1. \tag{by $\sigma > 0.9$}
	\end{align*}
	Since solving each $f$ with probability $1-\delta$ requires $\Omega(q\cdot \log(\frac{1}{\delta}))$ queries, solving $k/100$ indices with probability at least $1-\sigma/k$ requires 
	\begin{align*}
		\frac{k}{100}\cdot q\cdot \log(\frac{k}{\sigma}) = \Omega(k\cdot q\cdot \log{k})
	\end{align*}
	queries, which is as desired.
\end{proof}

\begin{proof}[\textbf{Finalizing the proof of \Cref{lem:strong-eps-pure exploration-lb}}]
	Consider any algorithm $\ALG$ that successfully solves the \emph{strong} $\eps$-exploration problem with a probability of at least $99/100$ on $\calD(n,W,\eps)$. Let $T_{\ALG}$ represent the number of pulls executed by this algorithm. We will define $\ALG'$ as the algorithm derived from $\ALG$ using \Cref{alg:algorithm-transformation}, and let $T_{\ALG'}$ be the corresponding number of pulls for $\ALG'$.
	
	According to \Cref{clm:algorithm-transformation}, $\ALG'$ is capable of solving $\frac{n}{2W}$ independent $\eps$-exploration tasks involving $W$ arms simultaneously, and it holds that $\Expectation{T_{\ALG'}} \le \Expectation{T_{\ALG}}$.
	
	Since it is necessary to make $\Omega\left(\frac{W}{\eps^2} \cdot \log\left(\frac{1}{\delta}\right)\right)$ pulls to solve the $\eps$-exploration of $W$ arms with a probability of at least $1 - \delta$, we can employ \Cref{lem:tech-lem-direct-sum} to conclude that $\Omega\left(\frac{n}{\eps^2} \log \frac{n}{W}\right)$ samples are required for the algorithm $\ALG$.
\end{proof}

\subsection{The analysis for \texorpdfstring{\Cref{lem:efficient-algorithm-for-strong-eps-pure exploration}}{pointer}} 
\label{subsec:analysis-strong-eps-explore}

The algorithm subroutines still follow \Cref{alg:efficient-algorithm-for-eps-exploration}, and we employ more arm pulls \( l = \frac{9}{2\eps^2} \ln{\frac{6n}{\delta}} \) compared to the weak exploration. This adjustment allows us to effectively apply a union bound across \( n \) arms. The increased pulling size not only ensures that we can accurately identify all the \( \epsilon \)-best arms simultaneously with high probability, but it also results in higher sample complexity.

The following claim establishes bounds on both the \textit{space complexity} and \textit{sample complexity} of this algorithm.

\begin{lemma}
	\label{lem:space-complexity-and-sample-complexity-of-efficient-algorithm-for-strong-eps-pure exploration}
	The \textit{space complexity} of the $\Bucket \left( \frac{9}{2\eps^2} \ln{\frac{6n}{\delta}} \right)$ is $\O{\frac{1}{\eps}}$, and the \textit{sample complexity} is $\O{\frac{n}{\eps^2} \log \frac{n}{\delta}}$.
\end{lemma}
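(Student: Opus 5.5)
This lemma follows the same route as the weak case. \Cref{clm:space-complexity-of-efficient-algorithm-for-eps-exploration} and \Cref{clm:sample-complexity-of-efficient-algorithm-for-eps-exploration} treat the two costs separately, and changing the pull count $s$ does not interact with how arms are stored. So I will rerun those two arguments with $s$ replaced by $l = \frac{9}{2\eps^2}\ln\frac{6n}{\delta}$.

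For the space bound, the storage rule of \Cref{alg:efficient-algorithm-for-eps-exploration} is unchanged. The algorithm keeps exactly $N = 3/\eps$ buckets. Each time an arm is placed into a bucket, the arm previously stored there is discarded, and expired arms are only ever removed, never added. By induction over the stream, every bucket therefore holds at most one arm at every time step. The number of stored arms is thus at most $N = O(1/\eps)$, regardless of the value of $l$.

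For the sample bound, each arriving arm is pulled exactly $l$ times when it arrives, and stored arms are never pulled again. The total number of pulls is therefore exactly
\[
n\cdot l = \frac{9n}{2\eps^2}\ln\frac{6n}{\delta} = O\Parentheses{\frac{n}{\eps^2}\log\frac{n}{\delta}}.
\]

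There is no real obstacle here. The only point to check is that the algorithm makes no pulls beyond the initial $l$ pulls of each arriving arm, in particular none when it selects the output $\widehat{\arm}_i$. This holds because the output is chosen by reading off the highest non-empty bucket, which uses no samples.
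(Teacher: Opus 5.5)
Your proposal is correct and matches the paper's approach. The paper states only that the proof follows the same reasoning as \Cref{clm:space-complexity-of-efficient-algorithm-for-eps-exploration} and \Cref{clm:sample-complexity-of-efficient-algorithm-for-eps-exploration}: at most one arm per each of the $N = 3/\eps$ buckets, and exactly $l$ pulls per arriving arm. That is precisely the argument you rerun with $l = \frac{9}{2\eps^2}\ln\frac{6n}{\delta}$.
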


The proof follows the same reasoning as the proofs of \Cref{clm:space-complexity-of-efficient-algorithm-for-eps-exploration} and \Cref{clm:sample-complexity-of-efficient-algorithm-for-eps-exploration}, and we skip the details to avoid repetitions.
Next, we will demonstrate the correctness of the algorithm. We use the notation $\mu(\arm)$ to denote the mean of the arm.

\begin{lemma}
	\label{clm:validity-of-efficient-algorithm-for-strong-eps-pure exploration}
	Let $A = \{\widehat{\arm}_t\}_{t = 1}^n$ be the set of arms outputted by the algorithm, and let $A' = \{ \arm^\ast(W,t) \}_{t=1}^n$ represent the set of best arms. Then, with a probability of at least $1-\delta$, it holds that $\mu(\widehat{\arm}_t) \ge \mu(\arm^\ast(W,t)) - \eps $ for all time $t \in [n]$.
\end{lemma}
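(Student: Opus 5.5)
The plan is to reuse the deterministic argument from the proof of \Cref{clm:validity-of-efficient-algorithm-for-eps-exploration}. The only change is that we condition once on a single global good event, rather than on events local to a fixed time $t$.

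For each $i\in[n]$, let $b_i$ and $b'_i$ be the correct and stored bucket indices, as before. Let $\calE_i$ be the event $|\widehat{\mu}_i-\mu_i|\le \eps/3$, which implies $|b_i-b'_i|\le 1$. Each arm is pulled $l=\frac{9}{2\eps^2}\ln\frac{6n}{\delta}$ times, so \Cref{pro:cor1-Chernoff-Hoeffding-bound} (with $\theta=\eps/3$) gives $\Pr(\neg\calE_i)\le 2\exp(-2l\eps^2/9)=\frac{\delta}{3n}$. A union bound over all $n$ arms then shows that $\calE:=\cap_{i=1}^n\calE_i$ holds with probability at least $1-\frac{\delta}{3}\ge 1-\delta$.

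Next we show that on $\calE$ the output is correct at every time $t$ simultaneously. The empirical means are computed once, on arrival, and are never resampled. So on $\calE$, every arm's stored bucket is within one of its correct bucket for the entire run. Fix any $t$ and let $\arm_k=\arm^\ast(W,t)$.

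First, the bucket $B_{b'_k}$ is nonempty at time $t$. Since $\arm_k$ is in the window, it has not expired. If it was overwritten by a later arm, that arm is also valid at time $t$, so the bucket is still nonempty.

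Second, every valid arm $\arm_j$ has $b_j\le b_k$, and hence $b'_j\le b_k+1$. An expired arm is discarded, so it cannot occupy a bucket. Therefore all buckets above $b_k+1$ are empty.

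Combining the two facts, the highest nonempty bucket index $b'_m$ of the output $\arm_m$ satisfies $b'_m\ge b'_k\ge b_k-1$ and $b'_m\le b_k+1$. Using $\calE_m$, we get $b_m\ge b_k-2$, which yields $\mu_m\ge\mu_k-\eps$ by the same bucket-width computation as before.

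The only step requiring care is this ``nonempty'' claim. Overwriting in $B_{b'_k}$ replaces $\arm_k$ with a strictly newer arm, and a newer arm expires no earlier than $\arm_k$. Hence the bucket stays occupied at time $t$. That arm's mean may be smaller than $\mu_k$, but this does not matter: we only need some valid arm at bucket level at least $b_k-1$, and the output's own event $\calE_m$ controls its true mean.
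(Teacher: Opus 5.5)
Your proof is correct and follows essentially the same route as the paper's: a union bound over all $n$ arms with per-arm failure probability $\delta/(3n)$, then the deterministic chain $b_m \ge b'_m - 1 \ge b'_k - 1 \ge b_k - 2$ on the global good event. Your justification that $B_{b'_k}$ stays nonempty is more careful than the paper's one-line version, since you note that any overwriting arm is newer and hence still valid. The upper bound $b'_m \le b_k+1$ is not needed. Your final gap of $\eps$ is the right constant; the paper's stated $\frac{2}{3}\eps$ does not actually follow from $b_m \ge b_k - 2$.
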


\begin{proof}
	Let $b_i$ denote the index of the correct bucket to which arm $\arm_i$ belongs. In other words, we have $\mu_i \in \left( (b_i - 1) \frac{\eps}{3}, b_i \frac{\eps}{3} \right]$. We define $b'_i$ as the index of the bucket where $\arm_i$ is stored upon its arrival, which implies $\widehat{\mu}_i \in \left( (b'_i - 1) \frac{\eps}{3}, b'_i \frac{\eps}{3} \right]$.
	
	Let $\mathcal{E}_i$ be the event that $|b_i - b'_i| \leq 1$, indicating that $\arm_i$ is stored in a bucket close to its correct bucket. According to \Cref{pro:cor1-Chernoff-Hoeffding-bound}, we have:
	\[
	\Probability{ \neg \mathcal{E}_i} = \Probability{|b_i - b'_i| > 1} \leq \Probability{|\widehat{\mu}_i - \mu_i| > \frac{\eps}{3}} \leq 2 \cdot \exp\left(-2l \cdot \frac{\eps^2}{9}\right) = \frac{\delta}{3n}.
	\]
	
	By applying the union bound, we can express this as:
	\begin{align*}
		\Probability{\cap_{i=1}^n \mathcal{E}_i} &= 1 - \Probability{\neg \cap_{i=1}^n \mathcal{E}_i} = 1 - \Probability{\cup_{i=1}^n \neg \mathcal{E}_i} \tag{By De Morgan's Law}\\
		&\geq 1 - \sum_{i=1}^n \Probability{\neg \mathcal{E}_i} \geq 1 - \sum_{i=1}^n \frac{\delta}{3n} = 1 - \frac{\delta}{3}.
	\end{align*}
	
	If the event $\cap_{i=1}^n \mathcal{E}_i$ occurs, it means each arm is placed in a bucket $b'_i$ that is close to its correct bucket $b_i$, satisfying $|b_i - b'_i| \leq 1$.
	
	For any $t \in [n]$, suppose that $\widehat{\arm}_t = \arm_{i_t}$ and $\arm^\ast(W,t) = \arm_{j_t}$. Therefore, we have $|b_{i_t} - b'_{i_t}| \leq 1$ and $|b_{j_t} - b'_{j_t}| \leq 1$. Additionally, since $\arm^\ast(W,t) = \arm_{j_t}$ does not expire at time $t$, it follows that the bucket $B_{j_t} \neq \emptyset$ at time $t$. Given that the arm returned by the algorithm at time $t$ is $\widehat{\arm}_t = \arm_{i_t}$, it must be that $b'_{i_t} \geq b'_{j_t}$. Thus, we can derive:
	
	\[
	b_{i_t} \geq b'_{i_t} - 1 \geq b'_{j_t} - 1 \geq b_{j_t} - 2.
	\]
	
	Consequently, we obtain $\mu(\widehat{\arm}_t) \geq \mu(\arm^\ast(W,t)) - \frac{2}{3} \eps$.
\end{proof}


\section{The Complete Details for Results in \texorpdfstring{\Cref{sec:regret-minimization}}{pointer}}
\label{sec:missing-details-regret-minimization}

We provide the proof of \Cref{thm:lower-bound-for-sliding-window-MAB-regret-epoch-wise} in this section.

\begin{proof}
	We proceed with the lower bound proof first.
	According to Yao’s minimax principle \cite{Yao1977ProbabilisticCT}, it is sufficient to establish the lower bound for deterministic algorithms under a challenging distribution of inputs. Consider the following distribution of $n$ arms.
	\begin{tcolorbox}[colback=white,colframe=black,title=]
		\textbf{$\EPOCH(n, W)$: A hard distribution with $n$ arms for epoch-wise regret minimization }
		\begin{enumerate}
			\item For $i = k \cdot 2W + j$, where $j \in [W]$, $\mu_i = 1 - \frac{j}{W}$.
			\item With probability $\frac{1}{W}$, choose $h \in [W]$ uniformly. For $i = k \cdot 2W + W + j$, where $j \in [W]$, $\mu_i = 1-\frac{2h+1}{2W}$.
		\end{enumerate}
	\end{tcolorbox}
	Since we have only \(\frac{W-1}{2}\) space available, there exists an arm \(\arm_i\) where \(i \in [k \cdot 2W + 1, k \cdot 2W + W]\) that we cannot store at time \(k \cdot 2W + W\). Let \(\calE_i\) be the event where the \(W\) subsequent arms all have the same mean of \(1 - \frac{2h + 1}{2W}\), with \(h \equiv i \pmod{2W}\), but \(\arm_i\) is not stored at time \(k \cdot 2W + W\). When event \(\calE_k\) occurs, \(\arm_i\) is the best arm at time \(i + W - 1\). As we missed \(\arm_i\), this will induce at least \(\frac{1}{2W} \cdot \frac{T}{n - W + 1}\) regret during this epoch.
	
	Let
	\[
	\calF_k = \bigcup_{i=k \cdot 2W + 1}^{k \cdot 2W + W} \calE_i.
	\]
	
	The event \(\calF_k\) represents that at least one of the events \(\calE_i\) occurs between the times \([k \cdot 2W + 1, (k + 1) \cdot 2W]\). Since at least half of the arms among \(\{\arm_{k \cdot 2W + 1}, \cdots, \arm_{k \cdot 2W + W}\}\) are not stored at time \(k \cdot 2W + W\), we have \(\Probability{\calF_k} \ge \frac{1}{2}\).
	
	Let \(X_k\) be the random variable indicating whether \(\calF_k\) occurs. The total number of such events that occur is \(Y = \sum_{k=1}^m X_k\), where \(m = \lfloor \frac{n}{2W} \rfloor\). Since \(X_i\) are independent Bernoulli random variables with probability \(p \ge \frac{1}{2}\), \(Y\) follows a binomial distribution \(Y \sim \text{Bino}(m, p)\). Let \(Z \sim \text{Bino}(m, \frac{1}{2})\).
	
	We can analyze the probability as follows:
	\begin{align*}
		\Probability{Y \le \frac{m}{4}} &\le \Probability{Z \le \frac{m}{4}} \tag{since \(p \ge \frac{1}{2}\)} \\
		&\le \Probability{\left | Z - \frac{m}{2} \right | \ge \frac{m}{4}} \le \frac{4}{m} \tag{by Chebyshev's inequality} \\
		&\le \frac{1}{2} \tag{because \(m = \lfloor \frac{n}{2W} \rfloor\) and \(n \ge 16W\)}.
	\end{align*}
	
	Let \(\calG\) be the event that \(Y \ge \frac{m}{4}\). Then we have:
	\[
	\Expectation{R_T} = \Expectation{R_T \mid \calG} \cdot \Probability{\calG} + \Expectation{R_T \mid \neg \calG} \cdot \Probability{\neg \calG} \ge \Expectation{R_T \mid \calG} \cdot \frac{1}{2}.
	\]
	
	Since at least \(\frac{m}{4}\) of the events \(\calF_k\) occur when \(\calG\) occurs, and each event \(\calF_k\) induces at least \(\frac{1}{2W} \cdot \frac{T}{n - W + 1}\) regret, we have:
	\[
	\Expectation{R_T \mid \calG} \ge \frac{m}{4} \cdot \frac{1}{2W} \cdot \frac{T}{n - W + 1}.
	\]
	
	Hence, we find:
	\begin{align*}
		\Expectation{R_T} &\ge \frac{m}{4} \cdot \frac{1}{2W} \cdot \frac{T}{n - W + 1} \cdot \frac{1}{2} \ge \frac{m \cdot T}{16 \cdot W \cdot n} \\
		&\ge \frac{n}{4W} \cdot \frac{T}{16 \cdot W \cdot n} \tag{because \(m = \lfloor \frac{n}{2W} \rfloor \ge \frac{n}{4W}\)} \\
		&= \frac{T}{64 \cdot W^2}.
	\end{align*}
	
	For the upper bound, we proceed by running epoch-wise UCB using the $W$ memory size. 
    There are algorithms, such as INF \cite{audibert2009minimax}, that can achieve a total regret of $O({\sqrt{nT}})$ in a centralized setting with $n$ arms. 
    For each window $j$ with $T_j$ number of pulls, we can utilize such an algorithm as a black box to achieve $O(\sqrt{W T_j})$ regret.
    Consequently, over the $n-W+1$ windows, the overall total regret will be $O(\sum_{j=1}^{n-W+1} \sqrt{WT_j})$. 

    The regret is minimized when we concentrate all the pulls to \emph{a single} window, i.e., we achieve $O(\sqrt{W T})$ regret. 
    On the other hand, we can upper bound the regret by a simple application of the Cauchy–Schwarz inequality (using the constraint $(\sum_{j=1}^{n-W+1} T_j = T$) as follows. We let $\boldsymbol{r}$ be the $(n-W+1)$-dimensional vector that encodes all the $\sqrt{W T_j}$ for all $j$ and $\boldsymbol{1}$ as the all one vector of $(n-W+1)$ dimension. We have
    \begin{align*}
        \sum_{j=1}^{n-W+1} \sqrt{WT_j} & = \boldsymbol{1}^{T} \boldsymbol{r}\\
        &\leq \norm{\boldsymbol{1}}_{2} \norm{\boldsymbol{r}}_{2} \tag{by Cauchy–Schwarz}\\
        & = \sqrt{n-W+1}\cdot \sqrt{\sum_{j=1}^{n-W+1} W T_j}\\
        & = \sqrt{W \cdot (n-W+1) \cdot T}.
    \end{align*}
    Therefore, bringing back the leading constant, the total regret can be expressed as $O(\sqrt{W \cdot (n-W+1) \cdot T})$, as desired.
    \myqed{\Cref{thm:lower-bound-for-sliding-window-MAB-regret-epoch-wise}}
\end{proof}

\section{Regret Minimization with an Everlasting Best Arm}
\label{app:regret-everlating}
Our main regret notion is based on the \emph{epoch-wise} regret.
However, in some practical scenarios, there are cases where the most popular item is not limited by the time horizon.
Consider, again, the task of recommending movies to users for entertainment companies. On average, a movie remains in theaters for about 1 to 2 months. However, some exceptionally popular pieces can have a much longer run. For example, \textit{The Sound of Music} was screened in theaters for 4 years and 6 months, while \textit{Avatar} stayed for 34 weeks. Therefore, when designing a recommendation system for currently showing movies, we can assume a sliding window of 2 months, but there are some enduring ones that remain popular even after this window has passed.
Similar situations occur in other contexts as well. For instance, the song \textit{Lose Control} set a new record by spending 107 weeks on the Billboard Hot 100 chart, whereas the average lifespan of a song on the chart is typically between 6 and 7 weeks.

Inspired by these applications, we also propose another regret model, which we refer to as \textbf{regret minimization with an everlasting best arm}. 
In the scenario where there is an everlasting best arm, we can pull this best arm even if it appears more than \( W \) time units earlier in the stream\footnote{Note that this is different from the rest of the paper, where we require the expired arm to be deleted immediately from the memory.}. 
For situations involving such an everlasting best arm, there are two slightly different scenarios to consider: whether we are allowed to pull a sub-optimal expired arm and incur $1$ regret, or simply receive a signal that a sub-optimal arm has expired and the sampling operation is disallowed.

\subsection{Regret minimization with everlasting best arm and explicit valid flag}
\label{subsec:everlasting-regret}

The first scenario is when we cannot pull an expired arm other than the best arm. This means that the only arms available for pulling are the everlasting best arm and the \( W \) most recent arms, and all expired arms (other than the best) in the memory will carry a flag of ``being invalid'' (or simply being deleted from the memory). 
For instance, if a movie is still being presented by the theater after the sliding-window period, it indicates that the movie has not expired and can still be selected. 
Therefore, in this setting, we can assume the existence of a flag for each arm indicating whether it is valid for pulling. Only the everlasting best arm and the \( W \) most recent arms will have a positive flag.

\begin{definition}[Valid flag]
	\label{def:valid-flag}
	Let $\{\arm_i\}_{i=1}^{n}$ be a collection of \( n \) arms with an everlasting best arm denoted as $\armstar$, and let \( W \) be the window size. The valid flag is a function $\flag(\arm_i,t)$ that returns \texttt{True} if $\arm_i$ is $\armstar$ or if \( i \ge t-W+1 \) (indicating that $\arm_i$ is one of the \( W \) most recent arms); it returns \texttt{False} otherwise.
\end{definition}

In simpler terms, $\flag(\arm_i,t) = \texttt{True}$ if and only if $\arm_i$ is valid at time \( t \). An equivalent definition is that the arm has not been forcibly deleted from the memory (by the adversary).

Next, we define regret minimization with an everlasting best arm and an explicit valid flag. In this setting, the $\flag$ function is accessible to the algorithm, allowing it to determine whether an arm is valid without needing to pull it. This aligns with scenarios such as recommending movies that are currently showing, where we can ascertain whether a movie is still valid (i.e., still being presented in the theater) without any action.

\begin{definition}[Regret minimization with everlasting best arm and explicit valid flag]
	\label{def:regret-everlasting-explicit-flag}
	Let $\{\arm_i\}_{i=1}^{n}$ represent a collection of \( n \) arms with an everlasting best arm, $\armstar$. Let \( W \) be the window size and \( T \) be the total number of trials. Denote $\mustar$ as the mean reward of the best arm $\armstar$ (among all arms), and let \( t \) denote the variable for the index of the arriving arm. In this scenario, there exists an explicit $\flag$ function, and an arm $\arm_i$ can be pulled at time \( t \) only if $\flag(\arm_i,t) = \texttt{True}$. Let $\{i(\tau)\}_{\tau=1}^{T}$ be the set of indices of arms pulled by some algorithm, and the regret is defined as \( R_T := \sum_{\tau=1}^{T}(\mustar - \mu_{i(\tau)}) \).
\end{definition}

If we have $W$ memory, a straightforward algorithm is to \emph{not} pull any arm until $W$ steps and check whether the arm is still valid.
The arm is valid if and only if it is the best arm, and we could therefore commit to the arm to achieve $0$ regret.
\footnote{In this scenario, we assume that time continues to pass even when there are no more input arms available. Therefore, every arm, except for the everlasting best arm— including arm \( \arm_n \)—may eventually expire as time goes on. Thus, we can simply wait long enough and use the \( \flag \) function to identify the everlasting best arm, which will be the only valid arm remaining. 

An alternative assumption is that time ceases to progress when there are no new arms in the stream. In this case, the final valid arms will consist of the everlasting best arm plus the last \( W \) arms, which are \( \{ \arm_{n-W+1}, \cdots, \arm_n \} \). If the everlasting best arm is not included among the last \( W \) arms, we can identify it directly. If it is among the last \( W \) arms, the problem then shifts to a centralized regret minimization problem with those \( W \) arms. This represents a combination of our initial setting and the centralized regret minimization framework, so we will forego further discussion of this assumption.}
As such, a natural question is whether we could do better with $o(W)$ memory. 
In what follows, we will show that the answer to the above question is \emph{negative}: we will show that \( \Omega(W) \) space is necessary to achieve a total regret of \( o(T) \), essentially indicating that the $W$-memory algorithm is \emph{optimal}.

\begin{theorem}
	\label{thm:lower-bound-for-sliding-window-MAB-regret-everlasting-signal}
	For any given parameters \( T \), \( n \), and \( W \) such that \( T \ge n \ge 4W \),
	there exists a family of streaming stochastic multi-armed bandit instances such that any single-pass streaming algorithm designed for a sliding-window stream of length \( n \) with a window size \( W \) and a memory of \(\frac{W}{8}\) arms must incur a total expected regret of at least
	\[ \Expectation{R_T} \ge \frac{T}{120}. \]
	Furthermore, there exists an algorithm that given a stream of $n$ arms and parameters $W$ and $T$, achieves $0$ regret with a memory of $W$ arms.
\end{theorem}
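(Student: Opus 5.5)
The plan is to follow the template of \Cref{thm:lower-bound-for-sliding-window-MAB-1}: apply Yao's minimax principle and build a hard distribution in which the everlasting best arm hides among the first $W$ arms. The algorithm then cannot tell which arm to keep before the flag reveals it, and by then the arm has been discarded unless it happened to be stored. I also need a timing convention, because the regret in \Cref{def:regret-everlasting-explicit-flag} only becomes unavoidable if many pulls must happen after the first window expires. I will assume, as in the epoch-wise setting, a fixed non-adaptive schedule spending $T/n$ pulls per arriving arm. With $n\ge 4W$, at least a $3/4$ fraction of the $T$ pulls then occur at times $t\ge W+1$. If the model lets the algorithm postpone pulls instead, the schedule must be fixed by the instance; this is the first point I would check against the intended model.

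\textbf{Hard distribution.} Draw $h\in[W]$ uniformly and let $\arm_h$ be the everlasting best arm with mean $\mustar=1$, or, to avoid deterministic rewards, a Bernoulli with mean $1/2+\eta$. The other arms among the first $W$ have mean $1/2$, and all arms $\arm_{W+1},\dots,\arm_n$ have mean $0$, which is at most $\mustar-1/2$. Take $\eta$ tiny, say $\eta\le 1/(100\sqrt T)$, so the regret paid during the first window is negligible.

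\textbf{Indistinguishability.} The algorithm makes at most $T$ pulls in total. By a KL-divergence (Pinsker) argument, the transcript on the arms $\arm_1,\dots,\arm_W$ has distribution within total variation $O(\eta\sqrt{T})\le 1/100$ of the transcript when all $W$ arms have mean $1/2$. Under that null distribution, a deterministic algorithm's memory at time $W+1$ is some set $M$ with $|M|\le W/8$, independent of $h$, so $\Pr[h\in M]\le 1/8$. Hence under the real distribution the everlasting arm is stored at time $W+1$ with probability at most $1/8+1/100$.

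Once the arm is lost it can never be recovered. At times $t\ge 2W$, every arm other than $\arm_h$ with index at most $W$ has invalid flag, and every newer arm has mean $0$. So on the event $h\notin M$, every pull at times $t\ge 2W$ incurs regret at least $1/2$. Pulls during $(W,2W)$ might still hit stored mean-$1/2$ arms, but they are also bounded away from $\mustar$. Since the schedule puts at least a $1/2$ fraction of all pulls at $t\ge 2W$ (using $n\ge 4W$), the expected regret is at least
\[
\Bigl(1-\tfrac18-\tfrac1{100}\Bigr)\cdot \tfrac12\cdot\tfrac12\cdot T\ \ge\ \tfrac{T}{120}.
\]
Converting the randomized algorithm to a deterministic one via Yao completes the lower bound.

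\textbf{Upper bound.} Store the $W$ most recent arms, make no pulls, and wait $W$ further steps, as described before the theorem. After that, every stored arm except $\armstar$ has flag \texttt{False}, so $\armstar$ is identified without any sampling and all $T$ pulls go to it, giving $0$ regret.

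The main obstacle is the timing assumption. The lower bound only holds if a constant fraction of the pulls is forced to occur after the first window has expired, since otherwise the algorithm could spend its entire budget during the first window at near-zero regret. I would pin this down first and, if the model allows postponing pulls, adapt the construction so that the adversary fixes the per-step budgets. The KL step itself is routine.
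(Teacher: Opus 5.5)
\textbf{There is a genuine gap: your lower bound is proved in a different model from the one the theorem is about.}

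In \Cref{def:regret-everlasting-explicit-flag} there is no per-step budget. The algorithm decides when to make its $T$ pulls, subject only to pulling valid arms. The second half of the statement relies on exactly this freedom: the $0$-regret algorithm you describe makes no pulls until the non-best arms have expired. Under a schedule forcing $T/n$ pulls per arrival, that algorithm would have to pull before it can identify $\armstar$, and would not have $0$ regret. So your two halves use incompatible models.

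In the actual model your instance is easy. Take the $\frac12+\eta$ variant, which is the one your KL step needs. Spend all $T$ pulls on $\arm_1$ at time $1$; the total regret is at most $\eta T\le\sqrt T/100<T/120$. Letting the adversary fix the per-step budgets, as you suggest, changes the theorem rather than proving it. Your own closing remark identifies exactly this obstruction.

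\textbf{The missing idea} is to make the algorithm unsure whether $\armstar$ has already arrived, so that early pulls are costly in one scenario and late pulls in the other. The paper's instance has, in order:
\begin{itemize}
\item $W$ arms with one hidden arm of mean $\frac35+\beta$ among arms of mean $\frac35$;
\item $W$ arms of mean $\frac15$;
\item with probability $\frac12$ each, either $W$ arms with a hidden arm of mean $\frac25+\beta$ among arms of mean $\frac25$ (so $\armstar$ is in the first group), or a hidden arm of mean $\frac45+\beta$ among arms of mean $\frac45$ (so $\armstar$ is in the third group);
\item arms of mean $\frac15$ afterwards.
\end{itemize}
The first $2W$ arms have the same distribution in both scenarios. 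The paper then splits on the number of pulls made before time $2W$.
\begin{itemize}
\item If at least $T/2$ pulls are made before time $2W$, each costs at least $\frac15$ in the second scenario.
\item If fewer are made, take $\beta=\min\{\sqrt{W/(1200T)},\frac1{10}\}$. The memory--sample trade-off of \cite{AWneurips22} (\Cref{pro:sample-space-trade-off}) says a memory of $W/8$ arms retains the first group's best arm with probability at most $\frac23$. In the first scenario, once that arm is lost, each of the remaining (at least $T/2$) pulls costs at least $\frac1{10}$, giving $\frac{T}{20}\cdot\frac13\cdot\frac12=\frac{T}{120}$.
\end{itemize}

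\textbf{How your tools fit.} Your KL/Pinsker step can replace the citation if you take your tiny $\eta$ as the within-group gap. This works because the regret now comes from the constant gaps between groups, not from the hidden gap. It also lets you avoid splitting on the number of early pulls, which is random. Let $N$ be the number of pulls before time $2W$, and $L$ the event that the hidden first-group arm is not in memory after time $W$. On $L$ the two scenarios look identical to the algorithm before time $2W$, so
\[
\mathbb{E}[R_T]\ge\tfrac12\,\mathbb{E}\bigl[\bigl(\tfrac{T-N}{10}+\tfrac N5\bigr)\mathbf 1_L\bigr]\ge\tfrac{T}{20}\Pr[L],
\]
and your argument gives $\Pr[L]\ge\frac34$.
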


\Cref{thm:lower-bound-for-sliding-window-MAB-regret-everlasting-signal} shows an extremely sharp ``phase transition'' for the memory-regret trade-off: with $o(W)$ memory, we have to suffer $\Omega(T)$ regret. On the other hand, if we slightly increase the memory to $W$, we could achieve $0$ total regret. 

To prove \Cref{thm:lower-bound-for-sliding-window-MAB-regret-everlasting-signal} we will utilize the following result on the sample-memory trade-offs for \emph{storing an arm} in the memory from \cite{AWneurips22}.

\begin{proposition}[\cite{AWneurips22}, cf.~\cite{ChenHZ24}]
	\label{pro:sample-space-trade-off}
	Consider the following distribution of $m$ arms.
	\begin{tcolorbox}[colback=white,colframe=black,title=]
		\textbf{$\DIST(m, \sigma, \beta)$: A hard distribution with $m$ arms for trapping the best arm}
		\begin{enumerate}
			\item An index $i^*$ sampled uniform at random from $[m]$.
			\item For $i \neq i^*$, let the arms be with reward $\mu_i = \sigma$.
			\item For $i = i^*$, let the arm be with reward $\mu_{i^*} = \sigma + \beta$.
		\end{enumerate}
	\end{tcolorbox}
	Any algorithm that outputs (the indices of) $\frac{m}{8}$ arms that contain the best arm on DIST with a probability of at least $\frac{2}{3}$ has to use at least $\frac{1}{1200} \cdot \frac{m}{\beta^2}$ arm pulls.
\end{proposition}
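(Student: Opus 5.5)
The plan is a standard change-of-measure (KL plus Pinsker) argument against a null instance. By Yao's principle, it suffices to fix a deterministic algorithm $\ALG$ that makes at most $Q < \frac{1}{1200}\cdot\frac{m}{\beta^2}$ pulls. I will show that on $\DIST(m,\sigma,\beta)$ its output set $S$, with $|S| = m/8$, contains $\arm_{i^*}$ with probability strictly less than $2/3$. If $\ALG$ is only bounded in expectation, I first truncate it at a constant multiple of its expected budget using Markov's inequality. This costs only a small constant in success probability and is absorbed into the constants.

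The central comparison is with the null instance $\calH_0$, in which every arm has mean $\sigma$. Let $\calH_i$ denote $\DIST$ conditioned on $i^* = i$. Let $q_i := \EExpectation{\calH_0}{N_i}$ be the expected number of pulls of $\arm_i$ under the null, so that $\sum_i q_i \le Q$.

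\begin{itemize}
\item \textbf{KL bound.} By the divergence decomposition for adaptive sampling, the KL divergence between the transcript distributions under $\calH_0$ and $\calH_i$ is $q_i \cdot \mathrm{KL}(\sigma \,\|\, \sigma+\beta)$. This requires reward distributions for which this per-pull KL is at most $c\beta^2$. For Gaussian rewards the KL is exactly $\beta^2/2$. For Bernoulli rewards it is at most $c\beta^2$ with an absolute $c$, provided $\sigma$ is bounded away from $0$ and $1$ (say $\sigma\in[1/4,1/2]$, $\beta\le 1/4$), which is how the hard instance is instantiated.
\item \textbf{Pinsker.} Pinsker's inequality then gives $\Pr_{\calH_i}[i\in S] \le \Pr_{\calH_0}[i\in S] + \sqrt{c\,q_i\beta^2/2}$.
\end{itemize}

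Averaging over the uniformly random $i^*$ gives
\[
\Pr_{\DIST}[i^*\in S] \le \frac{1}{m}\sum_{i=1}^m \Pr_{\calH_0}[i\in S] + \frac{1}{m}\sum_{i=1}^m \sqrt{c\,q_i\beta^2/2}.
\]
The first term equals $\EExpectation{\calH_0}{|S|}/m = 1/8$. By Jensen (concavity of the square root), the second term is at most $\sqrt{c\beta^2 Q/(2m)} \le \sqrt{c/2400}$. With the constant $1200$, this is a small constant, so the total stays well below $2/3$, which is the desired contradiction.

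The main obstacle is making the KL step rigorous for adaptive, possibly randomized stopping. This needs the chain rule over the interaction transcript, namely the standard bandit divergence decomposition. It also needs the per-pull KL constant to be uniform in $\sigma$, which forces the restriction of $\sigma$ to an interior range (or Gaussian noise). I would handle this first, then check that the resulting constant $c$ is compatible with $1200$.
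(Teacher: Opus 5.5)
Your argument is correct. The paper gives no proof of this proposition; it imports it as a black box from \cite{AWneurips22}, so there is no in-paper argument to match. Your change of measure is a complete, self-contained replacement for the citation: the divergence decomposition $\mathrm{KL}(P_0\,\|\,P_i)=\mathbb{E}_0[N_i]\cdot\mathrm{KL}(\sigma\,\|\,\sigma+\beta)$ against the all-$\sigma$ null, Pinsker for each $i$, then averaging over $i^*$ with $\frac1m\sum_i\Pr_0[i\in S]=\mathbb{E}_0|S|/m=\frac18$ and Jensen.

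Compared with citing, your route gives two things. First, it shows the conclusion is much stronger than stated: with $Q$ pulls the trapping probability is at most $\frac18+\sqrt{c\beta^2Q/(2m)}$, so the constant $\frac1{1200}$ has a lot of slack. Second, it makes explicit a hypothesis the paper's statement leaves out: the per-pull KL between means $\sigma$ and $\sigma+\beta$ must be $O(\beta^2)$. This matters. Take Bernoulli rewards with $\sigma=0$, pull every arm $\lceil 2/\beta\rceil$ times, and keep the only arm that can return a $1$. This traps the best arm with probability at least $1-e^{-2}>\frac23$ using $O(m/\beta)$ pulls, which is fewer than $\frac{m}{1200\beta^2}$ for all sufficiently small $\beta$.

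Two small adjustments are needed.

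\emph{The range of $\sigma$.} Your illustrative range $\sigma\in[1/4,1/2]$ does not cover how the paper uses the proposition, namely on $\DIST(W,\frac35,\beta)$ with $\beta\le\frac1{10}$. Use $\mathrm{KL}\le\chi^2$ instead: $\mathrm{KL}(\mathrm{Bern}(\sigma)\,\|\,\mathrm{Bern}(\sigma+\beta))\le\beta^2/((\sigma+\beta)(1-\sigma-\beta))\le\beta^2/0.21$. Then $c<5$, and your bound is below $\frac18+\sqrt{5/2400}<0.18$.

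\emph{Budgets only in expectation.} Truncating at $6Q$ keeps the success probability at least $\frac23-\frac16=\frac12$. Your bound rises only to $\frac18+\sqrt{30/2400}<0.24$, so the constants close explicitly.

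After truncation the transcript has bounded length. So the step you flag as the main obstacle is just the standard bandit divergence decomposition, with no stopping-time subtleties. Yao's principle is also unnecessary: put the algorithm's internal random bits in the transcript, where they contribute zero KL.
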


The intuition behind our proof is that it is crucial not to overlook the best arm in a stream of options. By utilizing the distribution $\DIST$, we can construct scenarios where it requires a considerable number of pulls to identify the best arm. Additionally, we can create instances that include multiple distributions of $\DIST$, making it challenging to determine the best arm.

In these scenarios, an algorithm that uses a large number of arm pulls on earlier arms risks the possibility that the best arm may appear later in the stream. If the algorithm has already made too many pulls on suboptimal arms, it will incur substantial regret. Conversely, if the algorithm decides to conserve its pulls and primarily engages with the later part of the stream, there is a risk that the best arm could arrive early on, leading the algorithm to miss it entirely.
As a result, any algorithm faces the inherent risk of missing the best arm, which can lead to significant regret.

\begin{proof}[\textbf{The proof of \Cref{thm:lower-bound-for-sliding-window-MAB-regret-everlasting-signal}}]
	In the discussion by the start of \Cref{subsec:everlasting-regret}, we have already introduced the relatively simple algorithm that uses $W$ arm memory and achieves $0$ regret. We focus on proving the lower bound in the proof.
	
	According to Yao's minimax principle \cite{Yao1977ProbabilisticCT}, it is sufficient to establish the lower bound for deterministic algorithms over a challenging distribution of inputs. 
	
	We will first introduce the \( \CONST \) distribution for clarity.
	
	\begin{tcolorbox}[colback=white,colframe=black,title=]
		\textbf{$\CONST(m, \sigma)$: A distribution with $m$ arms with the same means}
		\begin{enumerate}
			\item $\forall i \in [m]$, $\mu_i = \sigma$.
		\end{enumerate}
	\end{tcolorbox}
	Let $\beta = \min \{ \sqrt{\frac{W}{1200T}}, \frac{1}{10} \}$. Consider the following distribution of $n$ arms.
	\begin{tcolorbox}[colback=white,colframe=black,title=]
		\textbf{$\SIGNAL(n, W, \beta)$: A hard distribution with $n$ arms for regret minimization with an everlasting best arm with expiration signal}
		\begin{enumerate}
			\item The first $W$ arms of $\SIGNAL(n, W, \beta)$ are $\DIST(W, \frac{3}{5}, \beta)$.
			\item The $W+1$-th to $2W$-th arms are $\CONST(W, \frac{1}{5})$.
			\item With probability of $\frac{1}{2}$, $2W+1$-th to $3W$-th arms are $\DIST(W, \frac{2}{5}, \beta)$, otherwise, $\DIST(W, \frac{4}{5}, \beta)$.
			\item The remaining $n-3W$ arms are $\CONST(n-3W, \frac{1}{5})$.
		\end{enumerate}
	\end{tcolorbox}
	For any deterministic algorithm $\ALG$, let \( T_{\ALG} \) denote the number of pulls it uses until time \( 2W \). Since \( \ALG \) is a deterministic algorithm and the first \( 2W \) arms are the same in both scenarios, \( T_{\ALG} \) remains consistent regardless of whether the arms from \( 2W + 1 \) to \( 3W \) follow the distribution \( \DIST(W, \frac{2}{5}, \beta) \) or \( \DIST(W, \frac{4}{5}, \beta) \).
	
	Let \( \mathcal{E}_1 \) be the event that the arms from \( 2W + 1 \) to \( 3W \) are \( \DIST(W, \frac{2}{5}, \beta) \) and \( \mathcal{E}_2 \) be the event that they are \( \DIST(W, \frac{4}{5}, \beta) \). If \( T_{\ALG} \le \frac{T}{2} \), then we have:
	\[ \Expectation{R_T} = \Expectation{R_T|\calE_1} \cdot \Probability{\calE_1} + \Expectation{R_T|\calE_2} \cdot \Probability{\calE_2} \ge \Expectation{R_T|\calE_1} \cdot \Probability{\calE_1}. \]
	Since \( \frac{1}{1200} \cdot \frac{W}{\beta^2} \ge T > \frac{1}{2} T \ge T_{\ALG} \), by \Cref{pro:sample-space-trade-off}, the probability that the best arm is stored in memory at time \( W + 1 \) is at most \( \frac{2}{3} \). Let \( \mathcal{F} \) be the event that the best arm is stored in memory at time \( W + 1 \). Then,
	\[
	\Expectation{R_T \mid \mathcal{E}_1} = \Expectation{R_T \mid \mathcal{E}_1 \cap \mathcal{F}} \cdot \Probability{\mathcal{F}} + \Expectation{R_T \mid \mathcal{E}_1 \cap \neg \mathcal{F}} \cdot \Probability{\neg \mathcal{F}} \ge \Expectation{R_T \mid \mathcal{E}_1 \cap \neg \mathcal{F}} \cdot \Probability{\neg \mathcal{F}}.
	\]

	When the event \( \mathcal{E}_1 \cap \neg \mathcal{F} \) occurs, the best arm is not stored in memory at time \( 2W + 1 \), and all the arms with means \( \frac{3}{5} \) have expired. The remaining arms have means at most \( \frac{2}{5} + \beta \leq \frac{2}{5} + \frac{1}{10} = \frac{1}{2} \). Since we can only pull valid arms, the arms we can pull have a mean reward of at most $\frac{1}{2}$. Thus, the regret for each pull after time \( 2W \) is at least 
    
	\[
	\frac{3}{5} + \beta - \frac{1}{2} \ge \frac{1}{10}.
	\]
	
	Therefore, we have:
	\[
	\Expectation{R_T \mid \mathcal{E}_1 \cap \neg \mathcal{F}} \ge \left( T - T_{\ALG} \right) \cdot \frac{1}{10} \ge \left( T - \frac{T}{2} \right) \cdot \frac{1}{10} = \frac{T}{20}.
	\]
	
	Thus,
	\[
	\Expectation{R_T} \ge \Expectation{R_T \mid \mathcal{E}_1} \cdot \Probability{\mathcal{E}_1} \ge \Expectation{R_T \mid \mathcal{E}_1 \cap \neg \mathcal{F}} \cdot \Probability{\neg \mathcal{F}} \cdot \Probability{\mathcal{E}_1} \ge \frac{T}{20} \cdot \frac{1}{3} \cdot \frac{1}{2} = \frac{T}{120}.
	\]
	
	On the other hand, if \( T_{\ALG} \ge \frac{T}{2} \), then we have:
	\[
	\Expectation{R_T} \ge \Expectation{R_T \mid \mathcal{E}_2} \cdot \Probability{\mathcal{E}_2}.
	\]
	
	When event \( \mathcal{E}_2 \) occurs, the best arm has a mean of \( \frac{4}{5} + \beta \). The regret for each pull on the first \( 2W \) arms would be at least 
	\[
	\frac{4}{5} + \beta - \frac{3}{5} - \beta = \frac{1}{5}.
	\]
	
	Therefore,
	\[
	\Expectation{R_T} \ge \frac{1}{2} \cdot \Expectation{R_T \mid \mathcal{E}_2} \ge \frac{1}{2} T_{\ALG} \cdot \frac{1}{5} \ge \frac{T}{20}.
	\]
	
	In conclusion, since \( \Expectation{R_T} \ge \frac{T}{120} \) regardless of whether \( T_{\ALG} \ge \frac{T}{2} \) or not, we have completed our proof.
	\myqed{\Cref{thm:lower-bound-for-sliding-window-MAB-regret-everlasting-signal}}
\end{proof}

\subsection{Regret minimization with everlasting best arm and implicit valid flag}
\label{subsec:everlasting-regret-implicit}

The second scenario is when we can still pull an expired arm, but doing so incurs a significant penalty. In this situation, any arm can be pulled at any time; however, if an expired arm is selected, a regret penalty of $1$ is incurred.

Furthermore, we assume that the penalty associated with pulling an expired arm is not immediately known. If we were to be instantly informed about any penalties, the scenario could be simplified: we would incur a $1$ regret penalty for all non-best arms in an effort to identify the best arm. This would lead to a total regret of $n-1$, which is negligible given that \( T \gg n \).

In this context, we can still assume the existence of a valid flag function, denoted as $\flag$, to indicate whether an arm is valid. However, the algorithm cannot access this flag; therefore, it cannot determine whether an arm is valid.

\begin{definition}[Regret minimization with an everlasting best arm and implicit valid flag]
    \label{def:regret-everlasting-implicit-flag}
    Let $\{\arm_i\}_{i=1}^{n}$ represent a collection of \( n \) arms with an everlasting best arm, $\armstar$. Let \( W \) be the window size and \( T \) be the total number of trials. Denote $\mustar$ as the mean reward of the best arm $\armstar$ (among all arms), and let \( t \) denote the variable for the index of the arriving arm. In this scenario, an implicit $\flag$ function exists. Any arm $\arm_i$ can be pulled at any time \( t \), but if the flag indicates it is invalid (i.e., $\flag(\arm_i, t) = \texttt{False}$), a regret of $1$ is incurred. Let $\{i(\tau)\}_{\tau=1}^{T}$ be the set of indices of arms pulled by a given algorithm, and let $\{\flag_\tau = \flag(\arm_{i(\tau)}, t) \}_{\tau=1}^{T}$ represent the validity flag for the arms that were pulled. The total regret is defined as \( R_T := \sum_{\flag_\tau = \texttt{True}}(\mustar - \mu_{i(\tau)}) + \sum_{\flag_\tau = \texttt{False}} 1 \).
\end{definition}

In this setting, an $\Omega(W)$ memory is still necessary to achieve a total regret of $o(T)$. Although there is an everlasting best arm, the lack of a signal about whether an arm has expired makes this setting strictly more challenging than the one in which we receive an expiry signal. Thus, the claims and proofs applicable to the setting with signals also remain valid in this case.

\begin{lemma}
    \label{clm:lower-bound-for-sliding-window-MAB-regret-everlasting-non-signal}
    There exists a family of streaming stochastic multi-armed bandit instances such that, for any given parameters $T$, $n$, and $W$, where $T \ge n \ge 4W$, any single-pass streaming algorithm for a sliding-window stream of length $n$ with a window size $W$ and a memory of $\frac{W}{8}$ arms must incur a total expected regret of at least 
    \[ \Expectation{R_T} \ge \frac{T}{120}. \]
\end{lemma}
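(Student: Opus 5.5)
The plan is to reduce to \Cref{thm:lower-bound-for-sliding-window-MAB-regret-everlasting-signal}: any algorithm for the implicit-flag setting can be converted, without increasing memory or regret, into an algorithm for the explicit-flag setting. Equivalently, I would rerun the same hard instance $\SIGNAL(n, W, \beta)$ with $\beta = \min\{\sqrt{W/(1200T)}, 1/10\}$ and check that each step of that argument still goes through when the algorithm gets no expiry signal.

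\textbf{Reduction step.} Fix a deterministic algorithm $\ALG$ with memory $W/8$ in the implicit-flag model; by Yao's principle, deterministic algorithms suffice. Build $\ALG'$ for the explicit-flag model. It simulates $\ALG$, and whenever $\ALG$ would pull an arm whose flag is \texttt{False}, $\ALG'$ instead pulls some valid arm and feeds $\ALG$ a fictitious sample. For example, it pulls the currently arriving arm (always valid) and hands $\ALG$ a reward drawn from an arbitrary fixed distribution; a deterministic $0$ is fine, since $\ALG'$ may use internal randomness or we may fix the answer, and Yao's argument still applies.

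\textbf{Regret comparison.} Each substituted pull costs $\ALG'$ at most $1$ regret (means lie in $[0,1]$), while it costs $\ALG$ exactly $1$. Every other pull is identical in both runs. Hence $R_T(\ALG') \le R_T(\ALG)$ pathwise, and so also in expectation.

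The subtle point is that $\ALG$ would actually have seen a real sample from the expired arm rather than a fictitious one. So I would avoid simulating its exact distribution and instead argue directly on $\SIGNAL$, noting that $\ALG$'s behaviour up to time $2W$ is independent of which event $\calE_1$ or $\calE_2$ holds. Pulls on expired arms give only information about arms from the first $2W$ positions, whose means are identical under $\calE_1$ and $\calE_2$. Then:
\begin{itemize}
\item The $T_{\ALG}\ge T/2$ case is unchanged, with per-pull regret at least $1/5$ under $\calE_2$.
\item In the $T_{\ALG}\le T/2$ case, \Cref{pro:sample-space-trade-off} still bounds by $2/3$ the probability that the $\DIST(W,\tfrac35,\beta)$ best arm is stored with $W/8$ memory, because that proposition is a pure sample/memory statement.
\item On $\calE_1\cap\neg\calF$, after time $2W$ every pull is either on a valid arm with mean at most $1/2$ (regret at least $1/10$) or on an expired arm (regret $1$). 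Both are at least $1/10$.
\end{itemize}
These reproduce the $T/20$ and $T/120$ bounds.

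The main obstacle is making precise that extra observations from expired arms cannot help break the lower bound of \Cref{pro:sample-space-trade-off}. The point to check is that pulls on expired first-block arms after time $W$ still count toward the sample budget $T_{\ALG}$, or else carry regret at least $1/10$ each. I would handle this by counting every pull on first-block arms, including expired ones, in the budget; those made after expiry already incur regret $1$, so they only strengthen the bound.
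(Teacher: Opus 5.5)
Your final argument is correct and is essentially what the paper means when it says the proofs for the signal setting remain valid here: rerun the $\SIGNAL(n,W,\beta)$ proof directly, using that with no expiry signal the algorithm sees only samples from the first $2W$ arms before time $2W$, so its behaviour there cannot depend on whether $\calE_1$ or $\calE_2$ holds. Present only that argument. The simulation reduction does not work as written: once $\ALG$ is fed a fictitious sample the two runs diverge, so ``every other pull is identical'' and the pathwise bound $R_T(\ALG')\le R_T(\ALG)$ are unjustified. Your ``main obstacle'' is also moot, because the first-block arms in memory after time $W$ are fixed by pulls made at times $\le W$, when no arm has yet expired.
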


However, in this setting, the upper bound of total regret with trivial space complexity \( W - 1 \) is no longer \( \OO{n, W}{1} \). Since we are not informed whether an arm is expired, it becomes challenging to identify the best arm easily. When arms have means close to the best arm, we must pull these arms numerous times, which leads to significant regret, as each pull on an expired arm incurs a regret penalty of \( 1 \). Furthermore, even after many pulls on these arms, we may still incorrectly identify the best arm, resulting in additional regret. In fact, achieving \( o(T) \) total regret is impossible even with a memory capacity of \( n - 1 \), which is an even stronger condition than \( W - 1 \) memory.

\begin{lemma}
    \label{clm:stronger-lower-bound-for-sliding-window-MAB-regret-everlasting-non-signal}
    There exists a family of streaming stochastic multi-armed bandit instances such that, for any given parameters \( T \), \( n \), and \( W \) with \( T \ge n \ge 8W \), any single-pass streaming algorithm for a sliding window stream of length \( n \), with a window size \( W \) and a memory of \( n-1 \) arms, must incur
    \[
    \mathbb{E}[R_T] \ge \frac{T}{1800}
    \]
    total expected regret.
\end{lemma}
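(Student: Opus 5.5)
Throughout, ``regret'' is the implicit-flag regret of \Cref{def:regret-everlasting-implicit-flag}: the algorithm may store every arm except one, yet it cannot tell which stored arm is still valid, and a wrong guess costs $1$ per pull. The plan is a Yao-style argument that reuses the two-branch structure of the proof of \Cref{thm:lower-bound-for-sliding-window-MAB-regret-everlasting-signal}, with the missing flag playing the role that the missing memory played there.

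\textbf{Hard distribution.} Set $\beta=\min\{\sqrt{W/(1200T)},1/10\}$. The first $W$ arms are drawn from $\DIST(W,\tfrac{3}{5},\beta)$, followed by filler arms from $\CONST(\cdot,\tfrac{1}{5})$ for the rest of the stream. With probability $1/2$ the designated everlasting best arm $\armstar$ is the planted arm $i^*$ among the first $W$. Otherwise $\armstar$ is a planted arm (mean $\tfrac{4}{5}$, say) in a later block. The block layout is chosen so that, at some time $t_0\le n$, the whole first block has expired; this uses $n\ge 8W$.

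\textbf{Key steps.}
\begin{enumerate}
\item Fix a deterministic algorithm and let $T_{\ALG}$ be its number of pulls before the later block arrives. This quantity is identical in both branches, because the prefix the algorithm sees is the same.
\item \emph{Case $T_{\ALG}\ge T/2$.} In the branch where $\armstar$ lies late, every early pull loses at least $\tfrac{4}{5}-\tfrac{3}{5}-\beta\ge\tfrac{1}{10}$. This gives regret $\Omega(T)$ with probability $1/2$.
\item \emph{Case $T_{\ALG}<T/2$.} Since $T_{\ALG}\le\frac{1}{1200}\cdot\frac{W}{\beta^2}$, I apply \Cref{pro:sample-space-trade-off} in a ``guess a single index'' form. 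It says that after time $t_0$ the algorithm cannot name $i^*$ among the expired first block with probability above $2/3$. For this I argue that any pulling policy on first-block arms after $t_0$ yields either a correct identification or little further information.
\item The point of the reduction: after $t_0$, pulling a non-planted first-block arm costs $1$ (it is expired), while pulling only current valid arms costs at least $\tfrac{3}{5}+\beta-\tfrac{1}{5}$ per pull. In either case the per-pull loss is a constant whenever $i^*$ is not correctly identified.
\item Combining the factors, namely probability $1/2$ for the branch, at least $1/3$ that $i^*$ is not identified, and at least $T/2$ remaining pulls times a constant loss, yields roughly $T/1800$.
\end{enumerate}

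\textbf{Main obstacle.} Step 3 is the hard part. Unlimited memory means the algorithm can keep all first-block arms and keep exploring them after they expire, so \Cref{pro:sample-space-trade-off} no longer applies directly: further exploration is possible, but each such pull is penalized by $1$. I would handle this with a budget argument. Any post-expiry exploration that raises the identification probability must itself spend $\Omega(W/\beta^2)=\Omega(T)$ pulls on expired arms, since the total sample count is what \Cref{pro:sample-space-trade-off} bounds. Those pulls are mostly on non-planted expired arms and each costs $1$, so the exploration itself incurs $\Omega(T)$ regret.

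The resulting dichotomy is: either too few samples are spent, so $i^*$ is missed with probability at least $1/3$; or at least $T/4$ pulls are spent on mostly invalid arms. In both cases the regret is $\Omega(T)$, and careful bookkeeping of the constants should give the stated $T/1800$.
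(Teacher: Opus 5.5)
\textbf{There is a genuine gap at Steps~3--5: you never connect the algorithm's post-expiry regret to an identification event, and that is exactly where the lower bound lives.} Your hard distribution is fine (one hidden block suffices, since in the branch where $\armstar$ arrives late you only use the $\frac15-\beta$ gap), and so is the split on pulls made before the later block.

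Step~4 asserts a constant per-pull loss ``whenever $i^*$ is not correctly identified,'' and Step~5 charges all remaining pulls on that event. But the algorithm never announces a guess; it only pulls. An algorithm that cannot pin down $i^*$ can still put a large fraction of its late pulls on $i^*$, for example by spreading them over a few candidates. So non-identification by itself forces no regret.

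The budget argument does not repair this. The claim that post-expiry exploration pulls are ``mostly on non-planted expired arms'' is unsupported: pulls on $i^*$ cost nothing, and an algorithm focusing on a handful of candidates spends a constant fraction of its pulls there. The dichotomy is also beside the point. Since $T\ge 8W$, your $\beta$ equals $\sqrt{W/(1200T)}$, so $\frac{1}{1200}\cdot\frac{W}{\beta^2}=T$ is the entire budget. Hence \Cref{pro:sample-space-trade-off} already constrains any procedure built from the \emph{whole} run, post-expiry pulls included (shrink $\beta$ by a constant factor to make the inequality strict). Bounding $T_{\ALG}$ alone buys nothing.

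\textbf{The missing idea is an explicit decoder that turns low regret into identification.} This is what the paper's Case~3 does with the event $\calG$ that the best arm is pulled at most $T/6$ times after time $4W$; if $\Pr[\calG]\le\frac1{10}$, one could identify the best arm with too few samples. In your layout it goes as follows.
\begin{enumerate}
\item Let the later block start at time $2W+1$, so the first block is fully expired from time $2W$ on.
\item Simulate the algorithm on a $\DIST(W,\frac35,\beta)$ instance, generating the fillers for free. Output the first-block arm pulled most often after time $2W$.
\item This uses at most $T$ samples of the $\DIST$ arms, so it is correct with probability at most $\frac23$.
\item Whenever it is wrong, some $j\neq i^*$ received at least as many late pulls as $i^*$. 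Then at least half of the late pulls miss $i^*$, each costing at least $\frac25$ (valid filler) or $1$ (expired arm).
\end{enumerate}

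\textbf{Step~5 needs one more repair.} $T_{\ALG}$ is random, so you cannot simply multiply $\frac12$, $\frac13$ and ``$T/2$ remaining pulls.'' Since $\frac12+\frac13<1$, the events $\{T_{\ALG}<T/2\}$ and ``decoder wrong'' need not even intersect. Split instead on whether $\Pr[T_{\ALG}\ge T/2]\ge\frac16$. If it is, the late-$\armstar$ branch gives regret $\Omega(T)$. If not, then in the early branch $\Pr[T_{\ALG}<T/2 \text{ and decoder wrong}]\ge\frac16$, and on that event the regret exceeds $T/10$.

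With these fixes your two-case route gives about $T/240$, which is better than $T/1800$.
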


The intuition behind this is that we cannot identify the best arm even after \( T \) pulls in \( \DIST(W, \frac{3}{5}, \beta) \) with \( \beta = \O{\frac{1}{T}} \), since \( \O{\frac{W}{\eps^2}} \) pulls are required to identify the \( \eps \)-best arm among \( W \) arms. In this scenario, when the arm is not expired, it will incur \( \beta \) regret per pull, while an expired arm incurs a regret of \( 1 \). Thus, a sound strategy would be to pull the arms before they expire. However, if there are two such distributions in the stream, we cannot determine in advance which distribution contains the best arm. Consequently, we risk either overspending pulls on the wrong distribution or incurring \( 1 \) regret per pull by not allocating most pulls to the correct distribution. This results in a total regret of \( \Theta(T) \) in either situation.

\begin{proof}
     By Yao’s minimax principle \cite{Yao1977ProbabilisticCT}, it is sufficient to demonstrate the lower bound for deterministic algorithms in the face of a challenging distribution of inputs. Let’s consider the following distribution:
    \begin{tcolorbox}[colback=white,colframe=black,title=]
    \textbf{$\SIGNAL'(n, W, \beta)$: A hard distribution with $n$ arms for regret minimization with an everlasting best arm with expiration signal}
    \begin{enumerate}
        \item The first $W$ arms of $\SIGNAL(n, W, \beta)$ is $\DIST(W, \frac{3}{5}, \beta)$.
        \item The $W+1$-th to $2W$-th arms are $\CONST(W, \frac{1}{5})$.
        \item With probability of $\frac{1}{2}$, $2W+1$-th to $3W$-th arms are $\DIST(W, \frac{2}{5}, \beta)$, otherwise, $\DIST(W, \frac{4}{5}, \beta)$.
        \item The remaining $n-3W$ arms are $\CONST(n-3W, \frac{1}{5})$.
    \end{enumerate}
    \end{tcolorbox}

    According to \Cref{pro:centralized-lower-bound}, there exist constants $c_1$ and $c_2$ such that any algorithm using no more than $c_1 \frac{W}{\beta^2} \log c_2$ pulls will not reliably return the $\frac{\beta}{2}$-best arm with a probability of at least $\frac{3}{4}$. We set $\beta = \min \{ \frac{1}{10}, \frac{c_1 W}{2T} \log c_2 \}$. 

    Now, consider $\SIGNAL'(n, W, \beta)$. Let $T_1$ denote the number of pulls made before time $2W+1$, $T_2$ the number of pulls made between times $2W+1$ and $4W$, and $T_3$ the number of pulls made after time $4W$. Let $\calE_i$ represent the event that $T_i \ge \frac{T}{3}$. Since $T_1 + T_2 + T_3 = T$, at least one of the events $\calE_i$ must occur. Thus, we have:
    \[
    \Expectation{R_T} = \Expectation{R_T \mid \calE_i} \cdot \Probability{\calE_i} + \Expectation{R_T \mid \neg \calE_i} \cdot \Probability{\neg \calE_i} \ge \Expectation{R_T | \calE_i} \cdot \Probability{\calE_i}.
    \]

    Therefore, it suffices to show that $\Expectation{R_T \mid \calE_i} \ge \frac{T}{600}$ for each $i \in [3]$.
 
    \paragraph{Case 1: $\calE_1$ occurs.}
    Let $\calF_1$ be the event that the arms from $2W+1$ to $3W$ are drawn from $\DIST(W, \frac{2}{5}, \beta)$, and let $\calF_2$ be the event that these arms are drawn from $\DIST(W, \frac{4}{5}, \beta)$. Hence,
    \[
        \Expectation{R_T \mid \calE_1} \ge \Expectation{R_T \mid \calE_1 \cap \calF_2} \cdot \Probability{\calF_2} = \Expectation{R_T \mid \calE_1 \cap \calF_2} \cdot \frac{1}{2}.
    \]

    When $\calF_2$ occurs, each pull on the first $2W$ arms results in at least $\frac{1}{10}$ regret. Since we spend at least $\frac{T}{3}$ pulls on these first $2W$ arms when $\calE_1$ occurs, we have:
    \[
        \Expectation{R_T \mid \calE_1} \ge \Expectation{R_T \mid \calE_1 \cap \calF_2} \cdot \frac{1}{2} \ge \frac{1}{10} \cdot \frac{T}{3} \cdot \frac{1}{2} = \frac{T}{60}.
    \]

    \paragraph{Case 2: $\calE_2$ occurs.}
    Similarly,
    \[
        \Expectation{R_T \mid \calE_2} \ge \Expectation{R_T \mid \calE_2 \cap \calF_1} \cdot \Probability{\calF_1} = \Expectation{R_T \mid \calE_2 \cap \calF_1} \cdot \frac{1}{2}.
    \]

    When $\calF_1$ occurs, each pull on the arms from $2W+1$ to $4W$ leads to at least $\frac{1}{10}$ regret. Since we make at least $\frac{T}{3}$ pulls in this range when $\calE_2$ occurs,

    \[
        \Expectation{R_T \mid \calE_2} \ge \Expectation{R_T \mid \calE_2 \cap \calF_1} \cdot \frac{1}{2} \ge \frac{1}{10} \cdot \frac{T}{3} \cdot \frac{1}{2} = \frac{T}{60}.
    \]

    \paragraph{Case 3: $\calE_3$ occurs.}
    Given that $\beta = \frac{c_1 W}{2T} \log c_2$, it is impossible to return the $\frac{\beta}{2}$-best arm (which is the exact best arm in this distribution) with a probability of at least $\frac{3}{4}$ by using a maximum of $T$ pulls. Let $\calG$ be the event that we pull at most $\frac{T}{6}$ times on the best arm after time $4W$. If $\Probability{\calG} \le \frac{1}{10}$, we can devise a strategy that distinguishes the best arm from the others, which is impossible. Hence, $\Probability{\calG} \ge \frac{1}{10}$.

    After time $4W$, pulling a valid arm with a mean of $\frac{1}{5}$ results in at least $\frac{1}{10}$ regret, while pulling from an expired arm incurs a regret of $1$. Thus, we incur at least $\frac{1}{10}$ regret if we do not pull the best arm after time $4W$. When $\calG$ occurs, we will spend at least $\frac{T}{6}$ pulls on arms other than the best arm beyond time $4W$, leading to:

    \[
        \Expectation{R_T \mid \calE_3} \ge \Expectation{R_T \mid \calE_3 \cap \calG} \cdot \Probability{\calG} \ge \frac{1}{10} \cdot \frac{T}{6} \cdot \frac{1}{10} = \frac{T}{600}.
    \]

    Since at least one of the events $\calE_i$ must occur, we have:
    \[
        \max_{i \in [3]} \{ \Probability{\calE_i} \} \ge \frac{1}{3}.
    \]

    Therefore,

    \[
        \Expectation{R_T} \ge \max_{i \in [3]} \{ \Expectation{R_T \mid \calE_i} \cdot \Probability{\calE_i} \} \ge \frac{T}{600} \cdot \max_{i \in [3]} \{ \Probability{\calE_i} \} \ge \frac{T}{600} \cdot \frac{1}{3} = \frac{T}{1800}.
    \]
\end{proof}

\section{Failure of State-of-the-Art Algorithms in Vanilla Streaming MABs}
\label{sec:example-failure-streaming-MABs}
In this section, we will provide simple counterexamples and proofs to illustrate why the state-of-the-art algorithms used in vanilla streaming multi-armed bandits (MABs) do not perform well in sliding-window MABs.

It is important to note that, based on \Cref{thm:lower-bound-for-sliding-window-MAB-1,thm:lower-bound-for-sliding-window-MAB-regret-epoch-wise} that we have established, we have demonstrated that any single-pass streaming algorithm utilizing \(o(W)\) memory will fail to address both the weak and strong exploration problems or achieve a regret of \(o(T)\). Consequently, the state-of-the-art algorithms from vanilla streaming MABs will also fail in the sliding-window setting, as supported by our theorems. While we have already established this in a general context, we believe that providing a simpler, specific proof related to these algorithms will help readers better understand why algorithms with \(o(W)\) memory fail in sliding-window scenarios.

The state-of-the-art algorithm for streaming exploration was proposed by \cite{AssadiW20}. This algorithm can identify the best arm with a probability of at least \(1 - \delta\) using only 1 unit of memory and
\( O\left(\frac{n}{\eps^2} \log\left(\frac{1}{\delta}\right) + \log^2(n) \cdot \frac{\log^2\left(\frac{1}{\delta}\right)}{\eps^3}\right) \)
pulls. However, we will present a counterexample demonstrating that the algorithm by \cite{AssadiW20} cannot effectively address weak exploration in the sliding-window setting, even with a success probability of at least $0.6$.

\begin{lemma}
    \label{lem:failure_of_exploration}
    There exists a data stream such that the algorithm by \cite{AssadiW20} cannot resolve the weak exploration for this stream with a probability of at least $0.6$.
\end{lemma}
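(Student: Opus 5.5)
The approach uses the structural fact about the algorithm of \cite{AssadiW20}: it keeps a \emph{single} stored arm (the ``king'') and only replaces the king by an arriving arm that beats it in a challenge. Any arm it returns is therefore the current king. Because the king is never replaced by a worse arm, a king that expires leaves nothing in memory at all; under the sliding-window rule that expired arms are evicted immediately, the algorithm then either returns nothing or returns an invalid arm, and in both cases it fails.

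I would reuse the descending-means instance from \Cref{thm:lower-bound-for-sliding-window-MAB-1}. Take $n=2W$ and $\mu_i = 1-\frac{i}{3W}$, so that all gaps are at least $\frac{1}{3W}$ and the best arm in the window at time $t>W$ is $\arm_{t-W+1}$. Since $\arm_1$ is strictly the best arm overall, any correct streaming best-arm algorithm keeps $\arm_1$ as king with high probability. For the exact (weak) exploration question, \cite{AssadiW20} keeps $\arm_1$ as king with probability $1-\delta$. Once $t\ge W+1$, $\arm_1$ has expired and is evicted. The algorithm then holds only the arriving arm $\arm_t$, which can never be the answer because $\arm_{t-W+1}$ is better. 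So for a fixed time $t$ uniform in the second half, or for any specific fixed $t\in[W+1,2W]$, the algorithm fails with probability at least $1-\delta$.

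For the strongest form of the claim, I would choose the fixed time $t=2W$, where $\armstar(W,t)=\arm_{W+1}$, and argue as follows:
\begin{itemize}
\item The king at time $W$ is $\arm_1$ except with probability at most $\delta$.
\item After $\arm_1$ expires, the only way the algorithm holds $\arm_{W+1}$ at time $2W$ is if a new king is installed at time $W+1$ and then survives challenges from the strictly worse arms $\arm_{W+2},\dots,\arm_{2W}$.
\item Under the single-king-with-eviction reading, no new king is installed at time $W+1$, since the king slot was just emptied by eviction and the arriving arm is only challenged, not stored. The failure probability is then at least $1-\delta>0.4$ for, say, $\delta<0.4$.
\end{itemize}
To make this robust, I would note that the algorithm's guarantee of returning the \emph{global} best arm is exactly what forces it to keep $\arm_1$.

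The main obstacle is pinning down how the \cite{AssadiW20} algorithm is interpreted in the sliding-window model, namely what it does once its king is evicted. I would fix the natural adaptation: it runs unchanged, and it outputs the king if the king is valid and nothing otherwise. Under that adaptation the argument reduces to showing that, with probability at least $0.4$, the king at the chosen time is not $\armstar(W,t)$.

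If instead the adaptation promotes the arriving arm when the memory empties, I would change the instance to a decreasing block followed by a later strong arm. Concretely, the window would contain an older moderate arm $\arm_{W+1}$ plus slightly worse later arms, with the gaps made tiny (for example $\frac{1}{3W}$). The challenge budget of \cite{AssadiW20} for such gaps then lets a worse arm dethrone the true best with constant probability, and this per-window error can be tuned so that failure occurs with probability above $0.4$.
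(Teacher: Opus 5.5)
Your instance and core observation match the paper's: the means are descending, and with a single memory slot, being correct at time $W$ forces $\arm_1$ into the slot, so $\arm_2,\dots,\arm_W$ are irretrievably discarded. The problem is that your main line picks the wrong time. At $t=2W$ the window-best arm is $\arm_{W+1}$, which arrives \emph{after} $\arm_1$ is evicted, so failure there depends entirely on what the algorithm does once its king expires. Take the natural adaptation: an empty king slot is filled by the arriving arm, exactly as \cite{AssadiW20} does for the first arm of the stream. Then $\arm_{W+1}$ becomes king at time $W+1$ and, being better than every later challenger, is returned at time $2W$ with high probability. This certainly holds if the algorithm is tuned for exact identification, as you assume. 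So your bound fails at $t=2W$, and your claim that after eviction ``the algorithm holds only the arriving arm $\arm_t$'' is false under this adaptation. Your fallback, tiny gaps so that worse arms dethrone the best with constant probability, is unsupported. It also conflicts with the exact-identification guarantee you used to put $\arm_1$ in the slot. The ``main obstacle'' you identify is self-inflicted. For every fixed $t\in[W+1,2W-1]$ (not $2W$), the window-best arm $\arm_{t-W+1}\in\{\arm_2,\dots,\arm_W\}$ arrived before time $W$. So if the slot holds $\arm_1$ at time $W$, that arm is already gone, regardless of any post-eviction rule.

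A second, smaller issue: you rely on \cite{AssadiW20} keeping $\arm_1$ with probability $1-\delta$, which requires instantiating it with accuracy below the gap $\frac{1}{3W}$. The paper avoids any such assumption with a dichotomy between $t=W$ and $t=W+1$. To output $\arm_1$ at time $W$, the single slot must hold $\arm_1$, since the arriving arm is $\arm_W$. In that case $\arm_2$, the best arm at time $W+1$, was never retained and is inaccessible at time $W+1$. Hence the events ``correct at $W$'' and ``correct at $W+1$'' are disjoint, their probabilities sum to at most $1$, and one of them is below $0.6$. This uses only the one-arm memory, and it holds for any parameter setting and any post-eviction behavior. Adopting it, with $t=W+1$ in place of $t=2W$, closes your proof.
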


\begin{proof}
    Consider the data stream defined by \(\mu_i = 1 - \frac{i-1}{n}\), which has a decreasing mean. In this scenario, the best arm from time \(t = 1\) to \(W\) is \(\arm_1\), while the best arm at time \(t = i\) for \(i > W\) is \(\arm_{i - W + 1}\).

    If the algorithm can correctly identify the best arm from time \(t = 1\) to \(W\) with a probability of at least $0.6$, since it has only 1 unit of memory, the arm stored at that time must be \(\arm_1\). Consequently, at time \(t = W\), only \(\arm_1\) will be stored in memory, causing us to lose access to \(\arm_2\) indefinitely, which is the best arm at time \(t = W + 1\). Thus, it becomes impossible for the algorithm to return the correct best arm at time \(t = W + 1\) with a probability greater than \(1 - 0.6 = 0.4\). This indicates that the algorithm fails to resolve weak exploration with a probability of at least $0.6$.

    Conversely, if the algorithm fails to return the correct best arm from time \(t = 1\) to \(W\) with a probability of at least $0.6$, it also indicates a failure in solving weak exploration with a probability of at least $0.6$.

    Therefore, it is impossible for the algorithm to successfully resolve weak exploration with a probability of at least $0.6$.
\end{proof}

The state-of-the-art algorithm for regret minimization was proposed by \cite{Wang23Regret}. This algorithm achieves a total regret of \( O\left(n^{1/3}T^{2/3}\right) \), which matches the lower bound of regret for streaming multi-armed bandits (MABs), using \(\lceil \log^* n \rceil + 1\) units of memory. The key idea behind their algorithm is to first identify an \(\eps\)-best arm for the entire stream and then allocate the remaining pulls to that \(\eps\)-best arm. By choosing \(\eps = O\left(\sqrt[3]{\frac{n \log n}{T}}\right)\), this approach minimizes the total regret to \( O\left(n^{1/3}T^{2/3}\right) \).

However, their algorithm cannot be applied to the sliding-window setting. The strategy of locating an \(\eps\)-best arm and dedicating all remaining pulls to it is flawed, as the \(\eps\)-best arm can expire over time, making it unavailable for pulling when it does. Therefore, the algorithm proposed by \cite{Wang23Regret} is not even well-defined in the sliding-window context.

A natural attempt to adapt the algorithm for the sliding-window setting would be to identify an \(\eps\)-best arm for each epoch and allocate all remaining pulls for that epoch to the identified arm. However, we will demonstrate that there exist data streams that require the algorithm to utilize at least \( O\left(\frac{1}{\eps}\right) \) memory to return the \(\eps\)-best arm for each epoch.

\begin{lemma}
    \label{lem:eps-memory}
    There exist data streams and \(\eps \in (0, 1)\) such that it is impossible to return the \(\eps\)-best arm at any time \( t \) with a probability of at least \( 0.6 \) using only \( o\left(\frac{1}{\eps}\right) \) memory.
\end{lemma}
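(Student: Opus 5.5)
The plan is to adapt the decreasing-means construction from the proof of \Cref{thm:lower-bound-for-sliding-window-MAB-1}. The only change is that consecutive means must now differ by more than $\eps$, so that only the \emph{exact} best arm qualifies as an $\eps$-best arm.

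Fix $\eps\in(0,1/4)$ and set $W=\lfloor \frac{1}{2\eps}\rfloor$ and $n=2W$. Take deterministic arms with means $\mu_i = 1 - 2\eps\cdot\frac{i}{1}\cdot\frac{1}{1}$, i.e.\ $\mu_i=1-2\eps i$ for $i\in[n]$. Since $n\cdot 2\eps\le 2$, we can shift and rescale slightly, e.g.\ $\mu_i = 1-\frac{i}{2W+1}$. All means then lie in $[0,1]$, and consecutive gaps equal $\frac{1}{2W+1}>\eps$ once $W\le \frac{1}{2\eps}-1$.

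Because the means are strictly decreasing, for $t\in\{W+1,\dots,2W\}$ the best valid arm is $\armstar(W,t)=\arm_{t-W+1}$. Every other valid arm has mean smaller by more than $\eps$, so the only $\eps$-best arm at time $t$ is $\arm_{t-W+1}$.

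Next I apply Yao's principle, exactly as in \Cref{thm:lower-bound-for-sliding-window-MAB-1}, to restrict to deterministic algorithms. In the weak sense I measure success with a uniformly random query time $t$. Suppose the algorithm succeeds with probability at least $0.6$; the argument adapts to any constant above $1/2$ by adjusting fractions. Then it must be correct on at least a constant fraction, say $W/5$, of the times in $\{W+1,\dots,2W\}$; otherwise the total success probability drops below $\frac12+\frac12\cdot\frac15=0.6$.

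For every such correct time $t<2W$, the arm $\arm_{t-W+1}$ arrived at or before time $W+1$ and must still be in memory at time $W+1$. Indeed, it has to be output at the later time $t$, and an arm that is not stored can never be recovered. These arms are pairwise distinct, so the memory at time $W+1$ holds at least $W/5-1=\Omega(1/\eps)$ arms. This contradicts $o(1/\eps)$ memory.

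The main obstacle is making the ``must be correct on many times'' step rigorous, given that the success probability $0.6$ is only barely above $1/2$. I will handle it with the same averaging as in \Cref{thm:lower-bound-for-sliding-window-MAB-1}, with the constant $1/5$ chosen so the counting closes. A secondary point is that the algorithm may be randomized in its pulls, but here the arm means are deterministic (point masses), so pulls give no information. Storage decisions are therefore fixed once the randomness is fixed, and the counting argument applies pointwise.
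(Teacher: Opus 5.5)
Your proposal is correct and follows essentially the paper's route: a strictly decreasing stream whose consecutive gaps exceed $\eps$, so that the exact best arm is the only $\eps$-best arm, followed by the storage-counting argument of \Cref{thm:lower-bound-for-sliding-window-MAB-1}. The paper uses gap $2\eps$ with $\eps = 1/(100W)$ and periodic resets, and only sketches the counting step, which you carry out explicitly. One small parameter slip: with $W=\lfloor \frac{1}{2\eps}\rfloor$ the gap $\frac{1}{2W+1}$ can fall below $\eps$ (e.g.\ $\eps=1/10$ gives $1/11$), so take $W=\lfloor \frac{1}{2\eps}\rfloor-1$, which is exactly your own condition $W\le \frac{1}{2\eps}-1$.
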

\begin{proof}
    Let \(\eps = \frac{1}{100 \cdot W}\). Define the means of the arms as follows:
    \[
        \mu_i = 1 - 2 \eps \cdot \left(i - 1 - 50 \cdot W \cdot \left\lfloor \frac{i}{50 \cdot W}\right\rfloor\right).
    \]
    This means the data stream consists of arms with decreasing means, where each subsequent arm has a mean \(2 \eps\) smaller than the previous arm. The mean of an arm is reset to \(1\) when it reaches \(0\).

    For this type of data stream, since each subsequent arm has a mean \(2 \eps\) lower than its predecessor, the \(\eps\)-best arm at any time \(t\) is actually the best arm at that time. Consequently, the algorithm must store the best arm to accurately return the \(\eps\)-best arm. Given the decreasing nature of the arm means, it is necessary to keep track of all arms in the window, requiring a memory size of \( W = \Omega\left(\frac{1}{\eps}\right) \).
\end{proof}

Thus, since certain data streams exist where \( W = o\left(\sqrt[3]{\frac{n \log n}{T}}\right) \), it follows that the adapted algorithm cannot operate with \( o(W) \) memory for such streams.

\section{Additional Details and Results of the Experiments}
\label{sec:additional-experiments}
We provide additional details and results of the experiments in different settings, including the experiments with real-world datasets. We start with the information of the real-world dataset.

\noindent
\paragraph{Real-world data and experiment settings.}
For the real-world data, we utilize the MovieLens 1M dataset as referenced in \cite{Harper2016TheMD}. 
This dataset comprises over 1 million anonymous movie ratings, totaling 1,000,209 ratings from around 3900 films, contributed by 6040 users who joined MovieLens in the year 2000.

In our experiment, we treat each movie as an arm, with the reward for a given arm (movie) being its average rating. The ratings are rescaled and normalized to a range of $[0, 1]$. When we pull an arm (movie), we receive a random rating for that movie.
For the $\eps$-exploration experiments on the MovieLens 1M dataset, we use $n \in \{1000, 2000, 3500\}$ and $W \in \{20, 50, 100, 200\}$ (the dataset only has 3500 movies). For the regret minimization experiments, we use $n \in \{500, 1000\}$ and $W \in \{20, 50\}$. 


\subsection{Experimental results on pure exploration}
\label{subsec:experiment-pure exploration}

The experiments for the exploration algorithm with the synthetic and real-world datasets are shown in \Cref{fig:synthetic-exploration-memory-quality-tradeoff-n-1000,fig:synthetic-exploration-memory-quality-tradeoff-n-2000,fig:synthetic-exploration-memory-quality-tradeoff-n-5000,fig:synthetic-exploration-memory-quality-tradeoff-n-10000} and in \Cref{fig:real-world-exploration-memory-quality-tradeoff-n-1000,fig:real-world-exploration-memory-quality-tradeoff-n-2000,fig:real-world-exploration-memory-quality-tradeoff-n-3500}. 

From the figures, it could be observed that there are generally trade-offs between memory/quality. The trade-off curve for the memory/quality is mostly stable for our algorithm: for the mean and median statistics, the error bar obtained from $10$ runs is quite narrow. 
On the other hand, for the baseline algorithm with top-$k$ streaming exploration algorithm, increasing the memory almost gives \emph{no improvement} for the performance.

The performance gap between the baseline and our algorithm is quite significant in the figures. The main reason for this gap is as follows: in vanilla streaming multi-armed bandits (MABs), there is no concept of expiration, which means that the design of these algorithms does not account for the possibility that all the arms stored in memory can expire at a certain point in time. If there is no valid arm stored in the memory, the identification task at that step simply fails and we charge the gap as the mean of the best arm in the window. 

The advantages of our algorithms are consistent with both the synthetic and the real-world datasets. 
In the real-world dataset, the advantage of our algorithms is slightly less significant when the memory is very small.
However, the moment we use a memory of, e.g., 10 arms, the advantage again becomes significant.



\begin{figure*}[!htb]
\centering
\begin{subfigure}[t]{0.48\textwidth}
  \includegraphics[scale=0.3]{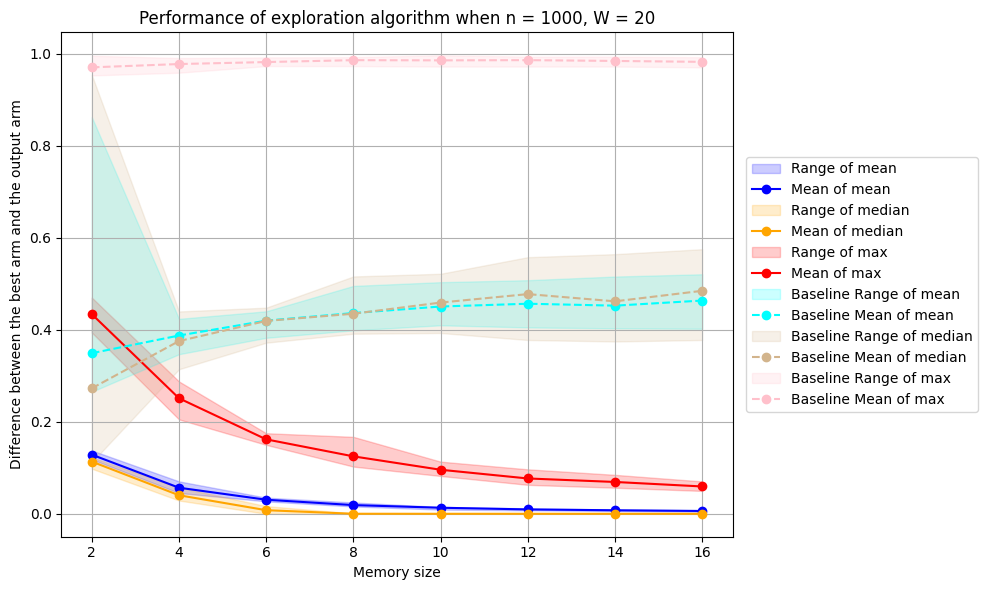}
  \label{fig:synthetic-pure-exploration-n-1000-w-20}
\end{subfigure}
\begin{subfigure}[t]{0.48\textwidth}
  \includegraphics[scale=0.3]{experiment-results/baseline_synthetic/exploration/1000n_50w.png}
  \label{fig:synthetic-pure-exploration-n-1000-w-50}
\end{subfigure}

\begin{subfigure}[t]{0.48\textwidth}
  \includegraphics[scale=0.3]{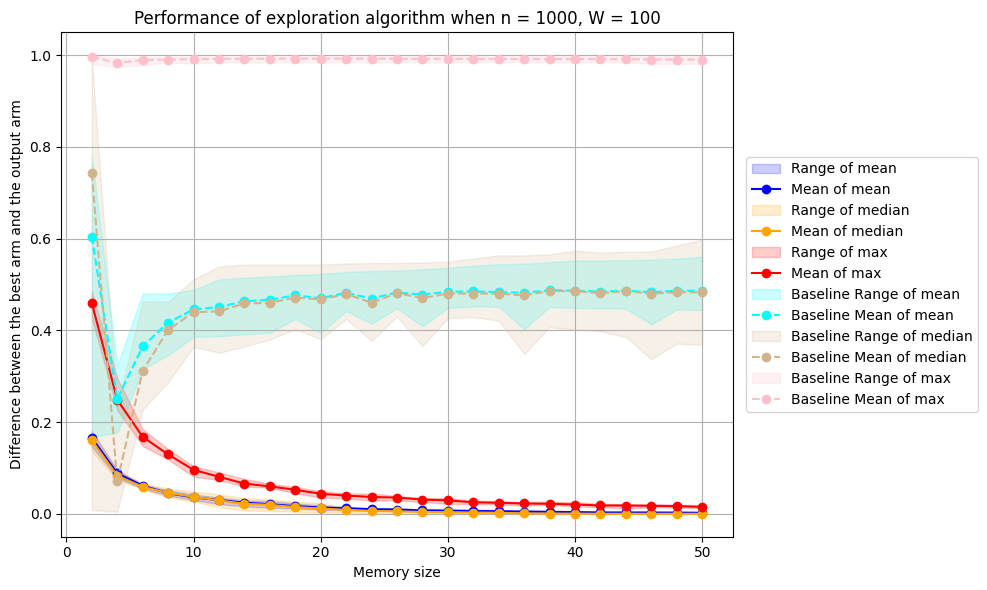}
  \label{fig:synthetic-pure-exploration-n-1000-w-100}
\end{subfigure}
\begin{subfigure}[t]{0.48\textwidth}
  \includegraphics[scale=0.3]{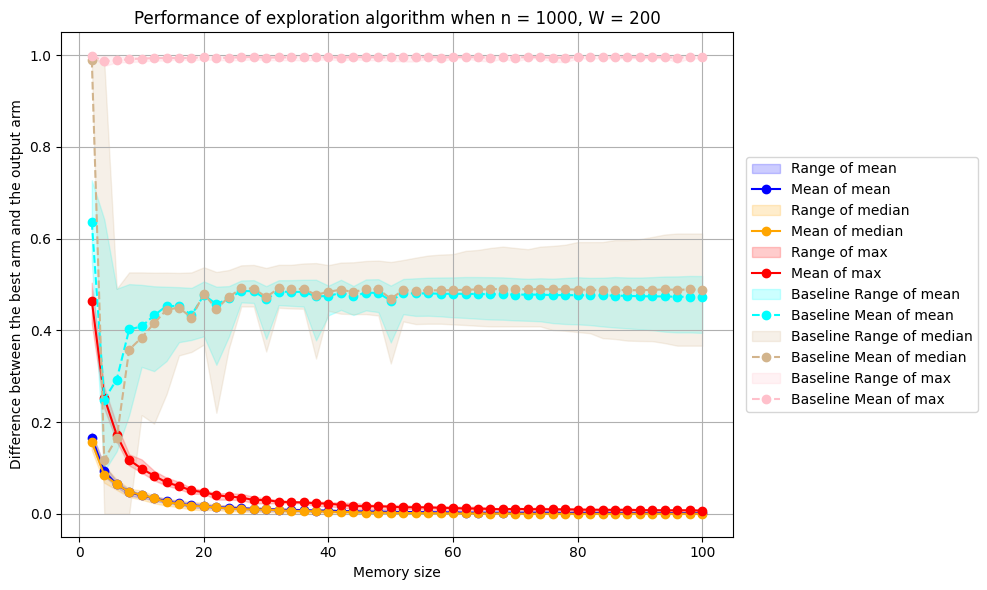}
  \label{fig:synthetic-pure-exploration-n-1000-w-200}
\end{subfigure}
\caption{An illustration of the memory-quality trade-off in $\eps$-exploration for $n=1000$.}
\label{fig:synthetic-exploration-memory-quality-tradeoff-n-1000}
\end{figure*}

\begin{figure*}[!htb]
\centering
\begin{subfigure}[t]{0.48\textwidth}
  \includegraphics[scale=0.3]{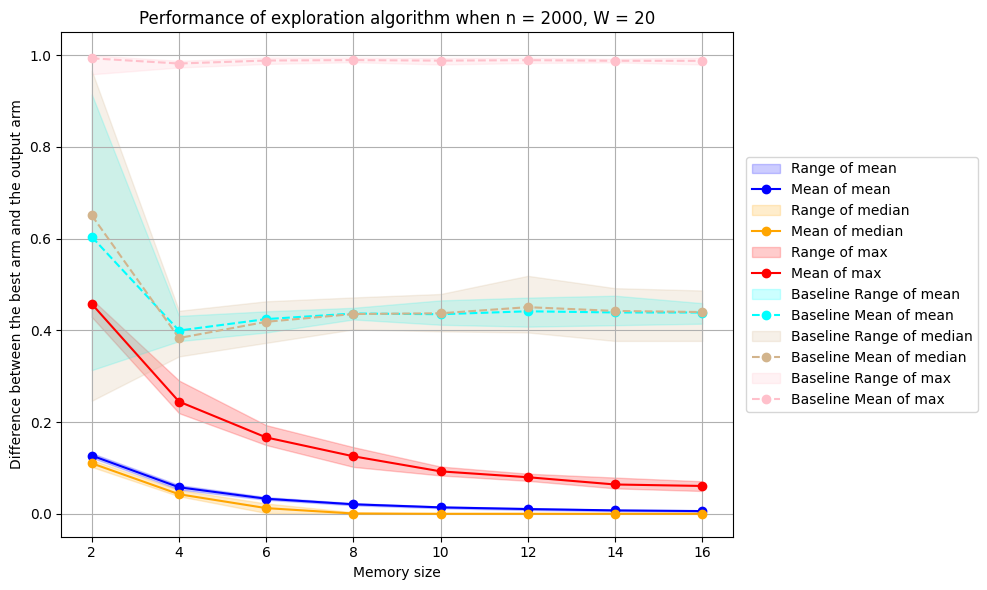}
  \label{fig:synthetic-pure-exploration-n-2000-w-20}
\end{subfigure}
\begin{subfigure}[t]{0.48\textwidth}
  \includegraphics[scale=0.3]{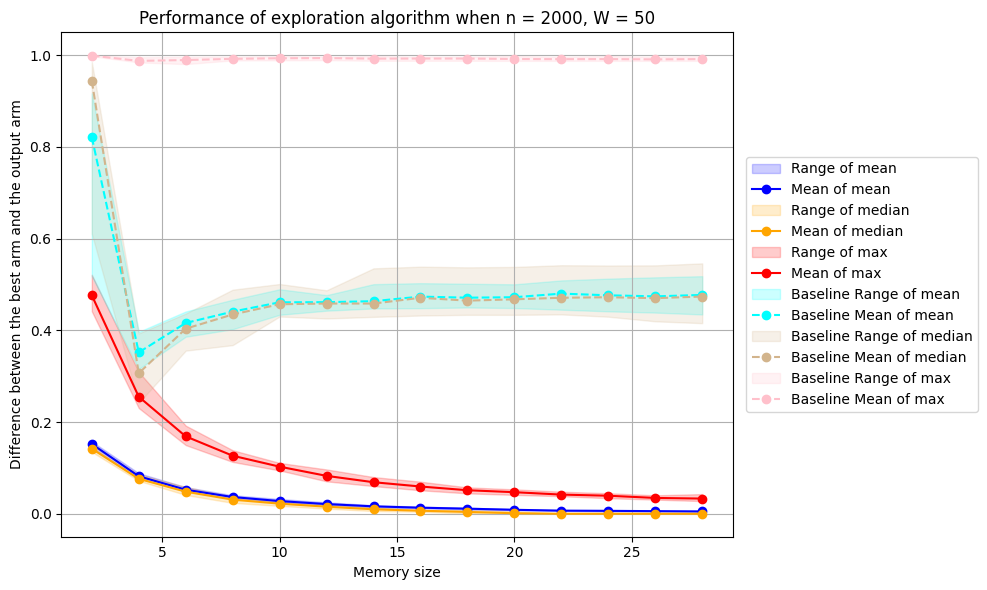}
  \label{fig:synthetic-pure-exploration-n-2000-w-50}
\end{subfigure}

\begin{subfigure}[t]{0.48\textwidth}
  \includegraphics[scale=0.3]{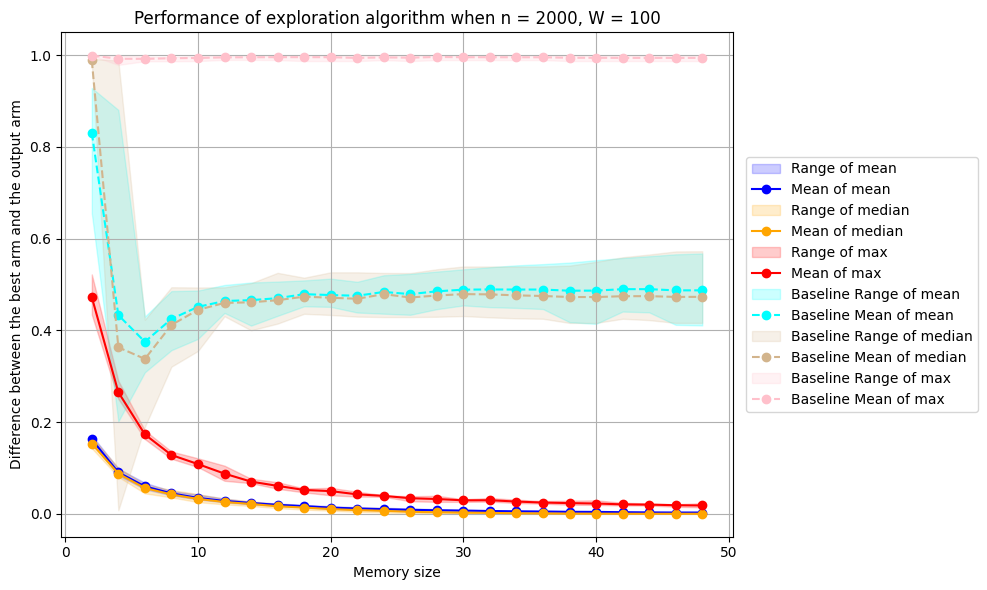}
  \label{fig:synthetic-pure-exploration-n-2000-w-100}
\end{subfigure}
\begin{subfigure}[t]{0.48\textwidth}
  \includegraphics[scale=0.3]{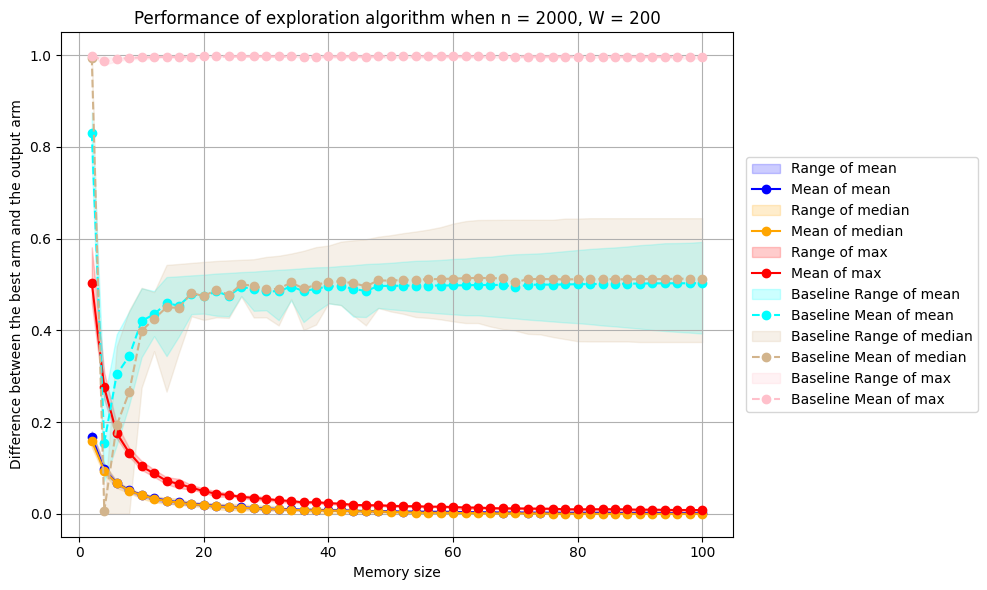}
  \label{fig:synthetic-pure-exploration-n-2000-w-200}
\end{subfigure}

\caption{An illustration of the memory-quality trade-off in $\eps$-exploration for $n=2000$.}
\label{fig:synthetic-exploration-memory-quality-tradeoff-n-2000}
\end{figure*}

\begin{figure*}[!htb]
\centering
\begin{subfigure}[t]{0.48\textwidth}
  \includegraphics[scale=0.3]{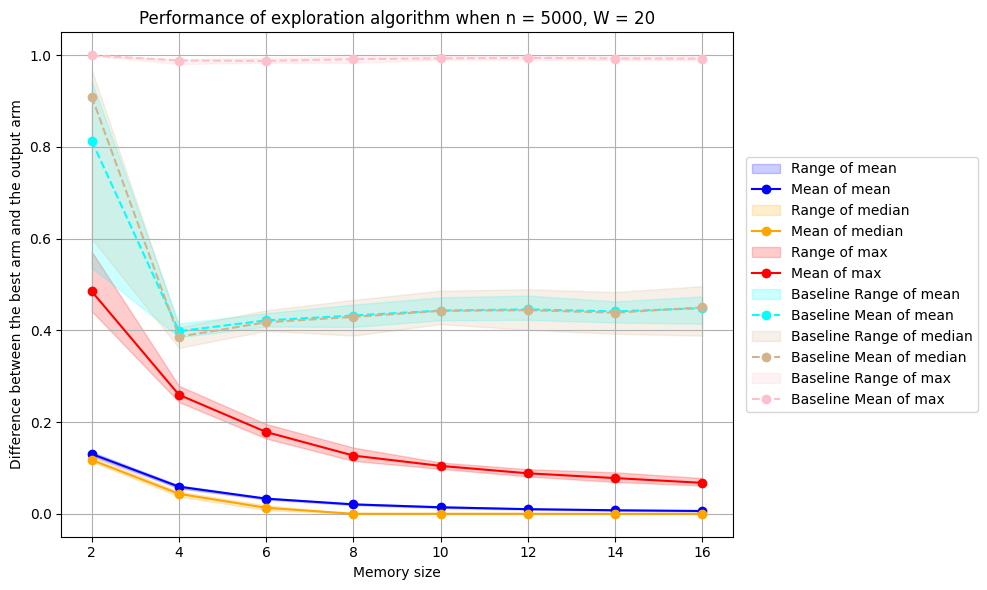}
  \label{fig:synthetic-pure-exploration-n-5000-w-20}
\end{subfigure}
\begin{subfigure}[t]{0.48\textwidth}
  \includegraphics[scale=0.3]{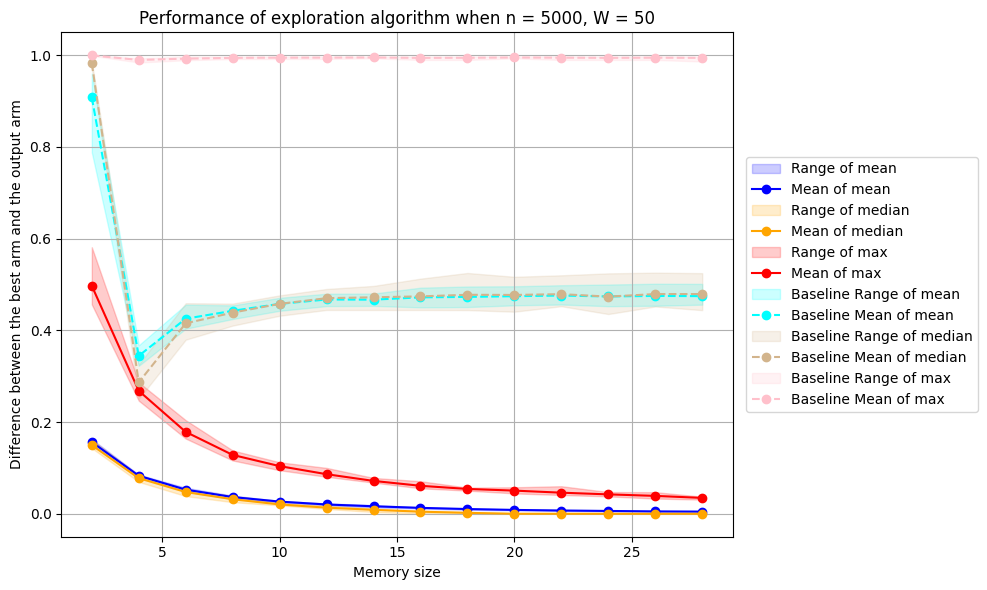}
  \label{fig:synthetic-pure-exploration-n-5000-w-50}
\end{subfigure}

\begin{subfigure}[t]{0.48\textwidth}
  \includegraphics[scale=0.3]{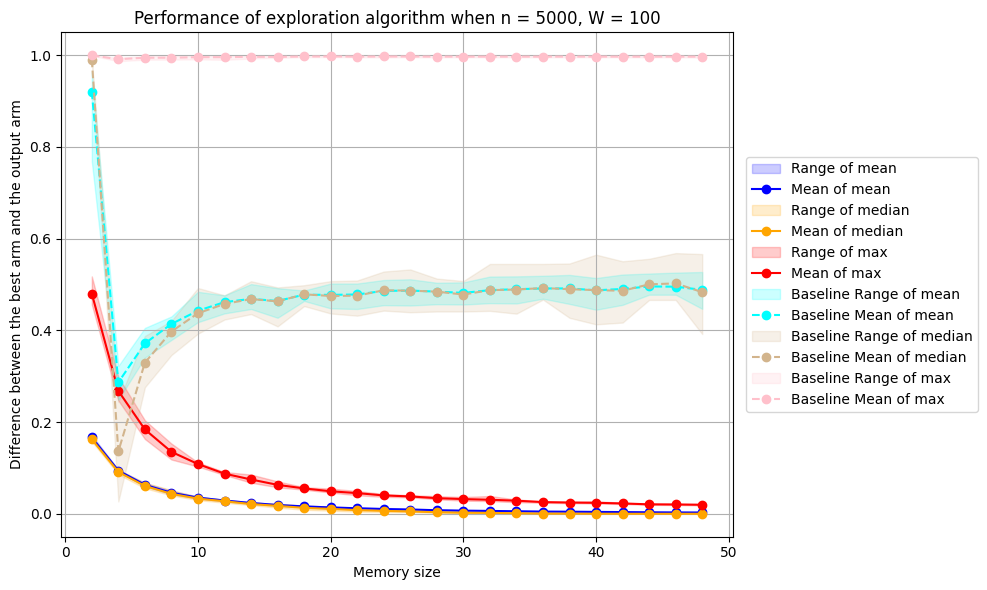}
  \label{fig:synthetic-pure-exploration-n-5000-w-100}
\end{subfigure}
\begin{subfigure}[t]{0.48\textwidth}
  \includegraphics[scale=0.3]{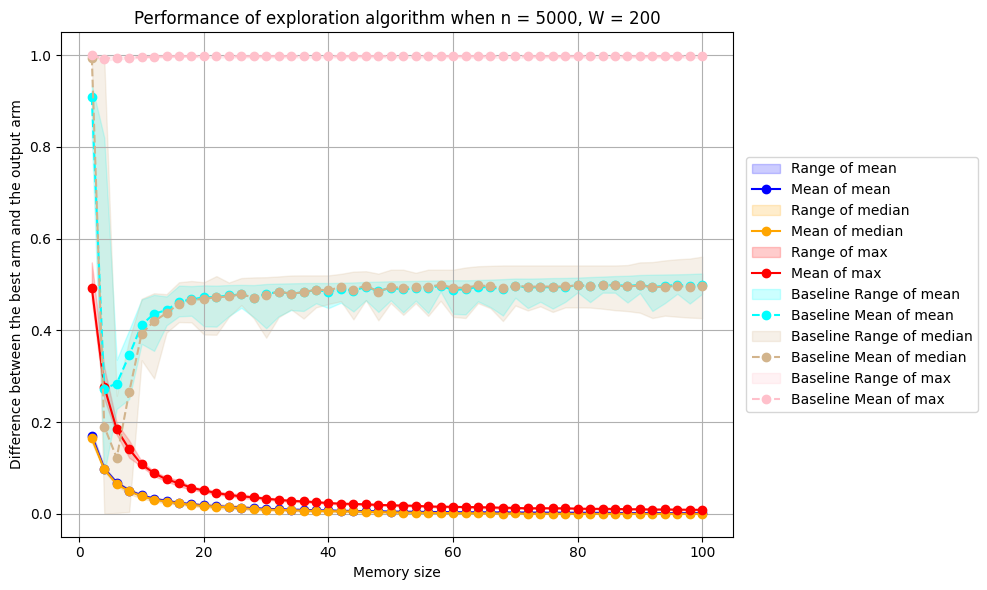}
  \label{fig:synthetic-pure-exploration-n-5000-w-200}
\end{subfigure}

\caption{An illustration of the memory-quality trade-off in $\eps$-exploration for $n=5000$.}
\label{fig:synthetic-exploration-memory-quality-tradeoff-n-5000}
\end{figure*}

\begin{figure*}[!htb]
\centering
\begin{subfigure}[t]{0.48\textwidth}
  \includegraphics[scale=0.3]{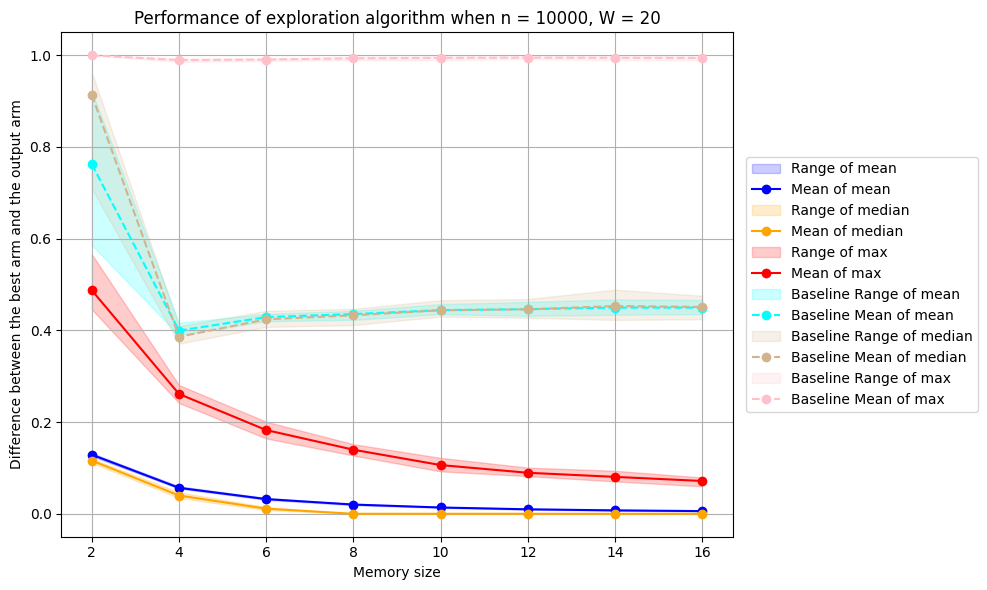}
  \label{fig:synthetic-pure-exploration-n-10000-w-20}
\end{subfigure}
\begin{subfigure}[t]{0.48\textwidth}
  \includegraphics[scale=0.3]{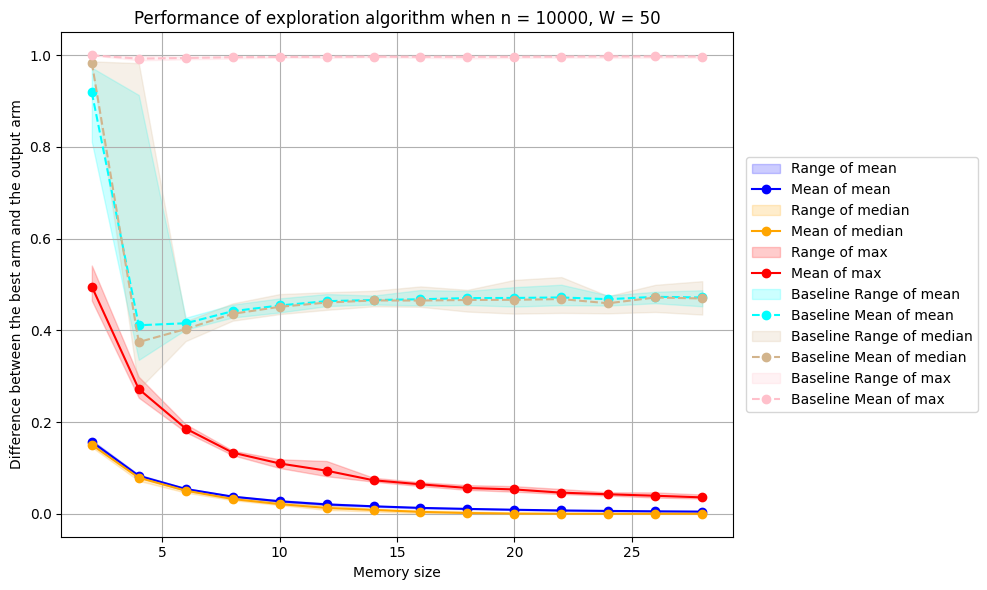}
  \label{fig:synthetic-pure-exploration-n-10000-w-50}
\end{subfigure}

\begin{subfigure}[t]{0.48\textwidth}
  \includegraphics[scale=0.3]{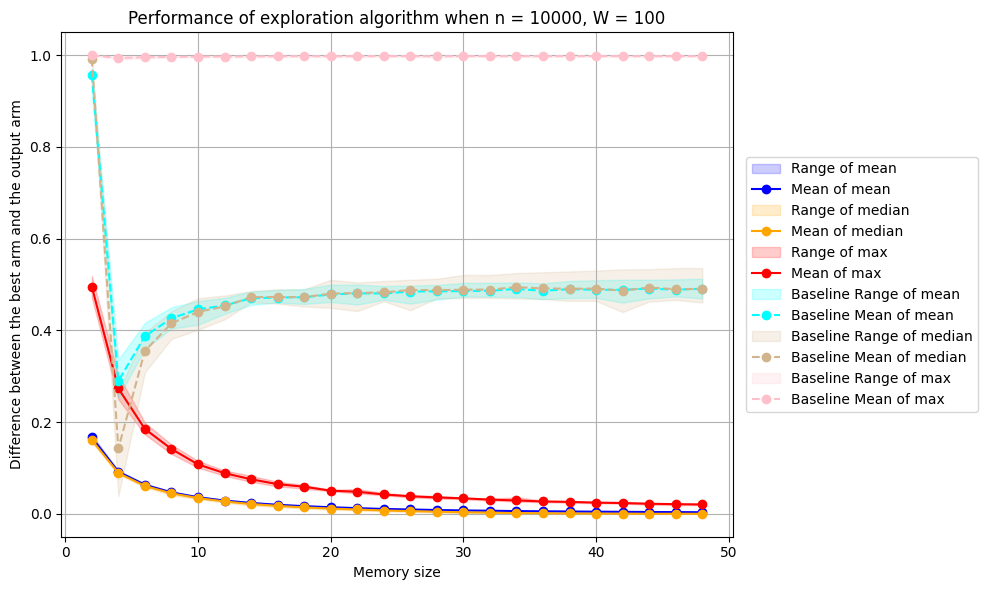}
  \label{fig:synthetic-pure-exploration-n-10000-w-100}
\end{subfigure}
\begin{subfigure}[t]{0.48\textwidth}
  \includegraphics[scale=0.3]{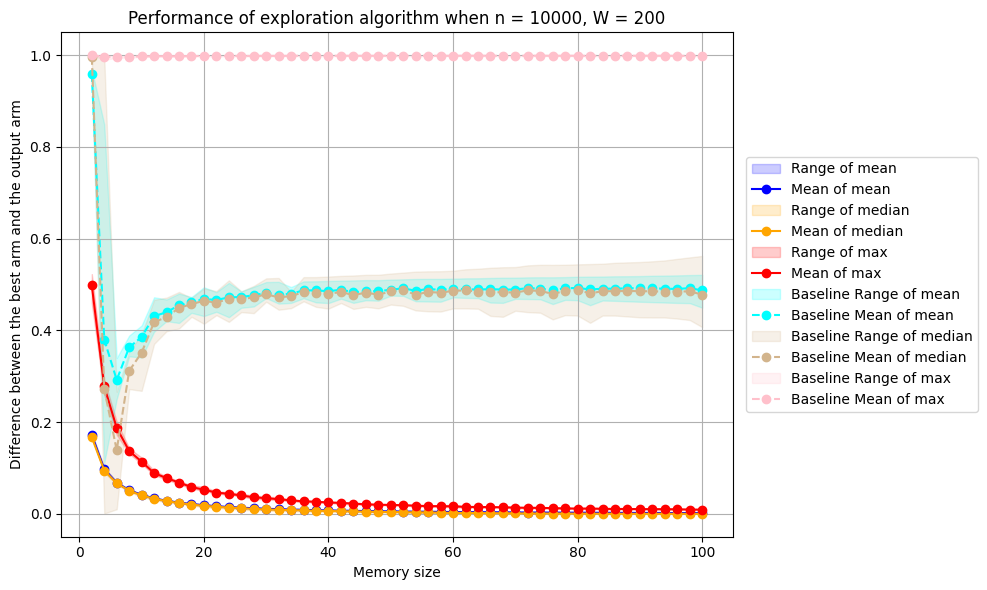}
  \label{fig:synthetic-pure-exploration-n-10000-w-200}
\end{subfigure}

\caption{An illustration of the memory-quality trade-off in $\eps$-exploration for $n=10000$.}
\label{fig:synthetic-exploration-memory-quality-tradeoff-n-10000}
\end{figure*}

\begin{figure*}[!htb]
\centering
\begin{subfigure}[t]{0.48\textwidth}
  \includegraphics[scale=0.3]{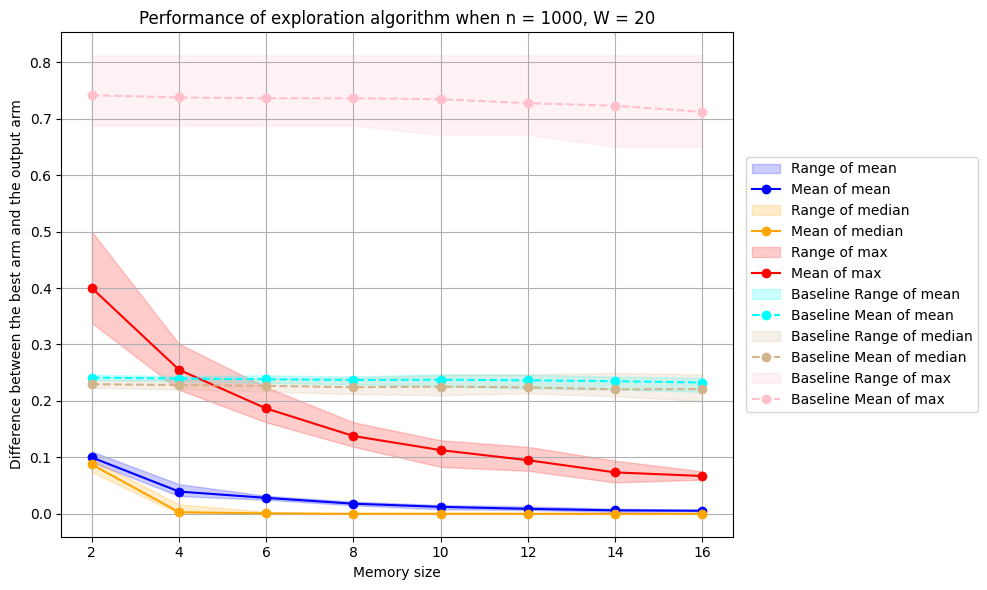}
  \label{fig:real-world-pure-exploration-n-1000-w-20}
\end{subfigure}
\begin{subfigure}[t]{0.48\textwidth}
  \includegraphics[scale=0.3]{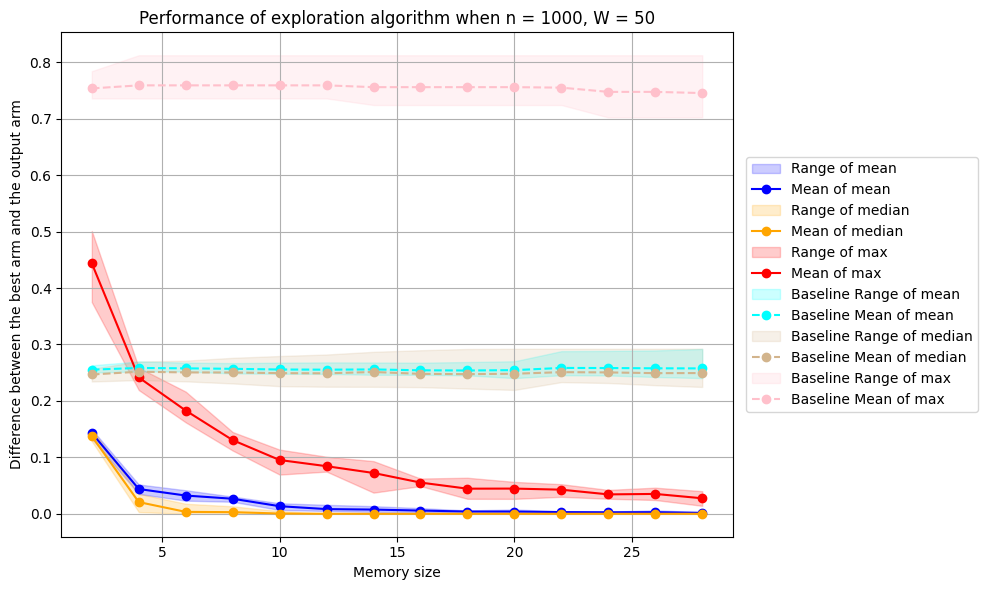}
  \label{fig:real-world-pure-exploration-n-1000-w-50}
\end{subfigure}

\begin{subfigure}[t]{0.48\textwidth}
  \includegraphics[scale=0.3]{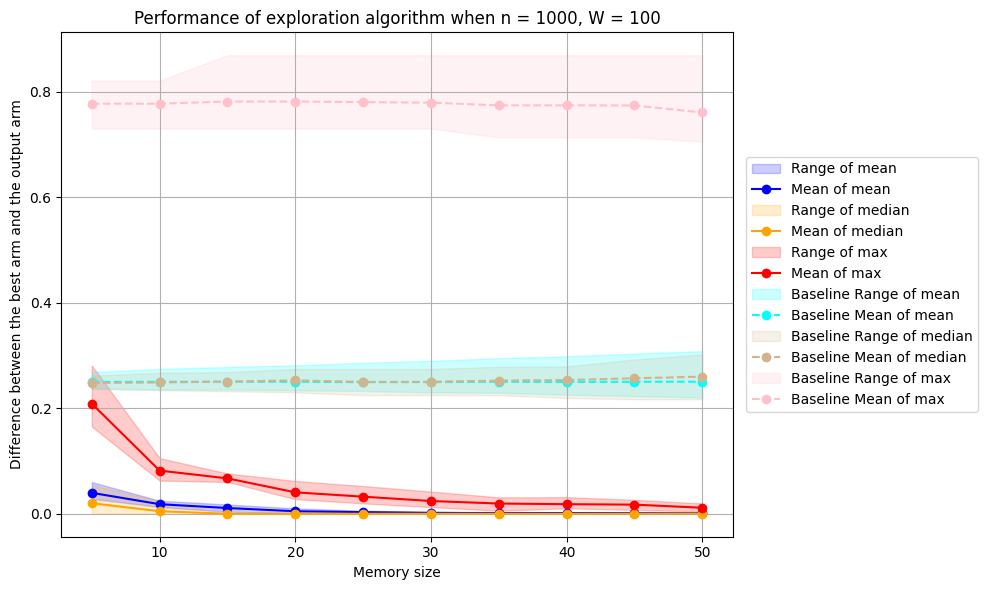}
  \label{fig:real-world-pure-exploration-n-1000-w-100}
\end{subfigure}
\begin{subfigure}[t]{0.48\textwidth}
  \includegraphics[scale=0.3]{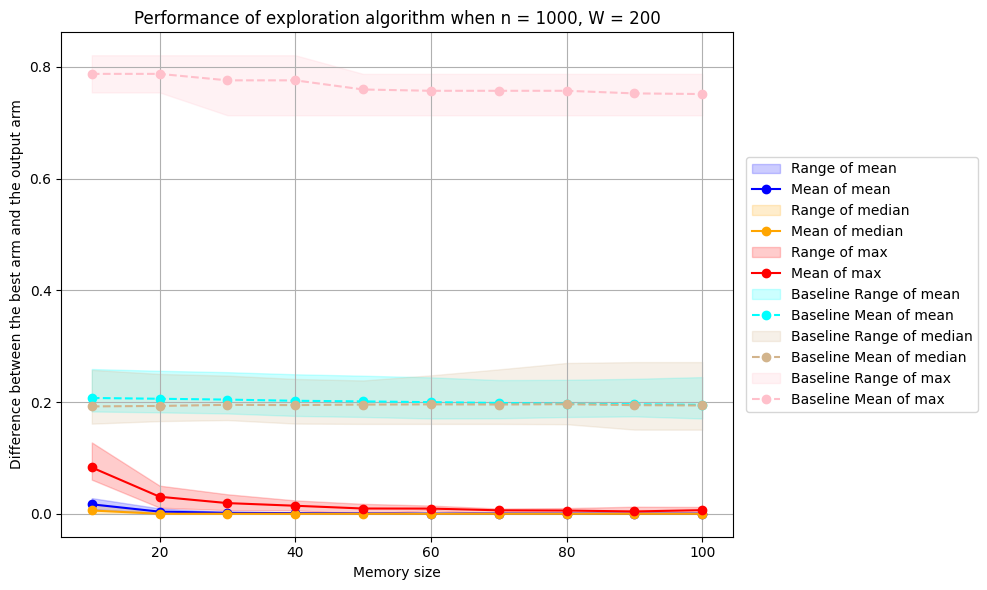}
  \label{fig:real-world-pure-exploration-n-1000-w-200}
\end{subfigure}

\caption{An illustration of the memory-quality trade-off in $\eps$-exploration for $n=1000$.}
\label{fig:real-world-exploration-memory-quality-tradeoff-n-1000}
\end{figure*}

\begin{figure*}[!htb]
\centering
\begin{subfigure}[t]{0.48\textwidth}
  \includegraphics[scale=0.3]{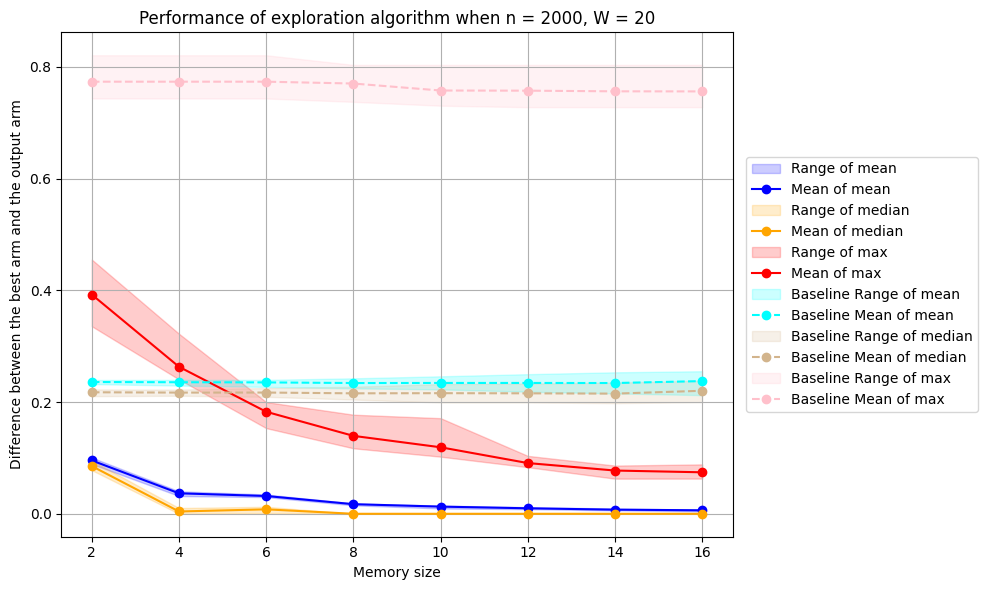}
  \label{fig:real-world-pure-exploration-n-2000-w-20}
\end{subfigure}
\begin{subfigure}[t]{0.48\textwidth}
  \includegraphics[scale=0.3]{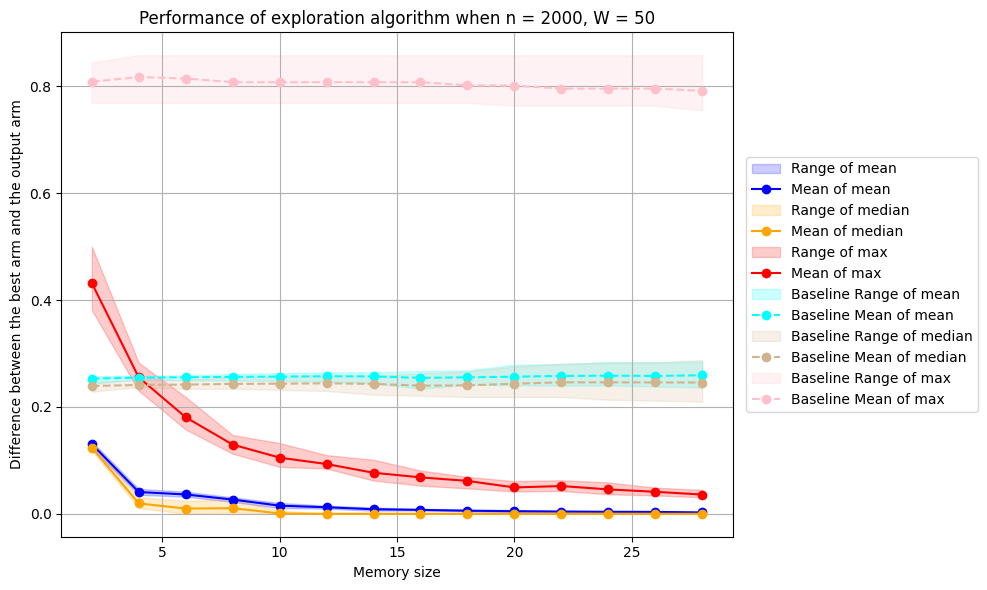}
  \label{fig:real-world-pure-exploration-n-2000-w-50}
\end{subfigure}

\begin{subfigure}[t]{0.48\textwidth}
  \includegraphics[scale=0.3]{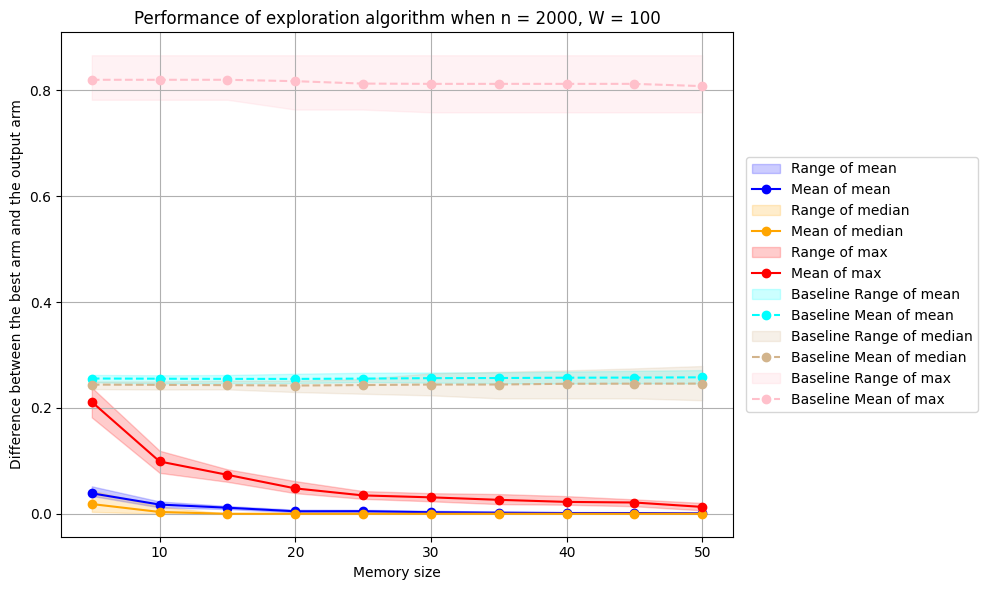}
  \label{fig:real-world-pure-exploration-n-2000-w-100}
\end{subfigure}
\begin{subfigure}[t]{0.48\textwidth}
  \includegraphics[scale=0.3]{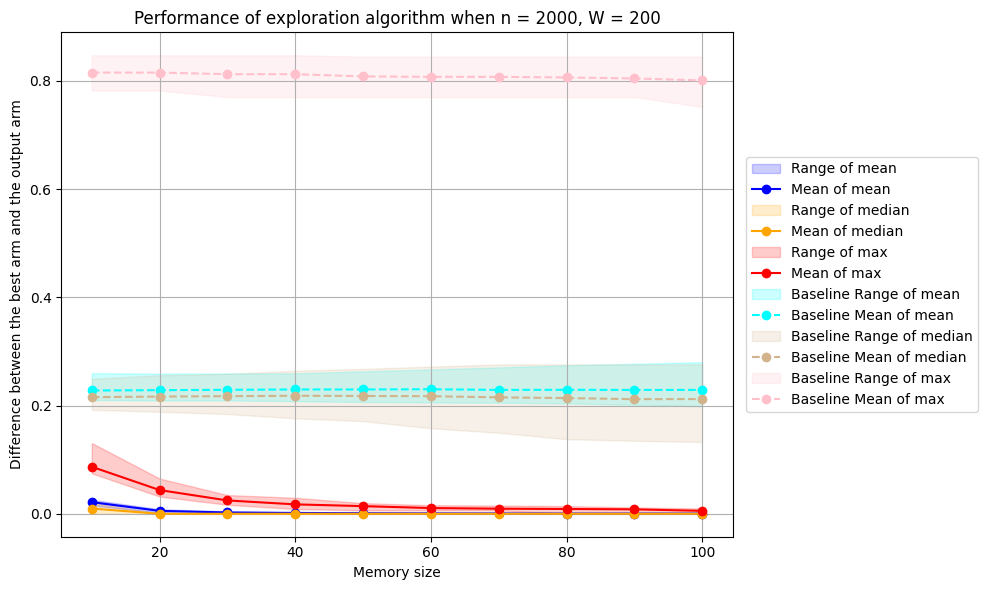}
  \label{fig:real-world-pure-exploration-n-2000-w-200}
\end{subfigure}

\caption{An illustration of the memory-quality trade-off in $\eps$-exploration for $n=2000$.}
\label{fig:real-world-exploration-memory-quality-tradeoff-n-2000}
\end{figure*}

\begin{figure*}[!htb]
\centering
\begin{subfigure}[t]{0.48\textwidth}
  \includegraphics[scale=0.3]{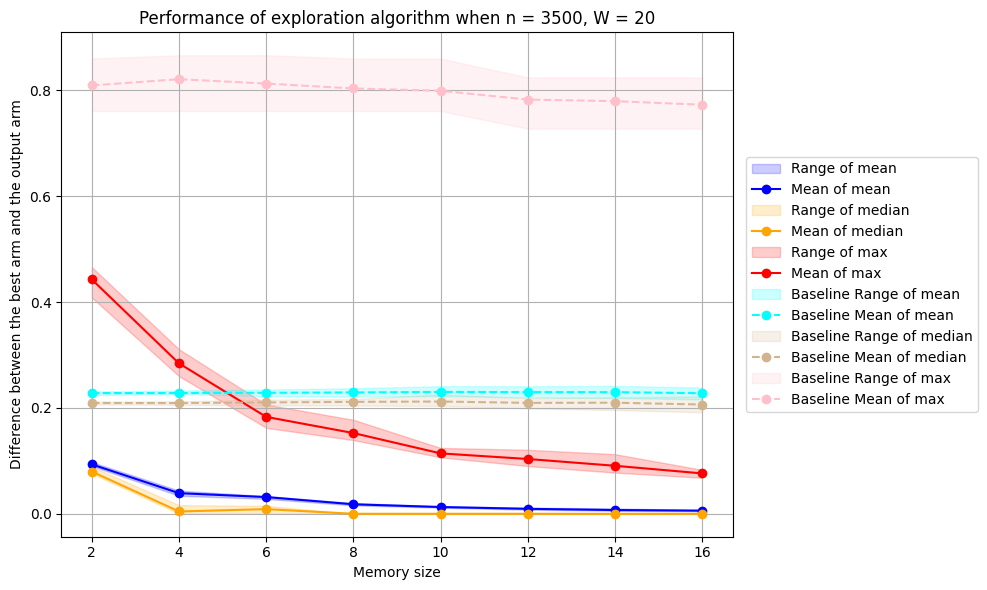}
  \label{fig:real-world-pure-exploration-n-3500-w-20}
\end{subfigure}
\begin{subfigure}[t]{0.48\textwidth}
  \includegraphics[scale=0.3]{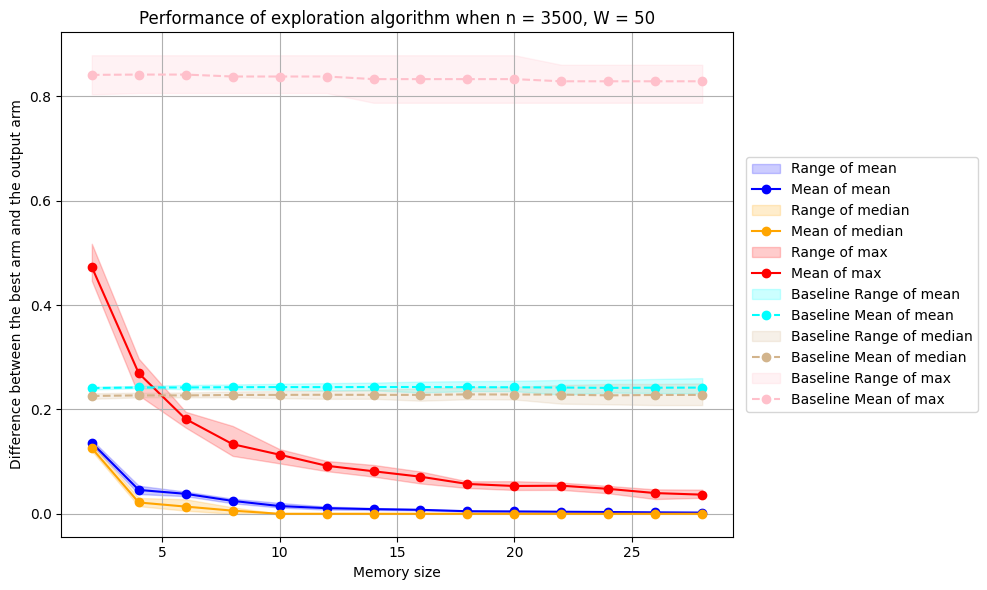}
  \label{fig:real-world-pure-exploration-n-3500-w-50}
\end{subfigure}

\begin{subfigure}[t]{0.48\textwidth}
  \includegraphics[scale=0.3]{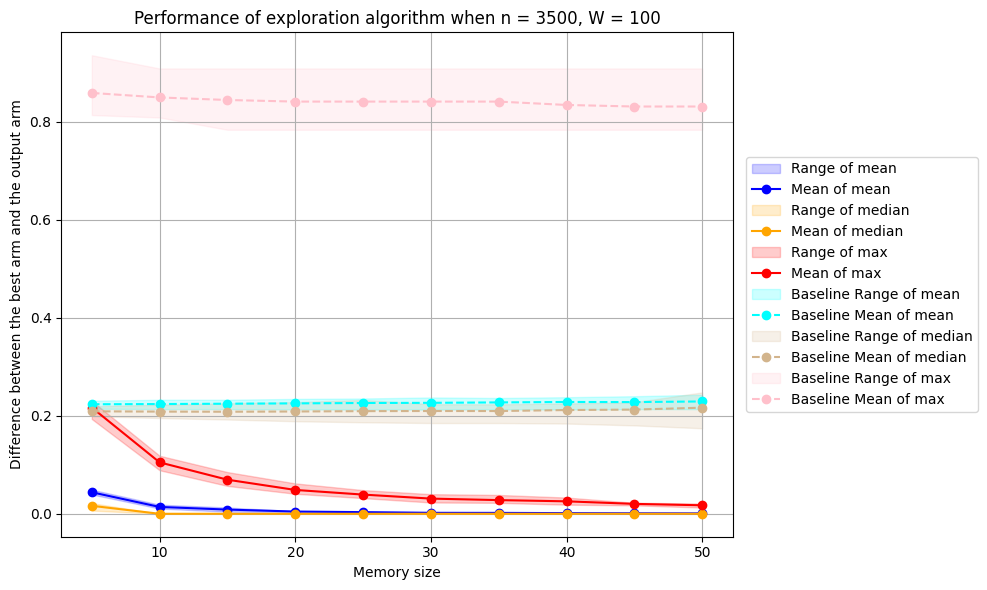}
  \label{fig:real-world-pure-exploration-n-3500-w-100}
\end{subfigure}
\begin{subfigure}[t]{0.48\textwidth}
  \includegraphics[scale=0.3]{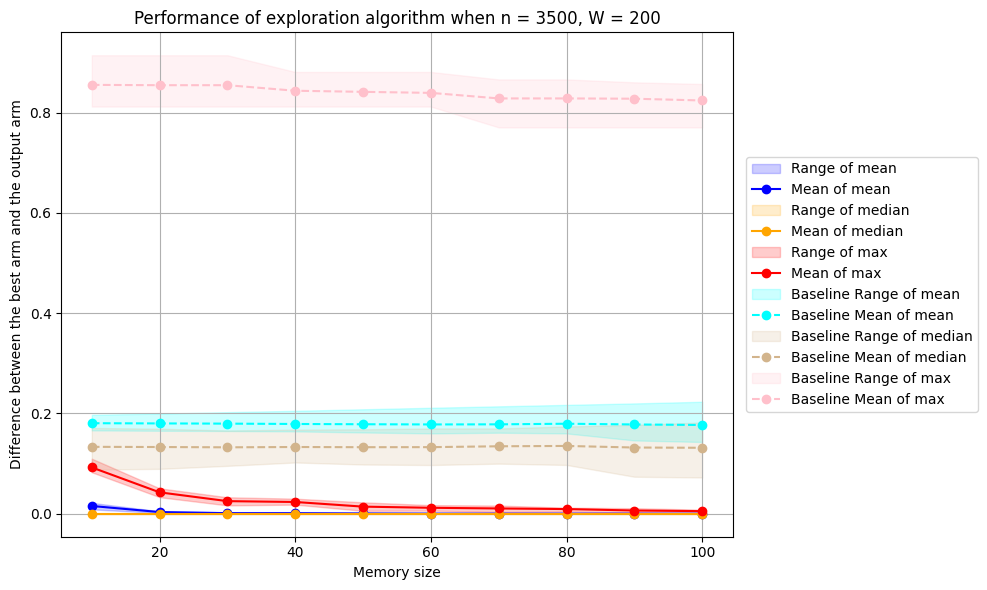}
  \label{fig:real-world-pure-exploration-n-3500-w-200}
\end{subfigure}

\caption{An illustration of the memory-quality trade-off in $\eps$-exploration for $n=3500$.}
\label{fig:real-world-exploration-memory-quality-tradeoff-n-3500}
\end{figure*}

\FloatBarrier

\subsection{Experimental results with epoch-wise regret minimization}
\label{subsec:experiment-regret}
The experimental results for regret minimization in the epoch-wise regret setting are shown as \Cref{fig:synthetic-regret-memory-quality-tradeoff,fig:real-world-regret-memory-quality-tradeoff}. 
The budgets are \emph{evenly distributed} over different epochs, i.e., $T_1=T_2=\cdots=T_{n-W+1}=\frac{T}{n-W+1}$.
Again, with $10$ independent runs of the algorithm, we report both the mean and the range of the regrets. We also report the mean and the range of the regrets for the baseline algorithm that performs exploration-then-commit.

\begin{figure*}[!htb]
\centering
\begin{subfigure}[t]{0.48\textwidth}
  \includegraphics[scale=0.3]{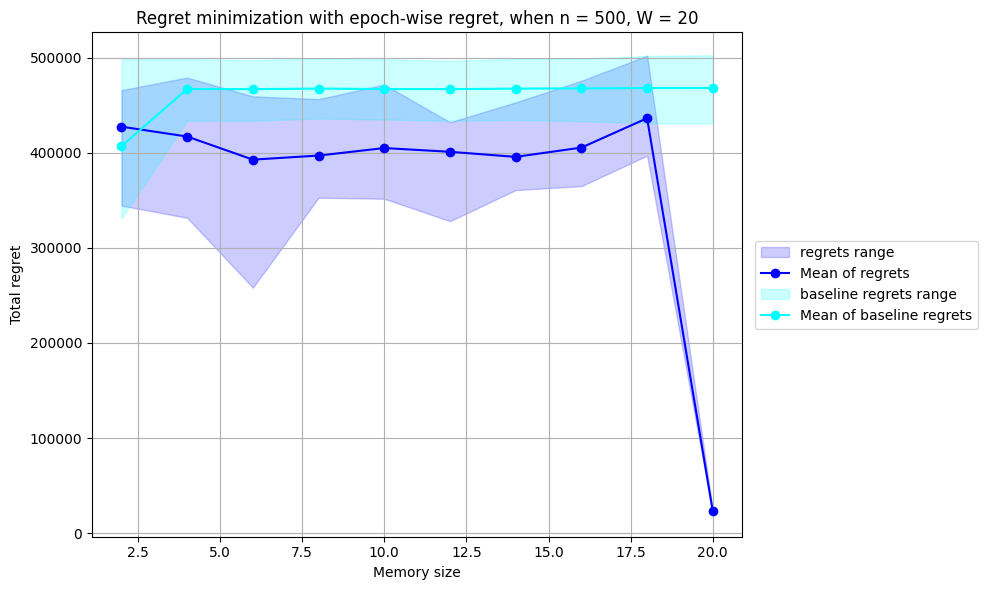}
  \label{fig:synthetic-regret-n-500-w-20}
\end{subfigure}
\begin{subfigure}[t]{0.48\textwidth}
  \includegraphics[scale=0.3]{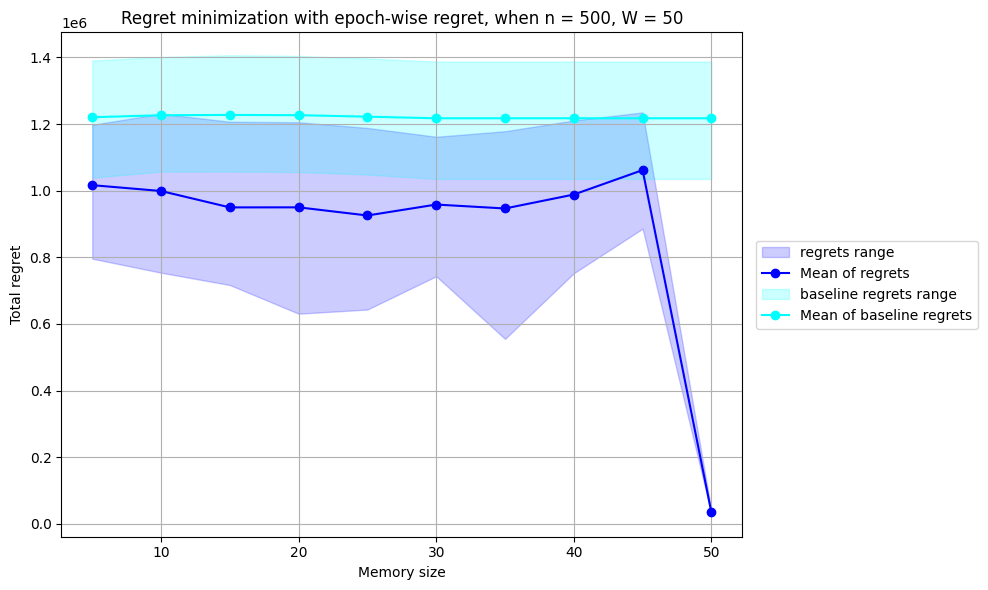}
  \label{fig:synthetic-regret-n-500-w-50}
\end{subfigure}

\begin{subfigure}[t]{0.48\textwidth}
  \includegraphics[scale=0.3]{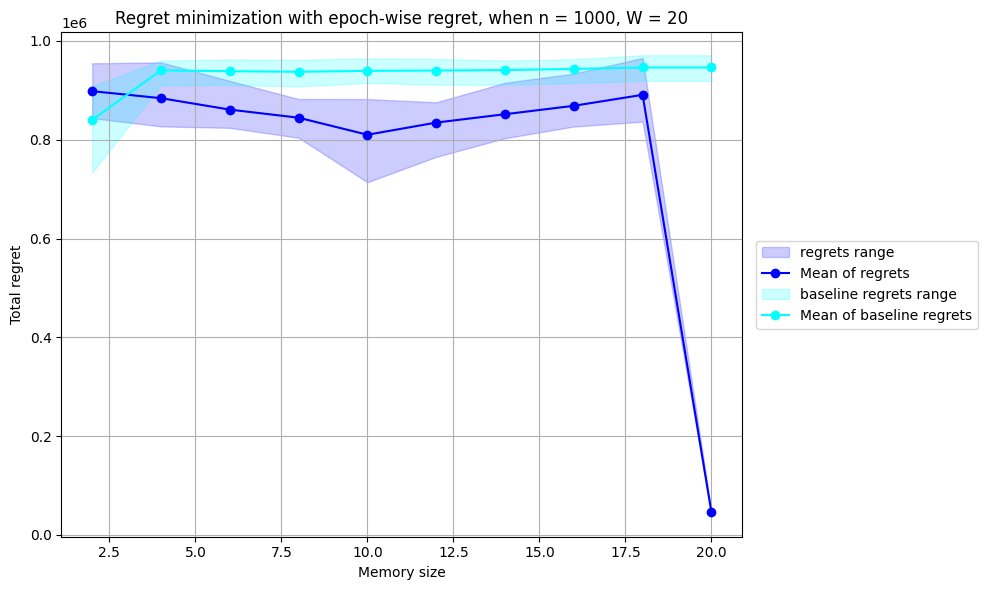}
  \label{fig:synthetic-regret-n-1000-w-20}
\end{subfigure}
\begin{subfigure}[t]{0.48\textwidth}
  \includegraphics[scale=0.3]{experiment-results/baseline_synthetic/regret_new/1000n_50w.png}
  \label{fig:synthetic-regret-n-1000-w-50}
\end{subfigure}

\caption{An illustration of the memory-quality trade-off in regret minimization for synthetic data.}
\label{fig:synthetic-regret-memory-quality-tradeoff}
\end{figure*}

\begin{figure*}[!htb]
\centering
\begin{subfigure}[t]{0.48\textwidth}
  \includegraphics[scale=0.3]{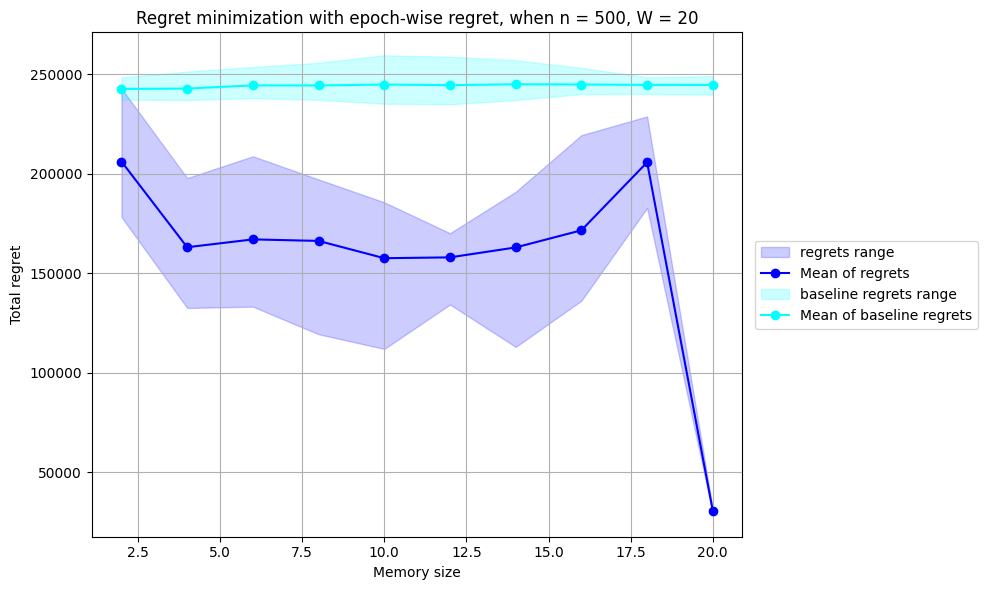}
  \label{fig:real-world-regret-n-500-w-20}
\end{subfigure}
\begin{subfigure}[t]{0.48\textwidth}
  \includegraphics[scale=0.3]{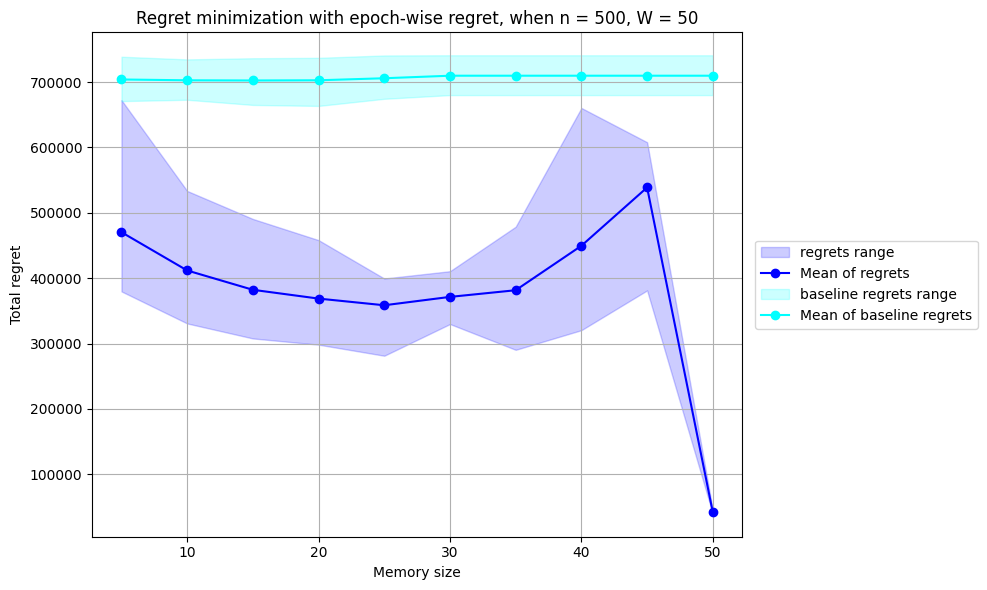}
  \label{fig:real-world-regret-n-500-w-50}
\end{subfigure}

\begin{subfigure}[t]{0.48\textwidth}
  \includegraphics[scale=0.3]{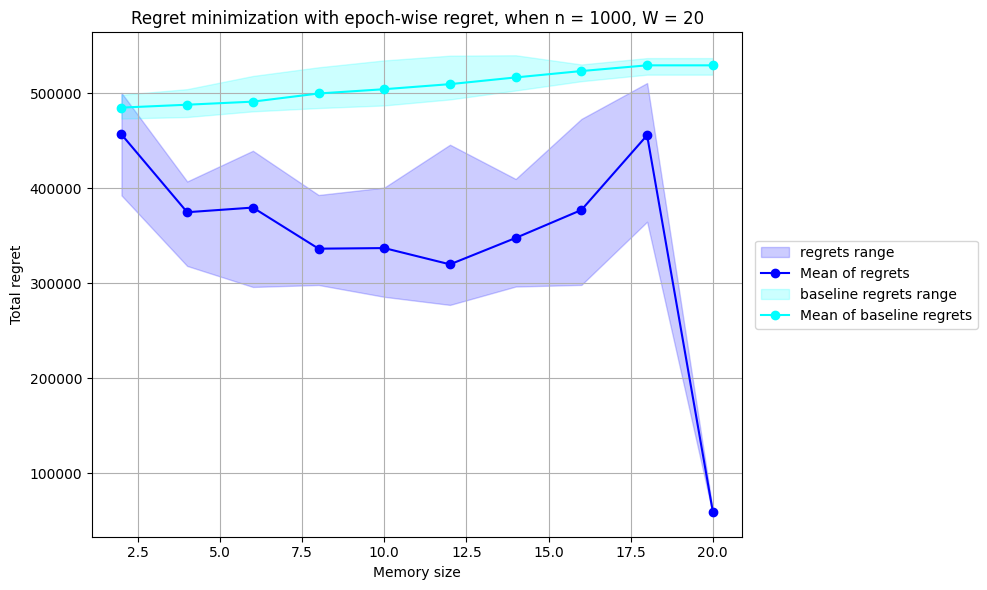}
  \label{fig:real-world-regret-n-1000-w-20}
\end{subfigure}
\begin{subfigure}[t]{0.48\textwidth}
  \includegraphics[scale=0.3]{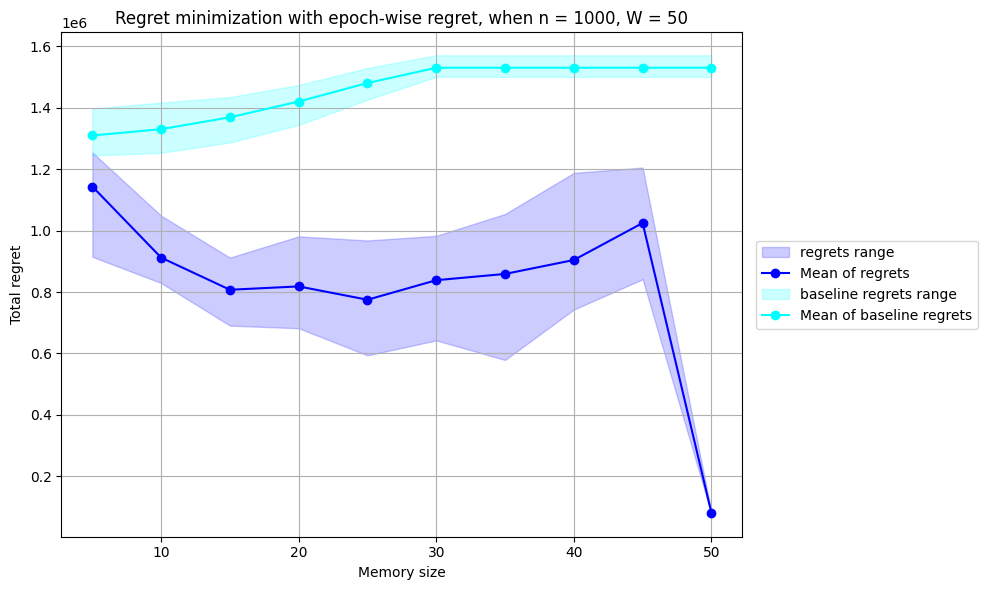}
  \label{fig:real-world-regret-n-1000-w-50}
\end{subfigure}

\caption{An illustration of the memory-quality trade-off in regret minimization for real-world data.}
\label{fig:real-world-regret-memory-quality-tradeoff}
\end{figure*}

From the figures, it can be observed that our algorithm significantly outperforms the baseline algorithm. 
The regrets for the synthetic dataset follow a decreasing trend when the memory increases. 
{Interestingly, the regrets slightly go up as the memory increases before the memory becomes $W$. We suspect this to be an artifact of the reservoir sampling process we relied on when the memory is lower than $W$.}

\subsection{Experiments for regret with the everlasting best arm} 
The experimental results for regret minimization with the everlasting best arm are shown as \Cref{fig:everlast-best-regret-n-500,fig:everlast-best-regret-n-1000,fig:everlast-best-regret-n-2000}. Here, we commit to any arm in the end if we do not have the best arm in the memory. With $10$ independent runs of the algorithm, we report both the mean regret and the range of the regrets.

\begin{figure*}[!htb]
\centering
\begin{subfigure}[t]{0.32\textwidth}
  \includegraphics[scale=0.22]{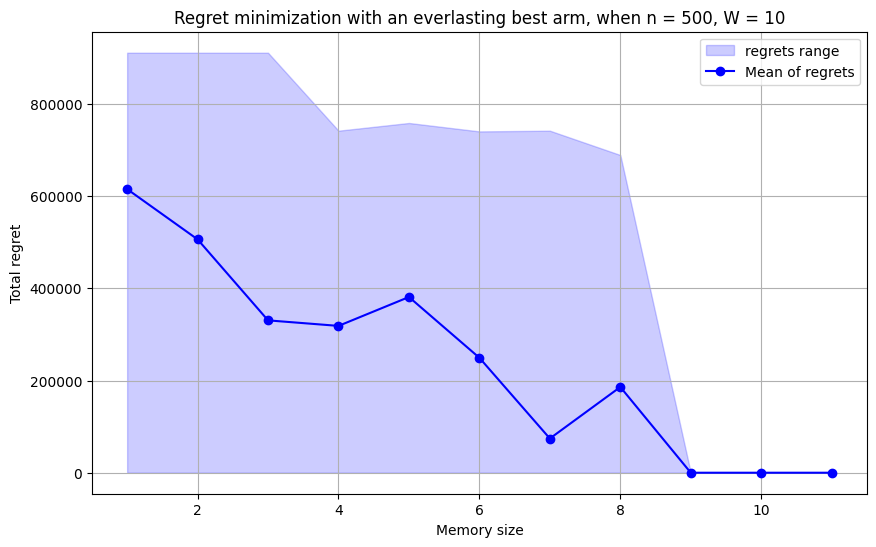}
  \label{fig:regret-everlasting-n-500-w-10}
\end{subfigure}
\begin{subfigure}[t]{0.32\textwidth}
  \includegraphics[scale=0.22]{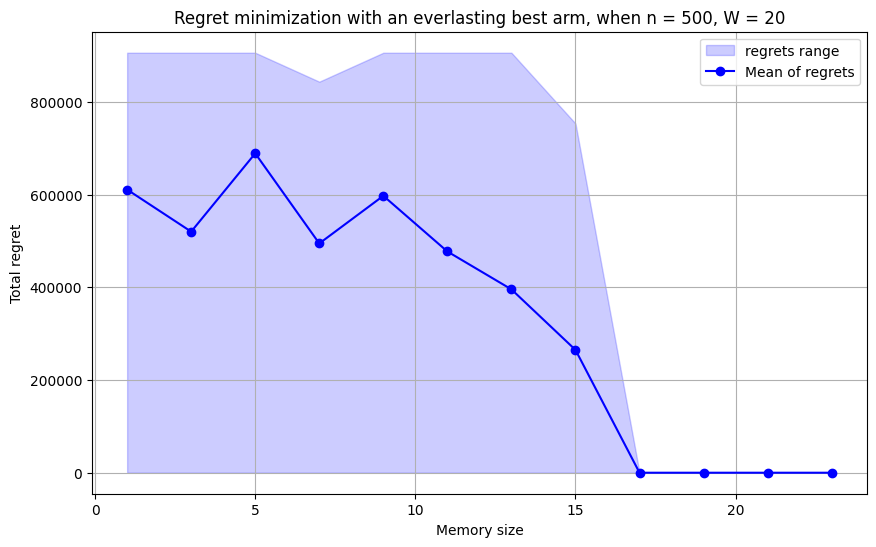}
  \label{fig:regret-everlasting-n-500-w-20}
\end{subfigure}
\begin{subfigure}[t]{0.32\textwidth}
  \includegraphics[scale=0.22]{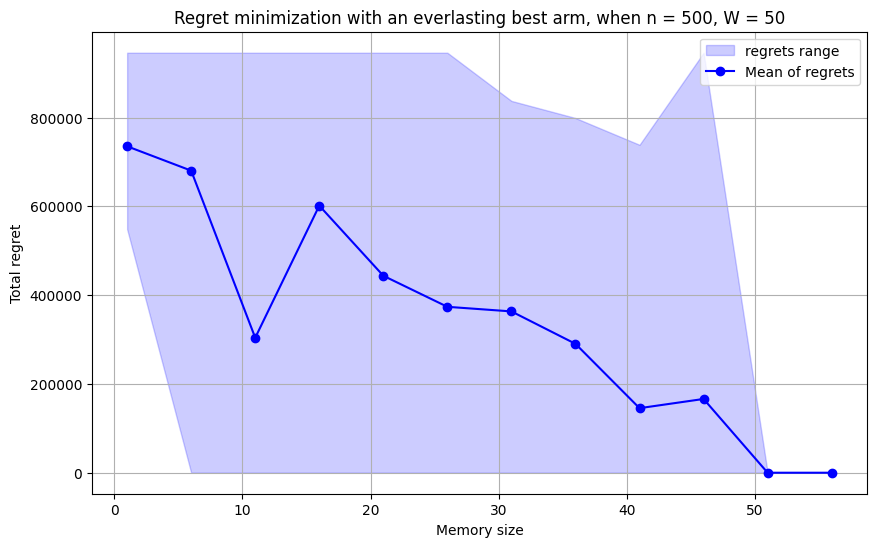}
  \label{fig:regret-everlasting-n-500-w-50}
\end{subfigure}
\caption{An illustration of the memory-regret trade-off for the everlasting best arm setting with $n=500$.}
\label{fig:everlast-best-regret-n-500}
\end{figure*}

\begin{figure*}[!htb]
\centering
\begin{subfigure}[t]{0.32\textwidth}
  \includegraphics[scale=0.22]{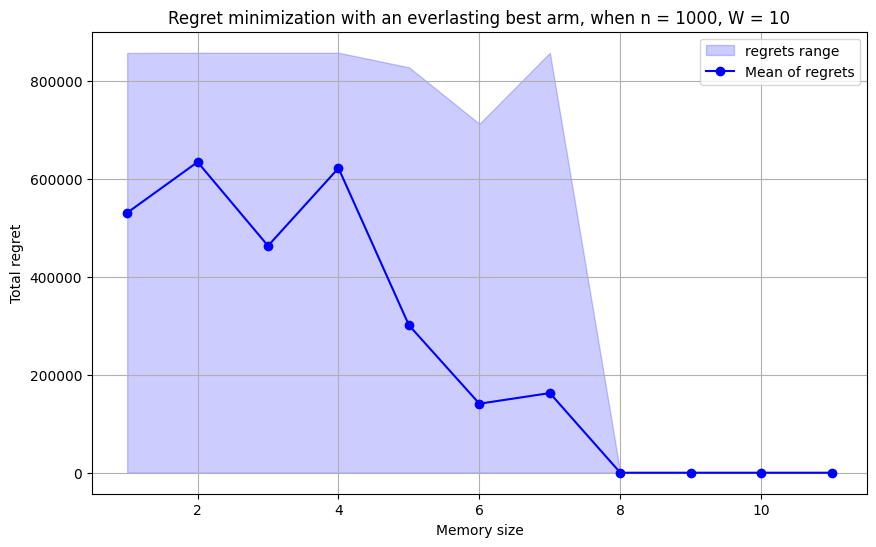}
  \label{fig:regret-everlasting-n-1000-w-10}
\end{subfigure}
\begin{subfigure}[t]{0.32\textwidth}
  \includegraphics[scale=0.22]{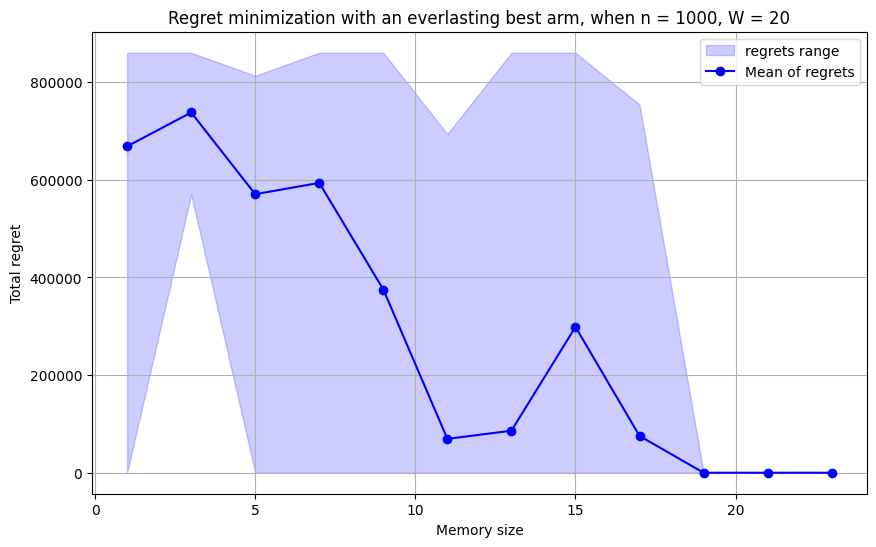}
  \label{fig:regret-everlasting-n-1000-w-20}
\end{subfigure}
\begin{subfigure}[t]{0.32\textwidth}
  \includegraphics[scale=0.22]{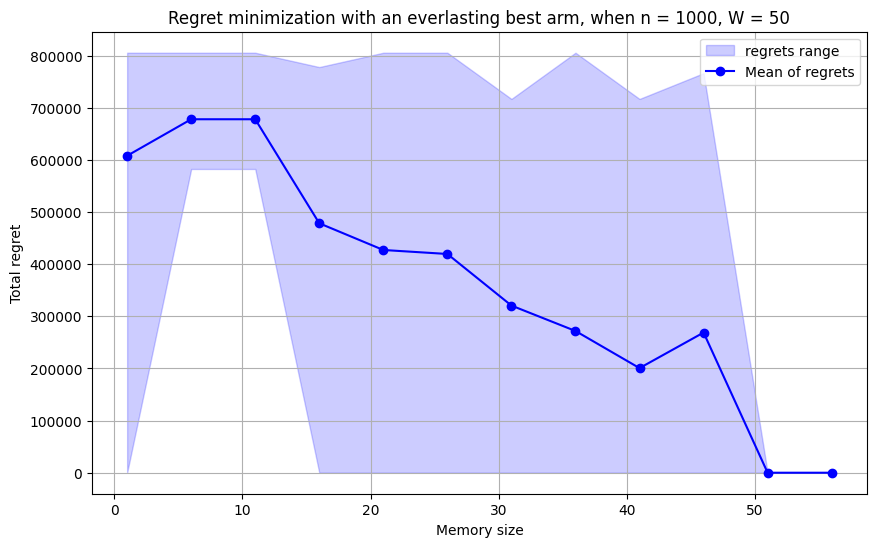}
  \label{fig:regret-everlasting-n-1000-w-50}
\end{subfigure}
\caption{An illustration of the memory-regret trade-off for the everlasting best arm setting with $n=1000$.}
\label{fig:everlast-best-regret-n-1000}
\end{figure*}

\begin{figure*}[!htb]
\centering
\begin{subfigure}[t]{0.32\textwidth}
  \includegraphics[scale=0.22]{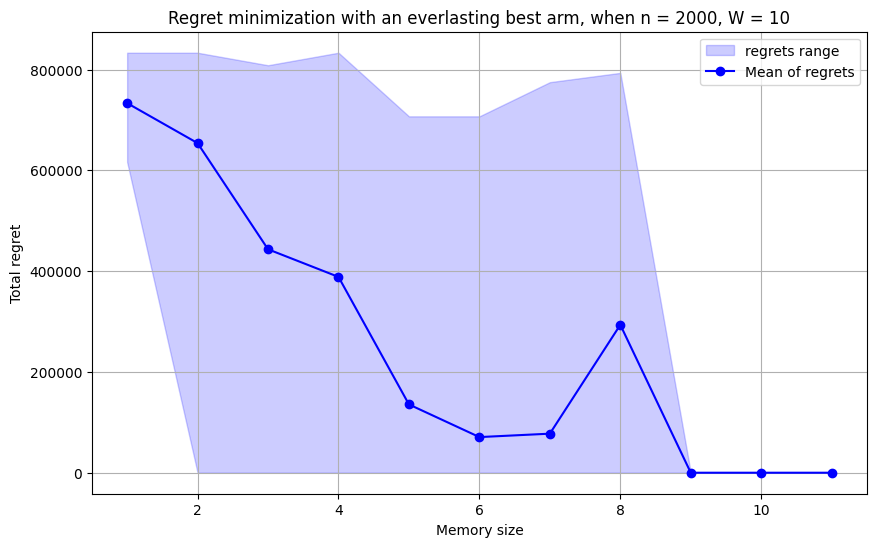}
  \label{fig:regret-everlasting-n-2000-w-10}
\end{subfigure}
\begin{subfigure}[t]{0.32\textwidth}
  \includegraphics[scale=0.22]{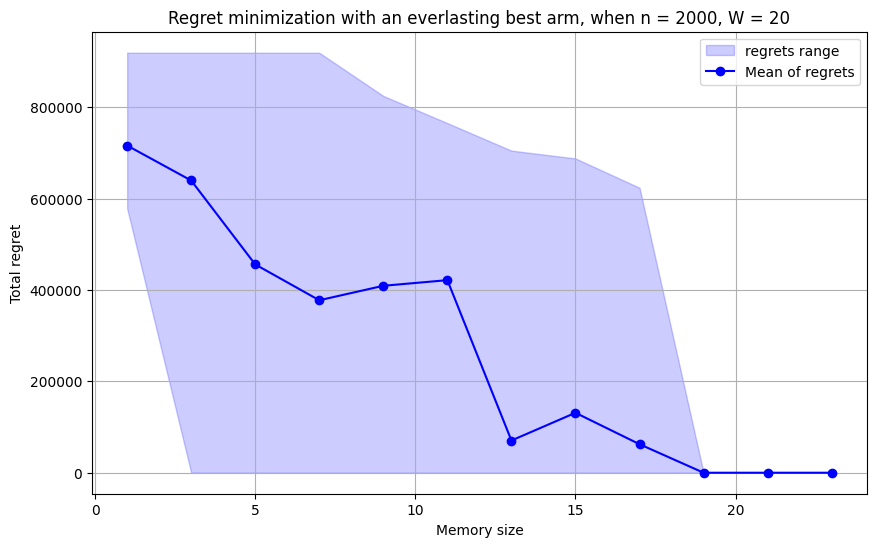}
  \label{fig:regret-everlasting-n-2000-w-20}
\end{subfigure}
\begin{subfigure}[t]{0.32\textwidth}
  \includegraphics[scale=0.22]{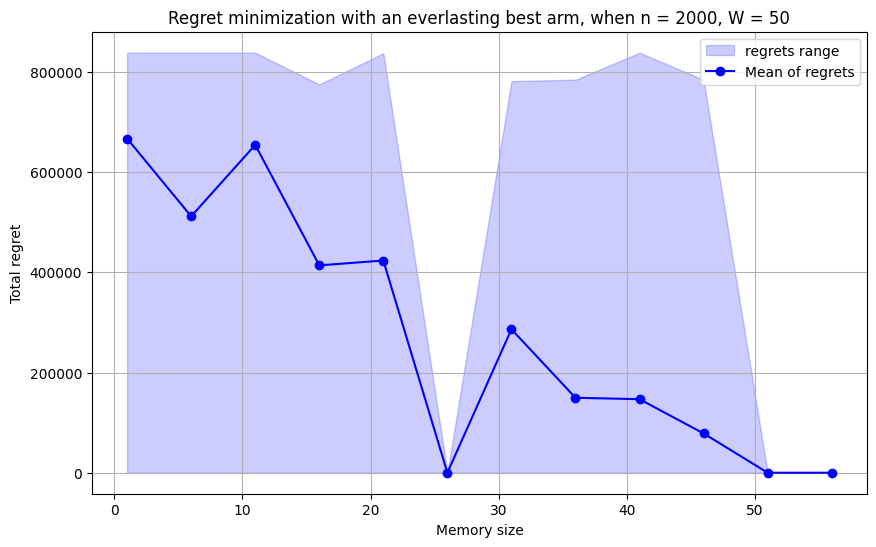}
  \label{fig:regret-everlasting-n-2000-w-50}
\end{subfigure}
\caption{An illustration of the memory-regret trade-off for the everlasting best arm setting with $n=2000$.}
\label{fig:everlast-best-regret-n-2000}
\end{figure*}

As we could observe in the figures, among the $10$ executions of the algorithm, although the algorithm might get ``lucky'' with $o(W)$ memory, the range of the regret before reaching the $W$ memory is always wide, and the regret could always be high. On the other hand, after we have $W$ memory, we could easily identify the best arm and achieve $0$ regret. The ranges observe a sharp drop at the $W$-memory point, which validates our theoretical results.

\FloatBarrier
\end{document}